\newcommand{\tr}{\textrm{tr}}
\newcommand{\half}{\frac{1}{2}}
\newtheorem{theorem}{Theorem}[section]
\newtheorem{corollary}{Corollary}[section]
\newtheorem{lemma}[theorem]{Lemma}
{}
\newtheorem{definition}{Definition}[section]
\newtheorem{example}{Example}
\newtheorem*{rep@theorem}{\rep@title}
\newcommand{\newreptheorem}[2]{%
\newenvironment{rep#1}[1]{%
 \def\rep@title{#2 \ref{##1}}%
 \begin{rep@theorem}}%
 {\end{rep@theorem}}}
\newcommand{\ex}{\mathop{\mathbb{E}}\limits}
\newenvironment{itemize*}%
{\begin{itemize}[leftmargin=*,topsep=0pt]%
		\setlength{\itemsep}{0pt}%
		\setlength{\parskip}{0pt}}%
	{\end{itemize}}
\newenvironment{enumerate*}%
{\begin{enumerate}[leftmargin=*,topsep=0pt]%
		\setlength{\itemsep}{0pt}%
		\setlength{\parskip}{0pt}}%
	{\end{enumerate}}
\def\eqref#1{equation~\ref{#1}}
\def\1{\bm{1}}
\DeclareMathAlphabet{\mathsfit}{\encodingdefault}{\sfdefault}{m}{sl}
\SetMathAlphabet{\mathsfit}{bold}{\encodingdefault}{\sfdefault}{bx}{n}
\def\gA{{\mathcal{A}}}
\def\gD{{\mathcal{D}}}
\def\gF{{\mathcal{F}}}
\def\gO{{\mathcal{O}}}
\def\gT{{\mathcal{T}}}
\def\gU{{\mathcal{U}}}
\def\gX{{\mathcal{X}}}
\newcommand{\E}{\mathbb{E}}
\newcommand{\R}{\mathbb{R}}
\DeclareMathOperator*{\argmax}{arg\,max}
\DeclareMathOperator{\sign}{sign}
\newcommand{\RR}{\mathbb{R}}
\newcommand{\ns}[1]{{\color{orange}NS: #1}}
\newcommand{\dipendra}[1]{\textcolor{blue}{[DM: #1]}}
\newcommand{\highlight}[1]{{\color{teal}#1}}
\newcommand{\Xorig}{\bar{\gX}}
\newcommand{\Xaug}{{\gX}}
\newcommand{\norm}{\circ}
\newcommand{\BE}{\text{Bayes-error}}
\newcommand{\A}{{A}}
\newcommand{\An}{{\A_{\norm}}}
\newcommand{\Ln}{{L_{\norm}}}
\newcommand{\Abar}{\bar{\A}}
\newcommand{\Abarn}{{\bar{\A}_{\norm}}}
\newcommand{\gstar}{{\bar{y}^{\star}}}
\newcommand{\gstaraug}{{y^{\star}}}
\newcommand{\ystar}{{\bar{y}^{\star}}}
\newcommand{\wstar}{{w^{\star}}}
\newcommand{\gstarn}{\gstar_{\norm}}
\newcommand{\fn}{f_{\norm}}
\newcommand{\Fn}{F_{\norm}}
\newcommand{\gn}{g_{\norm}}
\newcommand{\Phin}{\Phi_{\norm}}
\newcommand{\gDbar}{\bar{\gD}}
\newcommand{\Dbar}{\bar{D}}
\newcommand{\phin}{{\phi_{\norm}}}
\newcommand{\Dsim}{\gD_{\text{sim}}}
\newcommand{\Dneg}{\gD_{\text{neg}}}
\newcommand{\Daug}{\gD_{\Xaug}}
\newcommand{\Dorig}{\gD_{\Xorig}}
\newcommand{\dd}{d}
\newcommand{\DD}{D}
\newcommand{\diag}{\textrm{diag}}
\newcommand{\aug}[1]{#1_{\gA}}
\newcommand{\Lclf}{L_{\textrm{clf}}}
\newcommand{\Lreg}{L_{\text{reg}}}
\newcommand{\Lspec}{L_{\text{spec}}}
\newcommand{\Lcont}{L_{\text{cont}}}
\newcommand{\Lsimclr}{L_{\textrm{SimCLR}}}
\newcommand{\Lcons}{\Delta_{\gA}}
\newcommand{\vit}{{\tt ViT}}
\newcommand{\resnet}{{\tt ResNet}}
\newcommand{\resnete}{{\tt ResNet-18}}
\newcommand{\mlpmixer}{{\tt MLP-Mixer}}
\newcommand{\bow}{{\tt BoW}}
\newcommand{\gru}{{\tt GRU}}
\newcommand{\transformers}{{\tt Transformer}}
\newcommand\blfootnote[1]{%
  \begingroup
  \renewcommand\thefootnote{}\footnote{#1}%
  \addtocounter{footnote}{-1}%
  \endgroup
}
\begin{document}

\title{Understanding Contrastive Learning Requires \\ Incorporating Inductive Biases}
\date{}
\author{Nikunj Saunshi$^{1*}$ \qquad Jordan T. Ash$^2$ \qquad Surbhi Goel$^2$ \qquad Dipendra Misra$^2$ \qquad Cyril Zhang$^2$ \\ \vspace{-3mm}
Sanjeev Arora$^1$ \qquad Sham Kakade$^{2\,3}$ \qquad Akshay Krishnamurthy$^2$ \\ \vspace{5mm}
  \normalsize{$^1$Department of Computer Science, Princeton University\\$^2$Microsoft Research, New York City\\ $^3$Departments of  Computer Science \& Statistics, Harvard University}
}

\maketitle

\begin{abstract}

Contrastive learning is a popular form of self-supervised learning that encourages augmentations (views) of the same input to have more similar representations compared to augmentations of different inputs. Recent attempts to theoretically explain the success of contrastive learning on downstream classification tasks prove guarantees depending on properties of {\em augmentations} and the value of {\em contrastive loss} of representations. We demonstrate that such analyses, that ignore {\em inductive biases} of the function class and training algorithm, cannot adequately explain the success of contrastive learning, even {\em provably} leading to vacuous guarantees in some settings. Extensive experiments on image and text domains highlight the ubiquity of this problem -- different function classes and algorithms behave very differently on downstream tasks, despite having the same augmentations and contrastive losses. Theoretical analysis is presented for the class of linear representations, where incorporating inductive biases of the function class allows contrastive learning to work with less stringent conditions compared to prior analyses.

\end{abstract}

\blfootnote{\!\!\!$^*$Corresponding author <\texttt{nsaunshi@cs.princeton.edu}>. Work started as an intern at Microsoft Research NYC.}


\section{Introduction}


Recently, representation functions learned via contrastive learning have transformed machine learning.
Using unlabeled data, a representation function is learnt by generating simple augmentations of each datapoint and by enforcing, via a suitable loss function, that (1) augmentations of a single datapoint tend to be clustered (2) augmentations of different datapoints tend to be far apart.  Such representations give competitive classification performance ---via even a linear classifier--- on a host of downstream tasks, bringing us closer to the old dream of machine learners capable of generalization across different data distributions and tasks.

We lack a conceptual framework for understanding such wondrous phenomena --- which is unsurprising, since good quantitative understanding of generalization is lacking even for single task and single data distribution. However, deriving even partial conceptual understanding could help push the field forward, and researchers have begun to grapple with this task \citep{arora2019theoretical,tosh2020contrastive,haochen2021provable}. The current paper seeks to provide guidance for further development of this nascent theory\footnote{Our title is a clear allusion to  \citet{zhang2017understanding}, which highlighted a gap between deep learning phenomena and classical ML theory, motivating development of better theoretical understanding.} using simple experiments and  theoretical analysis. 
A common thread in these existing theories is the following components: (1) Quantifying how data augmentations {\em implicitly} encode downstream class labels. (2) Demonstrating how representations with small contrastive loss can uncover this implicit structure and do well on downstream tasks.

Recent works formalize (1) via assumptions that end up implying that the augmentation distributions of inputs from the same class  have significant overlap, but there is little overlap for inputs from different classes.
For example, distributions of augmentations of different dog images tend to be similar to each other, but their union has little overlap with distributions of augmentations of cat images.
\citet{arora2019theoretical} ---which predates the recent wave of methods---assume that points in the same class share the same augmentation distribution, and use this to show that the contrastive loss is a surrogate to the downstream performance. Since  methods like SimCLR \citep{chen2020simple} do not appear to satisfy such assumptions, recently \citet{haochen2021provable} gave a more refined analysis under milder assumptions, that require only some overlap in augmentation distributions, such that the resultant graph of connections due to overlaps within a class is dense.
Again, it can be shown that a low-dimensional representation that is near-optimal in the contrastive loss is guaranteed to linearly separate the downstream classes.

Note that properties of the class of representation functions --- VGG16, ResNet18 etc. --- or the training algorithm --- SGD, Adam etc. --- make no appearance in the above analyses;  but only properties of the {\em augmentation distributions} and {\em value of contrastive loss} of representations.
This is understandable since currently theory is unable to pinpoint why different real-life architectures differ in their capabilities, or to pinpoint the implicit bias of the training algorithms.
Nevertheless it raises interesting questions: {\em Is the contrastive loss indeed a good indicator of downstream performance?}
{\em Do augmentations overlap sufficiently enough in practice to explain the success of contrastive learning?} 
{\em Can contrastive learning succeed even when there is little to no overlap?} 
In a nutshell, the current paper suggests via experiments and simple theory that the answers are, respectively: {\em No, No, Yes.}
In particular, ignoring the architecture and the training algorithm can make the current theoretical analyses of contrastive learning vacuous. 
We present three key phenomena with regards to this:

\ifthenelse{\boolean{arxiv}}{
\begin{figure}[t!]
    \centering
    \includegraphics[width=0.4\linewidth]{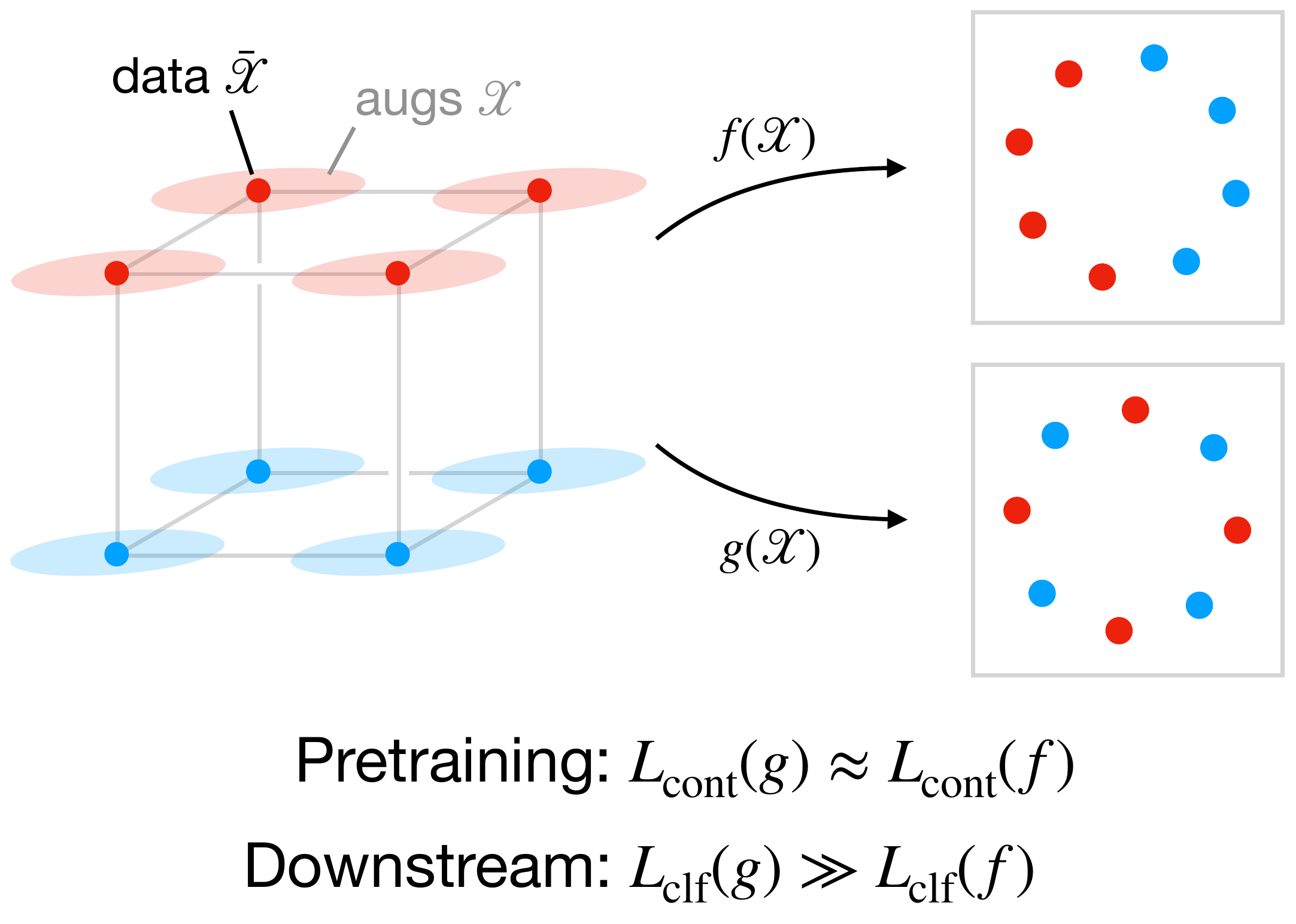}
    \caption{Cartoon of our theoretical example. The downstream labels (and thus classification loss $\Lclf$) are determined by a few relevant attributes (e.g. cat or dog?), and the augmentations perturb irrelevant attributes (e.g. grayscale, random crop). Without restricting the function class for the contrastive pretraining task, there exist perfect ($f$) and spurious ($g$) {\em augmentation-invariant} representations which both minimize the contrastive loss $\Lcont$. However, minimizing using a linear representation class is always guaranteed to succeed with these augmentations (\Cref{sec:hypercube}).}
    \label{fig:hypercube-thought-experiment}
\end{figure}
}{
\begin{figure}[t!]
    \centering
    \includegraphics[width=0.8\linewidth]{media/hypercube-figure.pdf}
    \caption{Cartoon of our theoretical example. The downstream labels (and thus classification loss $\Lclf$) are determined by a few relevant attributes (e.g. cat or dog?), and the augmentations perturb irrelevant attributes (e.g. grayscale, random crop). Without restricting the function class for the contrastive pretraining task, there exist perfect ($f$) and spurious ($g$) {\em augmentation-invariant} representations which both minimize the contrastive loss $\Lcont$. However, minimizing using a linear representation class is always guaranteed to succeed with these augmentations (\Cref{sec:hypercube}).}
    \label{fig:hypercube-thought-experiment}
\end{figure}
}

\begin{itemize}[leftmargin=*,noitemsep]
    \item {\bf Function class sensitivity.} Downstream performance of a representation depends not just on its contrastive loss, but it is also sensitive to the function class (architecture) and training procedure used to learn it.
    \item {\bf Brittleness of transfer.} Minimizing the contrastive loss to optimality can sometimes have a non-monotonic, deleterious effect on downstream performance, despite the augmentations being effective for some function classes.
    \item {\bf The disjoint augmentations regime.} When augmentation distributions for inputs do not overlap with each other, it can be shown that any function-class-agnostic analysis (including those from prior work) provably leads to vacuous guarantees.
    That said, non-overlapping augmentations can sometimes still be informative,
     and contrastive learning with appropriate function classes can succeed, a phenomenon that is not captured by existing theory.
\end{itemize}


\noindent\textbf{Organization.} We define the contrastive losses and downstream performance in \Cref{sec:preliminaries}, and summarize prior theoretical results and how they ignore inductive biases.
In \Cref{sec:hypercube} we describe a simple synthetic setting that elucidates all of the aforementioned phenomena.
A pictorial depiction in \Cref{fig:hypercube-thought-experiment} demonstrates the existence of bad contrastive solutions, despite the augmentations satisfying intuitive properties.
These ideas are grounded through theoretical results in \Cref{sec:theory}, which includes lower bounds for function class agnostic analyses and upper bounds that are sensitive to the function class of linear representations.
Finally we describe various experimental setups in \Cref{sec:experiments}.









\vspace{-1ex}\subsection{Related work}
\label{sec:related}


Contrastive learning has been very successful at solving downstream tasks by learning representations from similar pairs of data obtained using temporal information \citep{wang2015unsupervised,logeswaran2018efficient} or different views or augmentations of inputs \citep{dosovitskiy2014discriminative,hjelm2018learning,wu2018unsupervised,bachman2019learning,tian2019contrastive,chen2020simple,chen2021exploring,gao2021simcse}.
Given its empirical success, there has been significant interest in the theory of contrastive learning, from various perspectives.
Most relevant to us are learning theoretic analyses \citep{arora2019theoretical,tosh2020contrastive0,tosh2020contrastive,haochen2021provable,wang2022chaos} and their follow ups~\citep{nozawa2021understanding,ash2021investigating}.
These study the downstream linear classification performance of learned representation, by making assumptions about the data and augmentation distributions; we discuss these in more detail in \Cref{sec:prior_analyses}.

Contrastive learning has also been studied (1) from a mutual information maximization view \citep{oord2018representation,hjelm2018learning,bachman2019learning}; \citep{tschannen2019mutual} points out certain issues with this view, 
(2) using an information theoretic framework \citet{tsai2021selfsupervised}; fails to explain downstream success via simple linear classifiers, (3) through properties like alignment and uniformity on the sphere \citet{wang2020understanding}, (4) under certain latent variable data generative processes \citep{zimmermann2021contrastive,von2021selfsupervised}, and (5) through a causality perspective \citep{mitrovic2021representation}.
On the optimization front, \citep{wen2021toward} study the feature learning process of contrastive learning with gradient dynamics on a two layer network, under a sparse coding model. The theory of noise contrastive estimation \citep{gutmann2010noise} has been a useful motivation for negative sampling based objectives.
On the empirical side, there are studies on identifying useful augmentation properties \citep{tian2020what}. \looseness=-1


Non-contrastive methods, with no negative samples, \citep{chen2021exploring,grill2020bootstrap} rely on tricks like stop-grad to avoid representation collapse.
Dimension collapse of representations has also been studied \citep{jing2021understanding}.
Unlike these works, the brittleness of transfer we study is neither due to ill-designed objectives nor due to training degeneracies.
It is fundamental to data distributions and arises out of existence of spurious solutions.
A related idea of feature suppression \citep{chen2021intriguing} and shortcut solutions found by contrastive learning was recently studied in \citet{robinson2021can} in certain stylized settings, with a proposed fix through better augmentations strategies. 
We instead study the role of inductive bias of function classes in avoiding such shortcut solutions.
\citep{abnar2022exploring} analyze upstream to downstream transfer for supervised pre-training, complementing our experiments for unsupervised pre-training, whereas \citet{wu2020Understanding} studies negative transfer for multi-task learning.
Finally, there are theoretical works for other types of self-supervised learning \citep{bansal2021for}, including methods like context reconstruction \citep{lee2020predicting} and language model \citep{saunshi2021mathematical}, studying their benefits on downstream tasks.

\section{Preliminaries}
\label{sec:preliminaries}

Here we formalize the problem of learning useful representations via contrastive learning for downstream classification. 

\noindent\textbf{Notation.} We use $[n]$ for the set $\{1,\dots,n\}$. $\gU(S)$ denotes uniform distribution over a set $S$. For a vector $v\in\R^{n}$, we denote $v_{:i}\in\R^{i}$ and $v_{i:}\in\R^{n-i}$ to be the sub-vector of first $i\in[n]$ and last $i$ coordinates respectively.
For sets $P, Q$, we use $P^{Q}$ to denote the set of functions from $Q$ to $P$.

\noindent\textbf{Augmentations.}
We use $\Xorig$ to denote the set of all (unaugmented) samples and denote their marginal distribution as $\Dorig$.
$\Xaug$ denotes the set of all augmented data.
For an input $\bar{x} \in \Xorig$, we define the corresponding augmentation distribution over $\Xaug$ as $\gA(\cdot \mid \bar{x})$.
For instance, augmentations for an image $\bar{x}$ can correspond to applying a sequence of random transformations such as random cropping, Gaussian blur, and color jitter.
The distributions $\Dorig$ and $\gA$ together induce a marginal distribution $\Daug$ over augmentations.

\noindent\textbf{Contrastive self-supervised learning.}
\label{sec:contrastive_learning}
The goal is to learn a representation function $f: \Xaug \rightarrow \mathbb{R}^{\dd}$ that maps augmentations to $\dd$-dimensional vectors by encouraging representations of ``similar pairs'' of augmentations to be closer to each other, compared to representations of random pairs.
A common strategy to pick a similar pair $(x,x^{+})$ is to pick two augmentations of the same input.
Formally we define this distribution of similar pairs $\Dsim$ as follows
\begin{align*}
    (x,x^{+}) \sim \Dsim ~\equiv~ \bar{x} \sim \Dorig;~ x,x^{+}\sim_{\text{i.i.d.}}\gA(\cdot \mid \bar{x})
\end{align*}
The negative sampling distribution, denoted by $\Dneg$, is picked to be the augmentation marginal distribution $\Daug$.
There are several variants of the contrastive loss, a popular one being the \textit{SimCLR loss} \citep{chen2020simple}.
\ifthenelse{\boolean{arxiv}}{
    \begin{align}
        \Lsimclr(f) =
        &\ex_{\substack{(x,x^{+})\sim\Dsim, x^{-}_{1:n}\sim\Dneg^{n}}}\left[-\log\left(\frac{e^{f(x)^\top f(x^+)}}{e^{f(x)^\top f(x^+)} + \sum_{i=1}^n e^{f(x)^\top f(x^-_i)}}\right)\right]\label{eqn:simclr_loss}
    \end{align}
}{
    \begin{align}
        &\Lsimclr(f) =\label{eqn:simclr_loss}\\
        &\ex_{\substack{(x,x^{+})\sim\Dsim,\\ x^{-}_{1:n}\sim\Dneg^{n}}}\left[-\log\left(\frac{e^{f(x)^\top f(x^+)}}{e^{f(x)^\top f(x^+)} + \sum_{i=1}^n e^{f(x)^\top f(x^-_i)}}\right)\right]\nonumber
    \end{align}
}
Intuitively the contrastive loss aims to make $f(x)^{\top} f(x^{+})$ larger compared to $f(x)^{\top} f(x^{-}_{i})$.
Another variant proposed in \citet{haochen2021provable} is the spectral contrastive loss:
\ifthenelse{\boolean{arxiv}}{
    \begin{align}
        \Lspec(f) =
        \ex_{\substack{(x,x^{+})\sim\Dsim}}\left[-2 f(x)^{\top}f(x^{+})\right] + \ex_{\substack{x,x^{-}\sim\Dneg^{2}}}\left[ \left(f(x)^{\top}f(x^{-})\right)^{2}\right]\label{eqn:spectral_loss}
    \end{align}
}{
    \begin{align}
        &\Lspec(f)= \label{eqn:spectral_loss}\\
        &\ex_{\substack{(x,x^{+})\\\sim\Dsim}}\left[-2 f(x)^{\top}f(x^{+})\right] + \ex_{\substack{x,x^{-}\\\sim\Dneg^{2}}}\left[ \left(f(x)^{\top}f(x^{-})\right)^{2}\right]\nonumber
    \end{align}
}

We will use $\Lcont$ to refer to a generic contrastive loss, either $\Lsimclr$ or $\Lspec$ or something else.

\noindent\textbf{Downstream task.}
We assume these involve binary classification\footnote{We consider binary tasks mostly for simplicity. Extensions of our results (lower bounds for function class agnostic analyses and upper bound guarantees for linear representations) to more than two classes are not difficult.}.
If the ground-truth labeling function is $\ystar: \Xorig \rightarrow \{\pm 1\}$, the quality of
representation $\bar{f}:\Xorig\rightarrow\R^{d}$ is captured by how well it allows {\em linear classification}:
\begin{align}
    \Lclf(\bar{f}; \ystar)
    &= \inf_{w\in\R^{\dd}} \E_{\bar{x}} \left[\mathbbm{1}\left\{\ystar(\bar{x}) \left({\bar{f}(\bar{x})}^{\top} w\right) < 0\right\}\right]
    \label{eqn:clf_loss}
\end{align}
Since the representation function is trained to map augmentations to vectors, its behavior on unaugmented inputs can be undefined.
We evaluate downstream performance on original inputs $\Xorig$ by using the average augmentation representation $\aug{f}: \Xorig \rightarrow \R^{\dd}$, defined as:
\begin{align}
\label{eqn:aug_mean_rep}
    &\aug{f}(\bar{x})
    = \E_{x \sim \gA(\cdot | \bar{x})}[f(x)],~
    \Lclf(f; \ystar)
    \coloneqq \Lclf(\aug{f}; \ystar)
\end{align}
Experimentally such an average gives better performance than the standard un-averaged approach.

\noindent\textbf{Transfer Bounds.}
\label{sec:prior_analyses}
We introduce an abstraction of {\em transfer function} $\gT$ to capture prior analyses \citep{arora2019theoretical,tosh2020contrastive,haochen2021provable}. It translates performance on contrastive loss to performance on the downstream task as follows:   
\begin{align}
    \Lclf(f; \ystar)
    &\le \gT(\Gamma, \Lcont(f), \dd),~\text{where } \Gamma=(\Dorig, \gA, \ystar, \Lcont)\label{eqn:prior_bound_general}
\end{align}
These guarantees only depend on (1) problem dependent quantities like input marginals $\Dorig$, properties of augmentations $\gA$, downstream label $\ystar$, form of contrastive loss $\Lcont$, (2) contrastive loss $\Lcont(f)$ of the representation $f$ and (3) its dimensionality $d$.
Typically $\gT$ is monotone non-decreasing function of $\Lcont(f)$ and so the above bound justifies minimizing the contrastive loss.
A common property of augmentations and labels that transfer bounds assume is \textit{overlap} between augmentations of images from the same class.
For instance, \cite{arora2019theoretical} effectively assume full overlap, that is, all images from the same class have identical augmentation distribution, and these distributions are different for different classes. \cite{haochen2021provable} relax this requirement to a spectral quantity that depends on the ratio of overlap between augmentation distributions of the same class and those of different classes, leading to a bound $\Lclf(f) \le \alpha (\Lcont(f) - \min_{f^{\star}} \Lcont(f^{\star})) + \beta$, where $f^{\star}$ is a minimizer of the contrastive loss, and $\alpha,\beta$ quantify the overlap in augmentations.
These bounds place a premium on the value of contrastive loss of $f$, but are agnostic to any other properties of $f$, like the representation function class $\gF$ it belongs to or how it was trained.
We are interested in transfer bounds that also incorporate these effects.
A simple abstraction that incorporates the function class is
\begin{align}
    \Lclf(f; \ystar)
    &\le \gT(\Gamma, \Lcont(f), \gF),~\text{where } \Gamma=(\Dorig, \gA, \ystar, \Lcont)\label{eqn:fn_class_dep_bound}
\end{align}
A bound like this, unlike the one in \Cref{eqn:prior_bound_general}, reflects that the downstream performance at a particular value of contrastive loss depends also on the representation function class, which we also find to be true in many experiments.

\noindent\textbf{Not about generalization.}
The above bounds only deal with upstream and downstream {\em population losses}.
Thus the role of function class bias is not for guaranteeing good generalization properties, as in supervised learning.
It is more subtle, as will become evident in the following sections.

\section{Warm-up: contrastive learning on hyper\mancube s}
\label{sec:hypercube}

\ifthenelse{\boolean{arxiv}}{
\begin{figure}
    \centering
    \includegraphics[width=.5\linewidth]{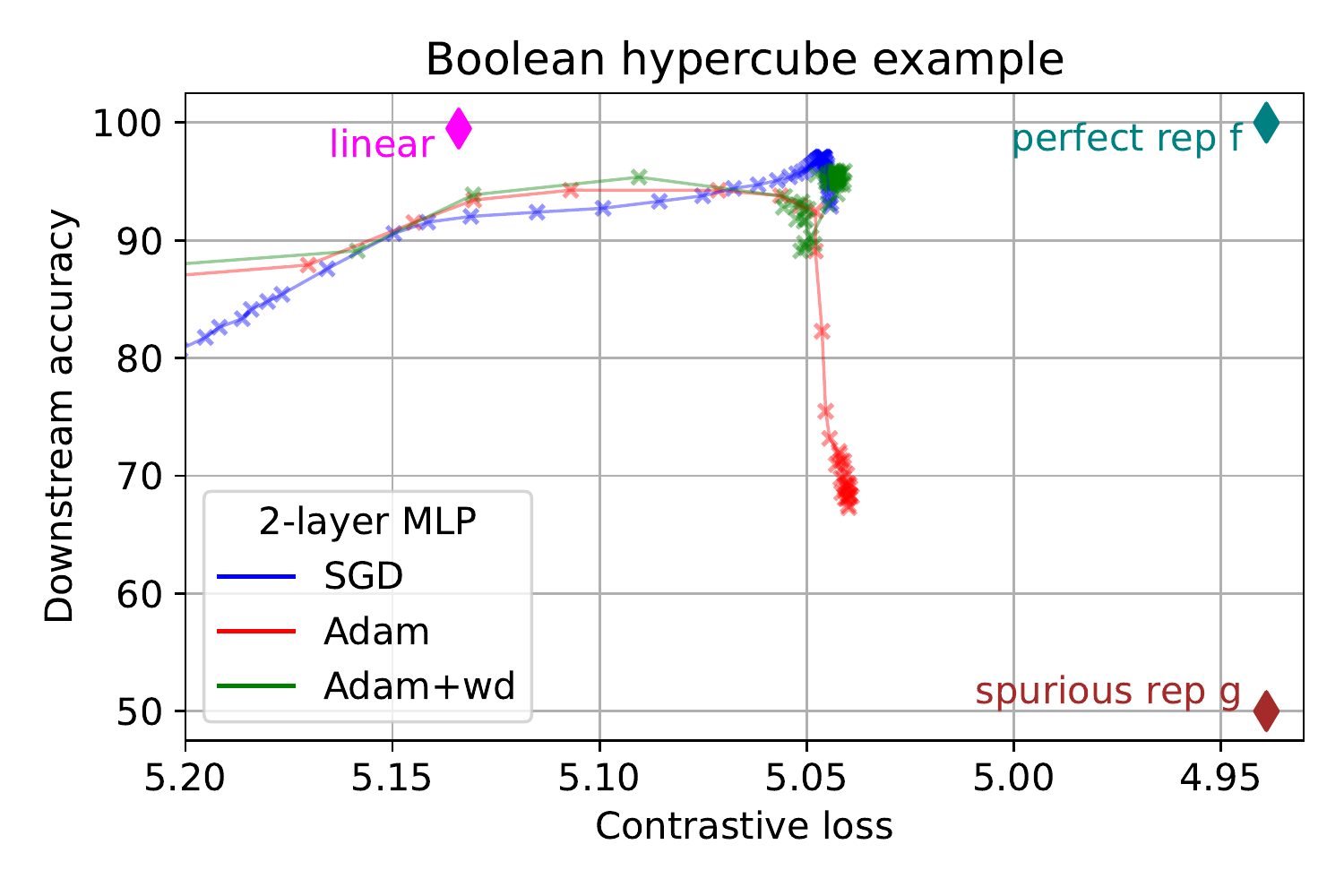}
    \caption{Contrastive loss $\rightarrow$ accuracy transfer plots for the Boolean hypercube example. There exist global minimizers of $\Lcont$ with perfect (top right) and worst possible (bottom right) downstream classification error $\Lclf$. The representations learned by two-layer neural networks are very sensitive to training configuration. With a smaller (linear) function class, the contrastive loss minimizer gives a nearly-perfect downstream classifier.}
    \label{fig:synth_scatter_plot_small}
\end{figure}
}{
\begin{figure}
    \centering
    \includegraphics[width=.9\linewidth]{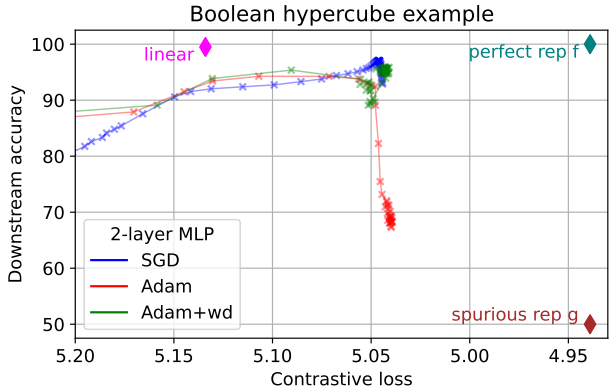}
    \caption{Contrastive loss $\rightarrow$ accuracy transfer plots for the Boolean hypercube example. There exist global minimizers of $\Lcont$ with perfect (top right) and worst possible (bottom right) downstream classification error $\Lclf$. The representations learned by two-layer neural networks are very sensitive to training configuration. With a smaller (linear) function class, the contrastive loss minimizer gives a nearly-perfect downstream classifier.}
    \label{fig:synth_scatter_plot_small}
\end{figure}
}

In this section, we present a simple but illustrative example that succinctly highlights brittleness of transfer and the importance of incorporating inductive biases in transfer bounds.
Since contrastive learning tries to make representations invariant to augmentation transformations, ideal augmentations are those that retain parts of the input that can predict the downstream label, but modify parts that are less important for the label.
We now describe a simple example on the Boolean hypercube that captures these intuitions.

\begin{example}
\label{eg:hypercube}
\normalfont
    The input set is $\Xorig=\{\pm1\}^{D}$, the augmentation set is $\Xaug=\R^{D}$.
    Downstream label $\ystar$ is linear in the first $k \ll D$ coordinates.
    \begin{align*}
        \ystar(\bar{x}) = \sign({\wstar}^{\top} \bar{x}_{:k}), ~\wstar\in\R^{k}
    \end{align*}
    Augmentation distribution $\gA(\cdot \mid \bar{x})$ for input $\bar{x}\in\Xorig$ randomly scales down the last $k$ coordinates while keeps the first $k$ coordinates unchanged\footnote{Can be generalized to only downscaling random subsets of $\bar{x}_{k:}$, analogous to downscaling different aspect of an image like grayness, sharpness}.
    Formally it is defined as
    \begin{align*}
        x \sim \gA(\cdot \mid \bar{x}) ~\equiv~ \tau \sim \gU((0,1]), ~x_{:k} = \bar{x}_{:k}, ~x_{k:} = \tau \bar{x}_{k:}
    \end{align*}
    where $\gU((0,1])$ is the uniform distribution over $(0,1]$.
\end{example}

We experimentally study this example using two function classes to minimize the contrastive objective: MLP, linear.
Results from \Cref{table:hypercube} and \Cref{fig:synth_scatter_plot_small} are summarized below.


\noindent\textbf{Transfer is sensitive to function class and algorithm:}
Firstly, we notice that despite having much worse (higher) contrastive loss compared to MLP, linear representation has significantly better downstream performance.
Secondly, \Cref{fig:synth_scatter_plot_small} suggests that even for the same MLP architecture, the training algorithm (Adam v/s SGD, weight decay or not) can drastically affect the downstream accuracy.


\noindent\textbf{Brittleness of transfer and disjoint augmentations.}
Although the augmentations in the example seem intuitively helpful, there exists a spurious representation that has much smaller contrastive loss than all architectures, but random guessing performance downstream. 
This, as we show in later sections, is a consequence of having disjoint augmentation distributions, since the original input can be recovered from an augmentation by simply performing $\bar{x} = \sign(x)$.
The existence of a bad minimizer of the contrastive loss also gives us a concrete case where contrastive learning can succeed with an appropriate function class, but the success cannot be explained by any function class agnostic analysis.
With this backdrop, we present our theoretical results next.



\begin{table}[t!]
\begin{center}
\caption{Contrastive loss and downstream accuracy of various representation classes and training procedures used to train on the hypercube example. There exist minimizers of the contrastive loss which transfer to perfect downstream classifiers, and ones which are no better than random guessing. Function class matters (MLP vs. linear vs. any representation), as does the training algorithm.}
	\label{table:hypercube}
\begin{tabular}{ |c|c|c| } 
 \hline
    Representation & Contrastive loss & Accuracy (\%) \\
 \hline
    $\exists f$ (perfect) & 4.939 & 100\\
    $\exists g$ (spurious) & 4.939 & 50\\
    MLP + Adam & 5.039 $\pm$ 0.001 & 74.1 $\pm$ 4.3\\
    MLP + Adam + wd & 5.040 $\pm$ 0.002 & 89.5 $\pm$ 4.9\\
    Linear & 5.134 $\pm$ 0.002 & 99.5 $\pm$ 0.1\\
 \hline
\end{tabular}
\end{center}
\end{table}

\section{Lower bounds and improved analysis}
\label{sec:theory}

In this section we discuss the role of overlap in augmentations and function class in theoretical guarantees.
We first show in the disjoint augmentation regime (augmentation distributions do not overlap), that {\em any} function class independent analyses will lead to vacuous bounds, which includes many previous analysis.
Delving deeper into the most recent results from \citet{haochen2021provable}, we discuss reasons for failure, even in approximately disjoint augmentations.
Finally we present guarantees for contrastive learning with a linear representation function class that is sensitive to the function class and allows for weaker assumptions on augmentations.
We instantiate this bound for the hypercube example, provably explaining the good performance of linear representations on disjoint augmentations. 

\subsection{Lower bound for disjoint augmentations}
\label{sec:lower_bound}

In this section, we prove that brittle transfer exists much more generically whenever the augmentation distributions for different inputs do not overlap, generalizing our observations from the hypercube example in the previous section.

%
\begin{definition}[Disjoint augmentations]
\label{defn:non_overlapping}
    We say the augmentation distributions are disjoint if for all distinct inputs $\bar{x}_1, \bar{x}_2\in\Xorig$, augmentation distributions  $\gA(\cdot \mid \bar{x}_1)$ and $\gA(\cdot \mid \bar{x}_2)$ have disjoint supports.
\end{definition}
Disjoint augmentations can be problematic because the contrastive loss only encourages separating individual instances, but does not encourage making classes linearly separable.
We formalize this argument in the next two lemmas by showing that any representation $f$ can be transformed --- by shuffling identities of examples --- to a new representation $\tilde{f}$ that has lower (or equal) contrastive loss but near-trivial downstream performance.
An immediate consequence is that any function class agnostic analysis (including all previous analyses) will necessarily leads to vacuous downstream guarantees. 
%
We establish this in two settings where the exact choice of contrastive objective is not critical; results hold for both $\Lsimclr$ and $\Lspec$ and we abbreviate these by $\Lcont$ below.
First we consider unconstrained representations.
\begin{lemma}
\label{lem:unnormalized_lb}
Let $|\Xorig|=N$ and $\dd = \gO\left(N/\log_2(N)\right)$.
Suppose the labeling function is balanced, i.e. $\sum_i y_i^\star = 0$, and let $\Daug$ be uniform over $\Xorig$. If the augmentation distribution is disjoint, then for any $f^\star: \Xaug \to \RR^{\dd}$ there exists a $\hat{f}: \Xaug \to \mathbb{R}^{\dd}$ such that:
\begin{align*}
    &\Lcont(\hat{f}) \leq \Lcont(f^\star), \textrm{ \& }
    \Lclf(\hat{f}) 
    \geq \frac{1}{2} - O\left(\sqrt{\frac{d\log(N)}{N}}\right).
\end{align*}
\end{lemma}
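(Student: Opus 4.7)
The plan is to exploit the disjointness of augmentation supports to construct, for each bijection $\pi : \Xorig \to \Xorig$, a ``permuted'' representation $\hat{f}^\pi$ satisfying $\Lcont(\hat{f}^\pi) \leq \Lcont(f^\star)$, whose induced downstream representations are merely a relabeling of $\{\aug{f^\star}(\bar{x})\}_{\bar{x} \in \Xorig}$. A probabilistic argument over uniform $\pi$ then produces one with downstream error $\geq 1/2 - O(\sqrt{d\log N / N})$.

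\textbf{Construction of $\hat{f}^\pi$.} By disjointness each $x \in \Xaug$ belongs to a unique cluster; call its origin $\bar{x}(x)$. For each $\bar{x}$ choose a measure-preserving bijection $T_{\bar{x}}$ from $\gA(\cdot|\bar{x})$ to $\gA(\cdot|\pi(\bar{x}))$ (standard for non-atomic Borel probability measures), and set $\hat{f}^\pi(x) := f^\star(T_{\bar{x}(x)}(x))$. Under $(x,x^+) \sim \Dsim$ the pair $(T_{\bar{x}}(x), T_{\bar{x}}(x^+))$ is i.i.d.\ from $\gA(\cdot|\pi(\bar{x}))$, and under $x \sim \Daug$ the pushforward $T_{\bar{x}(x)}(x)$ is again distributed as $\Daug$, because $\Dorig$ (assumed uniform) is $\pi$-invariant. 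Thus the full joint law of $(\hat{f}^\pi(x), \hat{f}^\pi(x^+), \hat{f}^\pi(x^-_{1:n}))$ matches that of $(f^\star(y), f^\star(y^+), f^\star(y^-_{1:n}))$ under data relabeled by $\pi$, giving $\Lcont(\hat{f}^\pi) = \Lcont(f^\star)$ for both $\Lspec$ and $\Lsimclr$.

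\textbf{Random-permutation lower bound.} Let $v_j := \aug{f^\star}(\bar{x}_j) \in \RR^d$; by the construction, $\aug{\hat{f}^\pi}(\bar{x}_i) = v_{\pi(\bar{x}_i)}$, so reindexing by $j = \pi(\bar{x}_i)$ and writing $\tilde{y}_j := y^\star_{\pi^{-1}(j)}$ gives
\[
\Lclf(\hat{f}^\pi) \;=\; \inf_{w \in \RR^d} \frac{1}{N}\sum_{j=1}^N \mathbbm{1}\{\tilde{y}_j\, (v_j^\top w) < 0\}.
\]
For $\pi$ uniform, $\tilde{y}$ is a uniformly random balanced $\pm 1$ labeling of the fixed points $\{v_j\}$. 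By Sauer--Shelah, linear classifiers realize at most $(eN/(d+1))^{d+1}$ distinct dichotomies of $N$ points in $\RR^d$. For any fixed dichotomy $s \in \{\pm 1\}^N$, the agreement fraction $\frac{1}{N}\sum_j \mathbbm{1}\{s_j = \tilde{y}_j\}$ is a hypergeometric mean (sampling without replacement from a balanced urn), so a Hoeffding--Serfling bound yields $\PP_\pi[\,\text{agreement} > \tfrac{1}{2}+\epsilon] \leq 2 e^{-2N\epsilon^2}$. A union bound over dichotomies shows that for $\epsilon = C\sqrt{d\log(N/d)/N}$ with absolute constant $C$, with positive probability no linear classifier beats $1/2 - \epsilon$; hence some $\pi$ realizes this bound. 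Under $d = O(N/\log N)$ this is $\geq 1/2 - O(\sqrt{d\log N/N})$, as stated.

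\textbf{Main obstacle.} The delicate step is $\Lcont$-invariance for $\Lsimclr$: its log-sum-exp form precludes the clean first/second moment comparisons available for $\Lspec$ (where one can directly show that collapsing within-cluster variance only decreases the loss via $\tr((A+B)^2) \geq \tr(A^2)$ for PSD $A, B$). The pushforward circumvents this by transporting the \emph{entire} joint distribution of positive pairs and negatives, which crucially relies on $\pi$-invariance of $\Dorig$. A minor measure-theoretic caveat is the existence of $T_{\bar{x}}$; for non-atomic augmentations this is the standard isomorphism of standard Borel probability spaces, and for finitely supported augmentation kernels of matching combinatorial structure (as in \Cref{eg:hypercube}) an explicit bijection of supports suffices.
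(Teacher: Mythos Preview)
Your proof is correct but differs from the paper's in two places.

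For the contrastive-loss step, the paper's argument in the unnormalized case first applies Jensen's inequality to collapse $f^\star$ to the mean-embedding representation $\hat{f}$ that sends every augmentation of $\bar{x}$ to $v_{\bar{x}} := \aug{f^\star}(\bar{x})$, yielding $\Lcont(\hat{f}) \le \Lcont(f^\star)$; it then observes that permuting the finite set $\{v_i\}$ leaves the loss exactly invariant because the objective is a symmetric sum over index pairs. Your route skips the collapse and instead transports the entire augmentation distribution via a measure-preserving $T_{\bar{x}}$, obtaining equality $\Lcont(\hat{f}^\pi) = \Lcont(f^\star)$ directly. This is precisely the mechanism the paper uses for the \emph{normalized} case (\Cref{lem:normalized_lb}), where convexity is unavailable because the mean need not lie in the constraint set. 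The trade-off: your construction needs an isomorphism between each pair of augmentation measures, which you rightly flag as a caveat; the paper's convexity route for \Cref{lem:unnormalized_lb} works without any such structural assumption on $\gA$.

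For the downstream lower bound, the paper uses a purely deterministic counting argument: it compares $|\{\text{balanced labelings}\}| \ge 2^{N-\log_2(eN/2)}$ against the number of labelings that are $\tau$-close to some linear dichotomy, bounded via Sauer's lemma and binary-entropy Hamming-ball estimates, and solves for the largest $\tau$ keeping the latter smaller. Your probabilistic argument (uniform random $\pi$, hypergeometric concentration for the agreement fraction, union bound over the Sauer--Shelah dichotomy count) reaches the same $\tfrac{1}{2} - O(\sqrt{d\log N/N})$ and is arguably the more standard and transparent route.
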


Since it is common to use normalized representations in practice (e.g., to have Euclidean norm 1), we also establish a similar result for this case.
\begin{lemma}
\label{lem:normalized_lb}
In the setup of~\cref{lem:unnormalized_lb}, suppose further that representations are constrained (to any given set) and that the augmentation distributions satisfy the following: There exists a fixed source of randomness $W$ and a deterministic map $T: (\bar{x},w) \mapsto x$ that is invertible in $w$ for any $\bar{x}$ such that $x \sim \mathcal{A}(\cdot \mid \bar{x}) \equiv w \sim W, x = T(\bar{x},w)$. 
Then the conclusion of~\cref{lem:unnormalized_lb} holds. 
\end{lemma}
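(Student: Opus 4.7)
The plan is to adapt the shuffling construction of \Cref{lem:unnormalized_lb} to the constrained setting. Directly redefining $f^\star(x)$ to arbitrary new values is no longer an option, since the replacement must lie in the prescribed constraint set; the reparameterization hypothesis is precisely what removes this issue, because it lets us shuffle at the level of \emph{inputs} rather than outputs, so that $\hat{f}$ automatically inherits every constraint obeyed by $f^\star$.

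Let $\pi:\Xorig\to\Xorig$ be a permutation, to be chosen later. For any $x\in\Xaug$, disjointness of augmentation supports yields a unique $\bar{x}(x)\in\Xorig$ with $x$ in the support of $\gA(\cdot\mid\bar{x}(x))$, and invertibility of $T(\bar{x},\cdot)$ then yields a unique $w(x)$ with $x=T(\bar{x}(x),w(x))$. I define
\begin{align*}
\hat{f}(x) \;=\; f^\star\!\bigl(T(\pi(\bar{x}(x)),\,w(x))\bigr).
\end{align*}
The image of $\hat{f}$ is contained in that of $f^\star$, so $\hat{f}$ obeys every constraint obeyed by $f^\star$.

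For the contrastive loss, a sample $(x,x^+)\sim\Dsim$ takes the form $x=T(\bar{x},w)$, $x^+=T(\bar{x},w^+)$ for $\bar{x}$ uniform on $\Xorig$ and $w,w^+$ i.i.d.\ from $W$, so $(\hat{f}(x),\hat{f}(x^+))=(f^\star(T(\pi(\bar{x}),w)),\,f^\star(T(\pi(\bar{x}),w^+)))$. Since $\pi$ preserves the uniform distribution on $\Xorig$, the triple $(\pi(\bar{x}),w,w^+)$ has the same joint distribution as $(\bar{x},w,w^+)$; the identical computation for negatives drawn from $\Daug$ then gives $\Lcont(\hat{f})=\Lcont(f^\star)$. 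Averaging over $w$ also collapses the mean-augmented representation to
\begin{align*}
\aug{\hat{f}}(\bar{x}) \;=\; \ex_{w\sim W}\bigl[f^\star(T(\pi(\bar{x}),w))\bigr] \;=\; \aug{f^\star}(\pi(\bar{x})),
\end{align*}
so the downstream feature vectors are simply a permutation of $\{\aug{f^\star}(\bar{x}_j)\}_{j=1}^N$ while the labels $y^\star(\bar{x}_j)$ stay fixed.

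It remains to pick $\pi$ so that every linear classifier mislabels roughly half the permuted points. By Sauer--Shelah, linear classifiers in $\R^d$ realize at most $(eN/d)^d$ sign patterns on $N$ fixed points; for each pattern, a hypergeometric/Hoeffding tail estimate bounds the probability (over a uniformly random permutation of the balanced label vector) that the agreement with a fixed sign pattern exceeds $1/2+\epsilon$ by $\exp(-\Omega(N\epsilon^2))$. A union bound gives positive probability of classification error at least $1/2-O(\sqrt{d\log(N)/N})$ simultaneously for every linear classifier, and any such $\pi$ yields both conclusions. The most delicate step I anticipate is verifying the distributional equalities carefully---confirming that $\bar{x}(\cdot)$ and $w(\cdot)$ are well-defined measurable maps and that the joint laws for similar and negative pairs genuinely coincide---rather than any step of the combinatorial tail bound, which is essentially inherited from \Cref{lem:unnormalized_lb}.
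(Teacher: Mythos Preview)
Your proposal is correct and follows essentially the same approach as the paper: define $\hat{f}(x)=f^\star(T(\pi(\bar{x}(x)),w(x)))$ using the reparameterization, argue that permuting $\bar{x}$ preserves the joint law of all positives and negatives (hence $\Lcont(\hat{f})=\Lcont(f^\star)$), observe that mean embeddings get permuted, and invoke the combinatorial argument from \Cref{lem:unnormalized_lb}. The only cosmetic difference is that the paper's Part~3 uses a direct cardinality comparison via entropy bounds on Hamming balls, whereas you use a probabilistic union bound over Sauer--Shelah sign patterns with a hypergeometric tail; both are standard routes to the same $O(\sqrt{d\log(N)/N})$ conclusion.
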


We prove both statements jointly in \cref{sec:lower_bound_proof}.
Both lemmas show that when the representation dimension is small relative to the size of the input space (as is typical) and the augmentations are disjoint, there exists a \emph{global minimizer} of the contrastive loss with vacuous transfer to downstream.
The extra assumption in \cref{lem:normalized_lb} is that the augmentation generation protocol uses a common source of randomness, which is actually satisfied in many practical scenarios.
For instance, the same sequence of transformations like random cropping, color jittering etc. are applied to all images to generate augmentations.
The other assumptions, e.g., that the labeling function $\ystar$ is balanced and that $\Daug$ is uniform, are technical in nature and can be potentially relaxed. 

Note that Proposition 1 in \citet{robinson2021can} discusses a similar lower bound when augmentations are disjoint, arguing that contrastive learning can find ``shortcut solutions'' that can lead to feature suppression.
While the motivation is similar to ours, those results are shown specifically for contrastive loss with normalized representations, in the regime of large number of negative samples and with a specific uniform over sphere assumption on latent variables generating the data.
The above results are shown in much more general settings.

A corollary of these results is that any transfer learning bound that only depends on the value of the contrastive loss cannot be meaningful in the disjoint augmentations setting. 
\begin{corollary}
\label{cor:bias_agnostic_vacuous}
    In the setup of \Cref{lem:unnormalized_lb} or \Cref{lem:normalized_lb}, consider a transfer function $\gT$ bounding the downstream performance as $\Lclf(f; \ystar) \leq \gT(\Gamma,\Lcont(f),\dd)$ as in \Cref{eqn:prior_bound_general}, where $\Gamma = (\Dorig, \gA, \ystar, \Lcont)$ are problem dependent but function class independent quantities.
    Suppose $\gT$ is monotonic in its second argument, then for all $f: \Xaug \rightarrow \R^{\dd}$:
    \begin{align*}
       \gT\left(\Gamma, \Lcont(f),\dd\right) \ge \nicefrac{1}{2} - \tilde{\gO}\left(\sqrt{\nicefrac{d}{|\Xorig|}}\right)
    \end{align*}
\end{corollary}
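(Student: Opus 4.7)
The plan is to derive this corollary as an immediate chaining consequence of \Cref{lem:unnormalized_lb} (or \Cref{lem:normalized_lb}), using only the monotonicity of $\gT$ together with the fact that the parameter tuple $\Gamma$ depends solely on problem-level data and not on the representation itself.

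First I would fix an arbitrary representation $f:\Xaug\to\R^{\dd}$. Applying the relevant lower bound lemma in the stated setup produces a companion representation $\hat{f}$ satisfying $\Lcont(\hat{f}) \le \Lcont(f)$ and $\Lclf(\hat{f}) \ge \tfrac{1}{2} - \tilde{\gO}(\sqrt{d/|\Xorig|})$. Because the tuple $\Gamma = (\Dorig, \gA, \ystar, \Lcont)$ involves no properties of the representation itself, the hypothesized transfer bound $\Lclf(\cdot;\ystar) \le \gT(\Gamma,\Lcont(\cdot),\dd)$ applies equally well to $\hat{f}$, yielding $\Lclf(\hat{f};\ystar) \le \gT(\Gamma, \Lcont(\hat{f}), \dd)$.

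Next I would invoke monotonicity of $\gT$ in its second argument, together with $\Lcont(\hat{f}) \le \Lcont(f)$, to obtain $\gT(\Gamma, \Lcont(\hat{f}), \dd) \le \gT(\Gamma, \Lcont(f), \dd)$. Chaining these inequalities gives
\[
    \gT(\Gamma, \Lcont(f), \dd)
    \;\ge\; \gT(\Gamma, \Lcont(\hat{f}), \dd)
    \;\ge\; \Lclf(\hat{f};\ystar)
    \;\ge\; \tfrac{1}{2} - \tilde{\gO}\!\left(\sqrt{\tfrac{d}{|\Xorig|}}\right),
\]
which is exactly the claimed bound.

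There is no real mathematical obstacle here; the corollary is essentially a repackaging of the lemmas, and I would expect the writeup to be short. The only point worth stating carefully is \emph{why} a single function $\gT(\Gamma,\cdot,\dd)$ governs both $f$ and $\hat{f}$ simultaneously, namely that $\Gamma$ encodes no function-class or representation-specific information, so swapping $f$ for the adversarially constructed $\hat{f}$ does not alter $\Gamma$. I would also emphasize that monotonicity is used in exactly the direction needed: the construction only guarantees $\Lcont(\hat{f}) \le \Lcont(f)$, so one must be able to upper bound $\gT$ at the smaller argument by $\gT$ at the larger argument to transport the downstream lower bound on $\Lclf(\hat{f})$ back to a lower bound on $\gT$ evaluated at $\Lcont(f)$.
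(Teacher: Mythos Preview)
Your argument is correct and matches exactly the reasoning the paper intends: the corollary is stated immediately after \Cref{lem:unnormalized_lb,lem:normalized_lb} as a direct consequence, with no separate proof given, and your chaining via $\hat{f}$ together with monotonicity of $\gT$ is precisely the intended derivation.
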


%
\vspace{-0.2in}
\paragraph{Takeaways.}
The above lower bounds suggest that previous analyses for contrastive learning are vacuous in the disjoint augmentation setting, due to existence of bad minimizers of the contrastive loss.
The brittleness of transfer for disjoint augmentations is also observable in practice, as in the first row of \Cref{table:hypercube} for the hypercube example.
Vision and NLP experiments in \Cref{sec:experiments} also demonstrate this phenomenon, for more expressive function classes.

\vspace{-1.5ex}\subsection{Prior theoretical results and failure modes}
\label{sec:discussion}

We briefly discuss the results from \citet{haochen2021provable} and delve deeper into how their function class agnostic nature leads to poor guarantees even for {\em approximately} disjoint augmentations.
Their analysis considers the spectral loss $\Lspec(f)$ from \Cref{eqn:spectral_loss}.
A key component of their analysis is an {\em augmentation graph} constructed using $\gA$, whose spectral properties characterize how much overlap there is in augmentations.
This is a weighted graph on augmentations $\Xaug$ with adjacency matrix $\A\in\R^{\Xaug \times \Xaug}$ with entries $\A[x,x'] = \Dsim(x, x')$, i.e. similar augmentations have edges. 
The normalized adjacency matrix, a central object in spectral graph theory, is defined as $\An\in\R^{\Xaug \times \Xaug}$ with entries $\An[x, x'] = \frac{\Dsim(x,x')}{\sqrt{\Daug(x) \Daug(x')}}$.

Canonical results in spectral graph theory connect the eigenvalues $\lambda_{1} \le \dots \le \lambda_{|\Xaug|}$ of the normalized Laplacian $\Ln = I - \An$ to density of edges in the graph: denser graphs have larger eigenvalues.
For representation dimension $\dd$, \citet{haochen2021provable} roughly make two key assumptions: (1) any partition of the graph into $\gO(\dd)$ partitions is dense i.e. $\lambda_{\dd+1}$ is high, (2) the partition of downstream classes is sparse.
The condition (2) is the same as saying augmentations of different classes do not overlap much.
Under these assumptions, they show the following transfer bound:
\begin{theorem}[Theorem 4.2 from \citet{haochen2021provable}]
\label{thm:haochen_result}
    If $\lambda_{1} \le \dots \le \lambda_{|\Xaug|}$ are the eigenvalues of the normalized Laplacian $\Ln = I - \An$ for the augmentation distribution $\gA$, and if the augmentations can predict the original input labels with probability $1-\alpha$, then for any $\dd'\in[\dd]$ and representation $f$ we have
    \begin{align*}
        \Lclf(f; \ystar) \lesssim c_1 \frac{\alpha}{\lambda_{\dd'+1}} + c_2 \frac{\left(\Lspec(f) - \inf_{f^{\star}}\Lspec(f^{\star})\right) \dd'}{(\lambda_{\dd+1} - \lambda_{\dd'})^{2}}
    \end{align*}
    where $\Lspec(f) - \inf_{f^{\star}}\Lspec(f^{\star})$ is the sub-optimality of $f$.
\end{theorem}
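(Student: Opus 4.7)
The key observation is that the spectral contrastive loss is, up to an additive constant, a matrix factorization objective for $\An$. My first step is to make this precise: define the matrix $F \in \R^{|\Xaug|\times \dd}$ with rows $F_x = \sqrt{\Daug(x)}\, f(x)$. Expanding the two terms in \Cref{eqn:spectral_loss} yields
\begin{align*}
\Lspec(f) \;=\; -2\,\Tr(F^{\top} \An F) + \|FF^{\top}\|_{F}^{2} \;=\; \|FF^{\top} - \An\|_{F}^{2} - \|\An\|_{F}^{2}.
\end{align*}
Hence by Eckart--Young the infimum of $\Lspec$ over rank-$\dd$ symmetric factorizations is attained at the top-$\dd$ eigenpairs of $\An$ (equivalently, the bottom-$\dd$ eigenpairs of $\Ln$), and the sub-optimality $\Lspec(f)-\inf_{f^\star}\Lspec(f^\star)$ equals the squared Frobenius distance from $FF^{\top}$ to that best rank-$\dd$ approximation.

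Next I would extend the downstream label to augmentations. Using the assumption that augmentations predict the original label with probability $1-\alpha$, let $y:\Xaug\to\{\pm 1\}$ be the natural extension and form the weighted indicator $u\in\R^{|\Xaug|}$ with $u_{x}=\sqrt{\Daug(x)}\,y(x)$. Decompose $u = u_{\le \dd'} + u_{>\dd'}$ in the eigenbasis of $\Ln$, where $u_{\le \dd'}$ lives in the span of the bottom-$\dd'$ eigenvectors of $\Ln$ (top-$\dd'$ of $\An$). A Rayleigh-quotient / Dirichlet energy computation, combined with the fact that the ``wrong-label'' mass across the augmentation graph is at most $O(\alpha)$, gives
\begin{align*}
\|u_{>\dd'}\|_{2}^{2} \;\lesssim\; \frac{\alpha}{\lambda_{\dd'+1}},
\end{align*}
since every coordinate of $u_{>\dd'}$ in the eigenbasis is weighted by an eigenvalue $\ge \lambda_{\dd'+1}$ of $\Ln$, and the numerator $u^{\top}\Ln u$ is controlled by $\alpha$.

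The third step constructs the linear classifier. Because $u_{\le \dd'}$ lies in the top-$\dd'$ eigenspace of $\An$, which sits strictly inside the top-$\dd$ eigenspace (using the gap $\lambda_{\dd+1}>\lambda_{\dd'}$), I want to express $u_{\le \dd'}$ as $Fw$ for some $w\in\R^{\dd}$. At the exact minimizer $F^{\star}$ this is immediate, since $F^{\star}(F^{\star})^{\top}$ equals the spectral projector onto the top-$\dd$ eigenspace weighted by its eigenvalues. For a sub-optimal $f$, a Davis--Kahan / $\sin\Theta$ perturbation argument, applied to $FF^{\top}$ versus the best rank-$\dd$ approximation of $\An$ with eigen-gap $\lambda_{\dd+1}-\lambda_{\dd'}$, allows one to choose $w$ with
\begin{align*}
\|Fw - u_{\le \dd'}\|_{2}^{2} \;\lesssim\; \frac{\dd'\bigl(\Lspec(f)-\inf_{f^\star}\Lspec(f^\star)\bigr)}{(\lambda_{\dd+1}-\lambda_{\dd'})^{2}}.
\end{align*}
The factor $\dd'$ appears because we need the perturbation bound to hold on the entire top-$\dd'$ subspace, not just a single direction; the squared gap is the standard Davis--Kahan denominator in Frobenius norm.

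Finally I would combine the two error terms. The $\ell_{2}$ error of predicting $y$ by $\langle f(\cdot),w\rangle$ satisfies $\|Fw - u\|_{2} \le \|Fw - u_{\le \dd'}\|_{2} + \|u_{>\dd'}\|_{2}$; converting this weighted $\ell_{2}$ error into a $0$-$1$ classification error via a Markov-style argument (and then reducing to the original inputs through the averaged representation $\aug{f}$) yields the stated bound, with constants $c_{1},c_{2}$ absorbing the Markov and normalization factors. The main obstacle I anticipate is the Davis--Kahan step: translating a Frobenius-norm sub-optimality of $FF^{\top}$ into an approximation guarantee for $u_{\le \dd'}$ specifically by a vector in the column span of $F$, while correctly tracking the dependence on $\dd'$ and on the eigengap $\lambda_{\dd+1}-\lambda_{\dd'}$. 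The other ingredients (the matrix-factorization rewriting and the Rayleigh-quotient bound for the high-frequency residual) are comparatively routine.
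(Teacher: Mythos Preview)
Your proposal is correct and follows essentially the same approach as the paper's treatment: the paper sketches exactly these three steps (matrix-factorization rewriting of $\Lspec$, eigenspace recovery for $\epsilon$-optimal $F$, and downstream connection via a Rayleigh-quotient/Markov argument) at the start of \Cref{sec:apx_low_rank}, and then carries them out in full for its generalization \Cref{thm:low_rank_result}, which specializes to \Cref{thm:haochen_result} when $\phi$ is full rank. Your Davis--Kahan step is precisely the content of \Cref{lem:eps_opt_cont} (adapted from Lemma~D.10 of \citet{haochen2021provable}), your Rayleigh-quotient bound $\|u_{>\dd'}\|_2^2 \lesssim \alpha/\lambda_{\dd'+1}$ corresponds to \Cref{lem:top_k_downstream} combined with \Cref{lem:gstar_aligns_Phi}, and the Markov conversion is the $\Lclf \le \Lreg$ step in \Cref{lem:matrix_form}; the paper's explicit choice $w = U_{:\dd'} S_{:\dd'}^{-1} V_{:\dd'}^{\top}\gstarn$ is just a concrete instantiation of what you describe abstractly.
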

Firstly we note that the above bound is function class independent and fits the abstraction from \Cref{eqn:prior_bound_general}.
If augmentations are disjoint, then augmentations of an image $\bar{x}$ will be connected to each other in the augmentation graph, but disconnected from all other input augmentations.
Thus the graph $\A$ will have $|\Xorig|$ connected components, implying that the first $|\Xorig|$ eigenvalues of the Laplacian $\Ln$ are 0, i.e. $\lambda_{i} = 0$ for $i\in[|\Xorig|]$.\footnote{Standard results in spectral graph theory connect the number of connected components to the multiplicity of the eigenvalue 0 of the Laplacian.}
So any representation dimension $\dd < |\Xorig|$ leads to vacuous bounds in \Cref{thm:haochen_result}.
This again happens because the global minimizer of $\Lspec$ is not unique, and some of those could be terrible on downstream, as in our proof for \Cref{lem:unnormalized_lb}.

\noindent\textbf{Approximately disjoint augmentations.}
We show that the above bound does not scale well even when there is very little overlap in the augmentation distributions.
To quantify approximate disjointness, we consider the problem of predicting the original input $\bar{x}$ that could have generated an augmentation $x$, as a classification problem.
\begin{definition}
\label{defn:bayes_error_}
    We say an augmentation distribution $\gA$ is $1-\tau$ disjoint when the minimum error achievable in the input identification task, i.e. predicting the input $\bar{x}$ that could have generated an augmentation $x$, is at most $\tau$.
    Formally this means
    \begin{align}
    \label{eqn:bayes_error_}
        \inf_{g: \Xaug \rightarrow \Xorig} ~\ex_{\bar{x}} \left[\ex_{x\sim\gA(\cdot \mid \bar{x})} \left[\mathbbm{1}\left\{g(x) \neq \bar{x}\right\}\right]\right] \le \tau
    \end{align}
\end{definition}
The augmentation distributions are disjoint if and only if one can perfectly predict $\bar{x}$ from $x$, i.e. under the disjoint augmentation setting from \Cref{defn:non_overlapping}, it is easy to see that $\gA$ is 1-disjoint.
The following result shows that the eigenvalues and eigen-gaps in \Cref{thm:haochen_result} will be small if the augmentation classification accuracy is high.
\begin{lemma}
\label{lem:aug_clf}
    Suppose again that $|\Xorig|=N$ and $\Dorig = \gU(\Xorig)$. If the augmentations are $1-\tau$ disjoint (defined in \Cref{defn:bayes_error_}), i.e. average accuracy of predicting $\bar{x}$ from $x$ is $1-\tau$, then for $\dd'\in[\dd]$,
    \begin{align*}
        \lambda_{\dd+1} - \lambda_{\dd'} \le \lambda_{\dd+1} \le \frac{2\tau}{(1-\nicefrac{\dd}{N})}
    \end{align*}
\end{lemma}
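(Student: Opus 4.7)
The plan is to bound $\lambda_{\dd+1}(\Ln)$ from above by producing a $(\dd+1)$-dimensional subspace with small Rayleigh quotient for $\Ln$. Equivalently, writing $\lambda_{\dd+1}(\Ln) = 1 - \sigma_{\dd+1}(\An)$ with $\sigma_1 \ge \sigma_2 \ge \cdots$ the eigenvalues of $\An$, I will lower bound $\sigma_{\dd+1}(\An) \ge 1 - \tfrac{2\tau}{1-\dd/N}$ by invoking Cauchy interlacing on a carefully chosen $N$-dimensional subspace built from the Bayes-optimal augmentation-to-source classifier. The first inequality $\lambda_{\dd+1} - \lambda_{\dd'} \le \lambda_{\dd+1}$ in the statement is immediate from $\lambda_{\dd'} \ge 0$ since $\Ln$ is positive semidefinite.

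Let $g^\star \colon \Xaug \to \Xorig$ achieve the infimum in \Cref{eqn:bayes_error_}, so its average error is at most $\tau$. For each $\bar{x} \in \Xorig$ set $v_{\bar x}(x) := \sqrt{\Daug(x)}\, \mathbbm{1}\{g^\star(x) = \bar{x}\}$. Because the preimages of $g^\star$ partition $\Xaug$, these vectors are mutually orthogonal with $\|v_{\bar x}\|^2 = p_{\bar x} := \Pr_{x \sim \Daug}[g^\star(x) = \bar{x}]$. Let $V = \mathrm{span}\{v_{\bar x}\}$; w.l.o.g.\ $\dim V = N$, since inputs with $p_{\bar x}=0$ contribute zero vectors and also satisfy $q(\bar{x}) := \Pr_{x \sim \gA(\cdot|\bar x)}[g^\star(x) \neq \bar{x}] = 1$, so they are inert in what follows. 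A direct computation shows that the matrix $M$ of $\An$ in the orthonormal basis $\{v_{\bar x}/\sqrt{p_{\bar x}}\}$ has diagonal entries $M_{\bar x,\bar x} = \E_{\tilde x \sim \Dorig}[r_{\bar x}(\tilde x)^2]/p_{\bar x}$, where $r_{\bar x}(\tilde x) := \Pr_{x \sim \gA(\cdot|\tilde x)}[g^\star(x) = \bar{x}]$.

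The crux is to lower bound $\mathrm{tr}(M)$. Define $s_{\bar x} := \sum_{\tilde x} r_{\bar x}(\tilde x) = N p_{\bar x}$ and $t_{\bar x} := \sum_{\tilde x} r_{\bar x}(\tilde x)^2$. Because $g^\star$ is single-valued, $\sum_{\bar x} r_{\bar x}(\tilde x) = 1$ for every $\tilde x$, hence $\sum_{\bar x} s_{\bar x} = N$. Cauchy--Schwarz then gives $\bigl(\sum_{\bar x} \sqrt{t_{\bar x}}\bigr)^2 = \bigl(\sum_{\bar x}\sqrt{t_{\bar x}/s_{\bar x}}\cdot\sqrt{s_{\bar x}}\bigr)^2 \le \bigl(\sum_{\bar x} t_{\bar x}/s_{\bar x}\bigr)\bigl(\sum_{\bar x}s_{\bar x}\bigr) = N\,\mathrm{tr}(M)$. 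Combining this with the pointwise bound $\sqrt{t_{\bar x}} \ge r_{\bar x}(\bar{x}) = 1-q(\bar{x})$ and the averaging identity $\sum_{\bar x} q(\bar{x}) \le N\tau$ (using uniform $\Dorig$), we obtain $\sum_{\bar x}\sqrt{t_{\bar x}} \ge N(1-\tau)$ and therefore $\mathrm{tr}(M) \ge N(1-\tau)^2 \ge N(1-2\tau)$, i.e.\ $\mathrm{tr}(I_V - M) \le 2N\tau$.

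The remainder is routine: $I_V - M$ is PSD of dimension $N$ with trace at most $2N\tau$, so its $(\dd+1)$-st smallest eigenvalue $\mu_{\dd+1}$ satisfies $(N-\dd)\mu_{\dd+1} \le \sum_{i \ge \dd+1}\mu_i \le 2N\tau$, giving $\mu_{\dd+1} \le \tfrac{2\tau}{1-\dd/N}$. Cauchy interlacing applied to the restriction $\An \mapsto \An|_V$ then upgrades this to $\sigma_{\dd+1}(\An) \ge \sigma_{\dd+1}(\An|_V) = 1-\mu_{\dd+1} \ge 1-\tfrac{2\tau}{1-\dd/N}$, yielding the claimed bound on $\lambda_{\dd+1}(\Ln)$. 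The hardest step is the trace lower bound of the previous paragraph: a naive Jensen estimate $\E[r_{\bar x}^2]/p_{\bar x} \ge p_{\bar x}$ only gives $\mathrm{tr}(M) \ge 1$, whereas the Cauchy--Schwarz manoeuvre above cleverly shifts the square root onto the per-source accuracies $1-q(\bar{x})$, which is what converts the $O(\tau)$ total error budget into the required $O(N\tau)$ trace deficit.
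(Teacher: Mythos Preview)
Your proof is correct, but it takes a different route from the paper's. The paper does not construct a subspace or use interlacing at all. Instead it bounds the full trace of $\An$ directly: from the Bayes-error identity $1-\tau \le \E_{x}\big[\|\gA(\cdot\mid x)\|_\infty\big]$ and the pointwise inequality $\|\cdot\|_\infty \le \|\cdot\|_2$, one gets $(1-\tau)^2 \le \E_x\big[\|\gA(\cdot\mid x)\|_2^2\big] = \tr(\Dbar\,\Abarn\Abarn^\top)$, which under uniform $\Dorig$ equals $\tfrac{1}{N}\tr(\An)$. Since $\An = \Abarn^\top\Abarn$ has rank at most $N$, this yields $\sum_{i=1}^N \lambda_i = N - \tr(\An) \le 2N\tau$, and the averaging step $(N-d)\lambda_{d+1}\le \sum_{i\le N}\lambda_i$ finishes. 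Your argument reaches the same inequality $\tr(I_V-M)\le 2N\tau$ by building the subspace $V$ from the preimages of $g^\star$ and using Cauchy--Schwarz on the per-source quantities $t_{\bar x}/s_{\bar x}$, then transfers to $\An$ via Poincar\'e/Cauchy interlacing rather than the rank observation. The paper's path is shorter and extends immediately to non-uniform $\Dorig$ (picking up a $\bar\rho=\gDbar_{\max}/\gDbar_{\min}$ factor), whereas your Cauchy--Schwarz step uses $\sum_{\bar x}s_{\bar x}=N$, which is specific to the uniform case. On the other hand, your construction is more explicit about \emph{which} $N$-dimensional subspace witnesses the small Rayleigh quotient (the one aligned with the Bayes partition), which gives extra geometric intuition the trace argument hides.
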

Thus for a small representation dimension $\dd \ll N$, the guarantees from \Cref{thm:haochen_result} are non-vacuous only when $\Lcont(f) \le \inf_{f^{\star}} \Lcont(f^{\star}) + \gO(\tau^{2})$, which is a stringent condition to satisfy.
The proof of this is presented in \Cref{sec:apx_apx_disjoint_lemma}.
We evaluate this augmentation classification metric on standard augmentations on images, and find that the accuracy achievable is almost 100\%, suggesting that we might be closer to the disjoint augmentation setting than we think, but contrastive learning still succeeds.
Given that prior analysis fail, we now proceed to show function class dependent guarantees that can show tighter bounds.



\vspace{-1.5ex}\subsection{Function class dependent transfer guarantees}
\label{sec:upper_bound}

We present guarantees for a representation that incorporates the function class in addition to the contrastive loss and augmentations.
Results in this section are for the spectral contrastive loss defined in \Cref{eqn:spectral_loss}.
For simplicity we assume that the input and augmentation sets are finite.

We consider a representation class that is linear in fixed features $\phi : \Xaug \rightarrow \R^{\DD}$, defined as
\begin{align}
    \gF_{\phi} = \left\{f(\cdot) = W^{\top} \phi(\cdot) \mid W \in \R^{\DD \times \dd}\right\}\label{eqn:fn_class_phi}
\end{align}
A crucial property of the function class $\gF_{\phi}$ is that it is expressive enough to solve the downstream task on {\em augmentations} well, even if not sample efficiently.
To formalize this, we define the following metrics
\begin{definition}[Expressivity]
For any augmentation representations $h:\Xaug\rightarrow\R^{d}$ on augmentation labels $g: \Xaug\rightarrow \{\pm1\}$, the regression loss is defined as
\label{defn:reg_loss}
    \begin{align*}
        \Lreg(h; g)
        &= \inf_{w\in\R^{\dd}}~ \ex_{x\sim\Xaug} \left[\left(w^{\top} h(x) - g(x)\right)^{2}\right]
    \end{align*}
\end{definition}
\begin{definition}[Inconsistency]
\label{defn:inconsitent_pred}
    We define inconsistency of a labeling function $g \in \{\pm 1\}^{\Xaug}$ on augmentations w.r.t. ground truth labeling $\gstar\in \{\pm 1\}^{\Xorig}$ on original inputs as
    \begin{align}
        \Lcons(g, \gstar) = \ex_{\bar{x}} \left[ \ex_{x\sim \gA(\cdot \mid \bar{x})} \left[\mathbbm{1}\{g(x) \neq \gstar(\bar{x})\}\right]\right]
    \end{align}
\end{definition}
%
Denote the augmentation mean features as $\aug{\phi} = \ex_{x \sim \gA(\bar{x})} \left[\phi(x)\right]$ and covariance as $\Sigma(\phi) = \ex_{x} \left[\phi(x)\phi(x)^{\top}\right]$.
We now present the upper bound result.
%
\begin{theorem}
\label{thm:low_rank_result}
    Let $\lambda_{1},\cdots,\lambda_{\DD}$ be the eigenvalues of $I_{\DD} - \Sigma(\phi)^{-\half}\Sigma(\aug{\phi})\Sigma(\phi)^{-\half}$ in increasing order.
    Then for every $\dd'\in[\dd]$, a representation $f\in\gF_{\phi}$ will satisfy
    \ifthenelse{\boolean{arxiv}}{
        \begin{align*}
            \Lclf(f; \gstar)
            \le \frac{\min\limits_{g\in\{\pm 1\}^{\Xaug}} 4\left(2\Lcons(g, \gstar) + \sqrt{\Lreg(\phi; g)}\right)}{\lambda_{\dd'+1}}
            + \frac{2\dd'(\Lspec(f) - \inf\limits_{f^{\star}\in\gF_{\phi}}\Lspec(f^{\star}))}{(1-\lambda_{\dd'}) (\lambda_{\dd+1}-\lambda_{\dd'})^{2}}.
        \end{align*}
    }{
        \begin{align*}
            \Lclf(f; \gstar)
            \le& \frac{\min\limits_{g\in\{\pm 1\}^{\Xaug}} 4\left(2\Lcons(g, \gstar) + \sqrt{\Lreg(\phi; g)}\right)}{\lambda_{\dd'+1}}\\
            &+ \frac{2\dd'(\Lspec(f) - \inf\limits_{f^{\star}\in\gF_{\phi}}\Lspec(f^{\star}))}{(1-\lambda_{\dd'}) (\lambda_{\dd+1}-\lambda_{\dd'})^{2}}.
        \end{align*}
    }
\end{theorem}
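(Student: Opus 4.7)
The plan is to reduce the proof to a spectral perturbation analysis in the spirit of \Cref{thm:haochen_result}, but carried out inside the $\DD$-dimensional whitened feature space of $\phi$. For $f(x) = W^{\top}\phi(x)$, let $\tilde W := \Sigma(\phi)^{1/2}W$ and $B := \Sigma(\phi)^{-1/2}\Sigma(\aug{\phi})\Sigma(\phi)^{-1/2}$. A direct expansion of the two expectations in $\Lspec$ yields
\[
\Lspec(f) \;=\; -2\tr\bigl(\tilde W^{\top}B\tilde W\bigr) + \|\tilde W\tilde W^{\top}\|_F^2 \;=\; \|\tilde W\tilde W^{\top} - B\|_F^2 - \|B\|_F^2,
\]
so over $\gF_\phi$ minimizing $\Lspec$ is the symmetric rank-$\dd$ Frobenius approximation problem for $B$. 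By Eckart--Young, $\inf_{f^\star\in\gF_\phi}\Lspec(f^\star)$ is attained when the column span of $\tilde W$ equals the top-$\dd$ eigenspace of $B$, and the sub-optimality equals $\|\tilde W\tilde W^{\top} - B_{\dd}\|_F^2$ where $B_{\dd}$ denotes the top-$\dd$ eigen-approximation of $B$. Writing the eigenvalues of $B$ as $\mu_1 \ge \cdots \ge \mu_\DD$, the eigenvalues of $I_\DD - B$ in increasing order are $\lambda_i = 1 - \mu_i$; they play the role of \citet{haochen2021provable}'s normalized Laplacian spectrum, but on an abstract $\DD$-dimensional ``graph'' induced by the features rather than on the full augmentation set.

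The next step is to turn the spectral approximation quality into a classification guarantee. I fix a proxy labeling $g : \Xaug \to \{\pm 1\}$ and let $g^L(x) = a^{\top}\phi(x)$ be its best linear predictor in $\phi$, so $\E_x[(g^L - g)^2] = \Lreg(\phi; g)$ and $\|\Sigma(\phi)^{1/2}a\|_2 \le 1 + \sqrt{\Lreg(\phi;g)}$. For a classifier $w \in \R^{\dd}$ to be chosen below, I would use the $0/1$-loss-to-margin conversion (``misclassification implies $|\text{error}|\ge 1$''), Jensen's inequality against the definition of $\aug{f}$, and a split of $w^{\top}f(x) - \gstar(\bar x)$ into a bias part $u_1$ (capturing the triangle-inequality errors $2\Lcons + \sqrt{\Lreg}$ against $\gstar$ together with the spectral residual of projecting $g^L$ onto the top-$\dd'$ eigenspace of $B$) and a variance part $u_2$ (from the sub-optimality of $\tilde W$). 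The elementary bound $\mathbbm{1}\{|u_1+u_2|\ge 1\} \le 2|u_1| + 2u_2^2$ then yields the hybrid decomposition
\[
\Lclf(f;\gstar) \;\le\; 2\,\E_{\bar x, x}\bigl[|u_1(\bar x,x)|\bigr] + 2\,\E_{\bar x, x}\bigl[u_2(\bar x,x)^2\bigr],
\]
which is what produces the mixed linear/quadratic dependence in the final bound.

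The remaining task is the explicit construction and spectral analysis of $w$. I would choose $w$ so that $\tilde W w$ is the projection of $\Sigma(\phi)^{1/2}a$ onto the column span of $\tilde W$, using a Davis--Kahan alignment of the first $\dd'$ columns of $\tilde W$ with the top-$\dd'$ eigenspace of $B$ and letting the extra $\dd - \dd'$ slack columns absorb sub-optimality. The $|u_1|$ term then combines the $2\Lcons + \sqrt{\Lreg}$ approximation gap against $\gstar$ with a bias from the part of $\Sigma(\phi)^{1/2}a$ outside the top-$\dd'$ eigenspace of $B$; since eigenvalues of $I_\DD - B$ on that subspace are at least $\lambda_{\dd'+1}$, this bias is bounded by $(1 + \sqrt{\Lreg})/\lambda_{\dd'+1}$, producing the $4(2\Lcons + \sqrt{\Lreg})/\lambda_{\dd'+1}$ block of the theorem. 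The $u_2^2$ term is the squared Davis--Kahan variance $\dd'(\Lspec(f) - \inf\Lspec)/(\lambda_{\dd+1} - \lambda_{\dd'})^2$ with an extra $(1-\lambda_{\dd'})^{-1}$ factor tracking the rescaling induced by the whitening $\Sigma(\phi)^{1/2}$. Taking the infimum over $g$ concludes the argument. I expect the main obstacle to be this last step --- carefully tracking the three spectral factors $1/\lambda_{\dd'+1}$, $(1-\lambda_{\dd'})^{-1}$, and $(\lambda_{\dd+1} - \lambda_{\dd'})^{-2}$ through the Davis--Kahan machinery --- essentially recapitulating \citet{haochen2021provable}'s perturbation bookkeeping but in the whitened $\tilde\phi$-coordinates of the $\DD$-dimensional feature space.
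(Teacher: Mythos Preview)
Your first move---rewriting $\Lspec$ over $\gF_\phi$ as the rank-$\dd$ Frobenius problem $\|\tilde W\tilde W^{\top}-B\|_F^2-\|B\|_F^2$ in whitened $\R^{\DD}$ coordinates---is correct and is exactly isomorphic to what the paper does. The paper instead works in $\R^{\Xaug}$ and shows $\Lspec(f)=\|P_{\Phin}\An P_{\Phin}-\Fn\Fn^{\top}\|_F^2+C$; under the isometry $\Phin(\Phin^{\top}\Phin)^{-1/2}$ these are the same factorization, and indeed the paper's \Cref{lem:features_to_matrix} identifies the spectra. Your Davis--Kahan step for the sub-optimality term is likewise the same as the paper's \Cref{lem:eps_opt_cont}, transported to $\R^\DD$.

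The gap is in how you obtain the $1/\lambda_{\dd'+1}$ factor. You propose to project $\tilde a=\Sigma(\phi)^{1/2}a$ onto the top-$\dd'$ eigenspace of $B$ and bound the residual ``since eigenvalues of $I_\DD-B$ on that subspace are at least $\lambda_{\dd'+1}$''. But that observation only tells you the residual has \emph{small} image under $B$ (it is damped by $\gamma_{\dd'+1}=1-\lambda_{\dd'+1}$); it does not tell you the residual itself is small, and there is no a priori reason $\tilde a$ should be aligned with the top eigenvectors of $B$. So your sketch never actually produces the division by $\lambda_{\dd'+1}$ in the right place.

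The paper's mechanism is different and lives in $\R^{\Xorig}$, not in $\R^\DD$. The downstream target $\gstarn$ has unit norm in $\R^{\Xorig}$, and \Cref{lem:gstar_aligns_Phi} shows $\|P_{\Phin}\Abarn^{\top}\gstarn\|\ge 1-2\Lcons-\sqrt{\Lreg}$. With the SVD $P_{\Phin}\Abarn^{\top}=USV^{\top}$ (so $V$ is a basis of $\R^{\Xorig}$), one computes
\[
1-\|P_{\Phin}\Abarn^{\top}\gstarn\|^2 \;=\;\|\gstarn\|^2-\sum_i \gamma_i(v_i^{\top}\gstarn)^2\;\ge\;(1-\gamma_{\dd'+1})\sum_{i>\dd'}(v_i^{\top}\gstarn)^2,
\]
which is \Cref{lem:top_k_downstream} and is where the $1/\lambda_{\dd'+1}$ genuinely appears: it bounds the mass of $\gstarn$ on the tail singular directions. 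The point is that the relevant spectral decomposition is of $\gstarn$ along the \emph{right} singular vectors $V$ of the averaging operator (which live on $\Xorig$), not of $\tilde a$ along the eigenvectors of $B$ (which live in $\R^\DD$). Your $\R^\DD$ reformulation collapses the $\Xorig$ side of the picture, so you would need to reintroduce it---e.g.\ via the SVD of the matrix $\bar x\mapsto\sqrt{\Dorig(\bar x)}\,\aug{\psi}(\bar x)$ whose Gram matrix is $B$---before a bound of the claimed shape can go through. Your mixed inequality $\mathbbm{1}\{|u_1+u_2|\ge1\}\le 2|u_1|+2u_2^2$ is correct but is not what the paper uses; the paper passes through $\Lclf\le\Lreg$ and the elementary $(a+b)^2\le 2a^2+2b^2$, splitting $\gstarn=V_{:\dd'}V_{:\dd'}^{\top}\gstarn+V_{\dd':}V_{\dd':}^{\top}\gstarn$.
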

Firstly note that the this transfer bound is indeed of the form $\Lclf(f;\ystar) \le \gT(\Gamma, \Lspec(f), \gF_{\phi})$ as in \Cref{eqn:fn_class_dep_bound}, since all the eigenvalues, and the inconsistency and regression metrics in the above bound depend on the features $\phi$ that defines the function class.
This is unlike the guarantee from \citet{haochen2021provable} in \Cref{thm:haochen_result}, where the eigenvalues depend only on the data distributions.
We discuss the result in more detail in \Cref{sec:apx_discussion} and present its proof in \Cref{sec:apx_low_rank}.

This result can in fact recover \Cref{thm:haochen_result}, in the special case of $\phi$ being full rank, i.e. $\DD = |\Xaug|$.
In this case we have
\begin{itemize}
    \item $\Lreg(\phi;g)=0$, since a full rank $\phi$ can express any function in $\R^{\Xaug}$.
    \item $\inf_{f^{\star}\in\gF_{\phi}}\Lspec(f)=\inf_{f^{\star}}\Lspec(f)$ since $\gF_{\phi}$ can express all $d$-dimensional representations.
    \item $\min_{g} \Lcons(g, \gstar) = \gO(\alpha)$, where $\alpha$ is defined in \Cref{thm:haochen_result} as the minimum error in predicting labels from augmentations. This can be seen by setting $g$ to be an optimal augmentations to label predictor, and plugging into \Cref{defn:inconsitent_pred}.
    \item Finally, the matrix $I_{\DD} - \Sigma(\phi)^{-\half}\Sigma(\aug{\phi})\Sigma(\phi)^{-\half}$ is precisely the normalized Laplacian from \Cref{sec:discussion}. Proof of this is presented in \Cref{lem:matrix_to_laplacian}.
\end{itemize}
However when $\phi$ is not full rank, we get a function class dependent bound with non-vacuous guarantees under weaker assumptions, as will be evident in the next part.
\vspace{-0.1in}

\paragraph{Revisiting hypercube setting.}
\label{sec:hypercube_special_case}

We provide theoretical explanations for some of the observations from \Cref{sec:hypercube} by instantiating our lower and upper bounds for the hypercube example.
\begin{corollary}
\label{prop:hypercube_results}
    Consider the setting from \Cref{eg:hypercube}.
    Suppose the classifier is $\wstar = e_{1} \in \R^{k}$, so the downstream label is $\ystar(\bar{x}) = \bar{x}_{1}$.
    Furthermore, let the feature map $\phi$ be an identity mapping, i.e. $\phi(x) = x$.
    In this setting, the following statements are true:
    
        (a) All function class-agnostic transfer guarantees are vacuous. \\
        (b) For any $f\in\gF_{\phi}$, we have $\Lclf(f; \ystar) \le 32 k \left(\Lspec(f) - \inf\limits_{f^{\star}\in\gF_{\phi}}\Lspec(f^{\star})\right)$.
\end{corollary}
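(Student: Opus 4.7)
The plan is to prove the two parts separately. For part (a), the main observation is that the hypercube augmentations are \emph{disjoint} in the sense of \Cref{defn:non_overlapping}, so \Cref{cor:bias_agnostic_vacuous} applies. Given any augmentation $x$ in the support of $\gA(\cdot\mid\bar{x})$, we recover $\bar{x}$ deterministically via $\bar{x}_{:k} = x_{:k}$ (identity on the first $k$ coordinates) and $\bar{x}_{k:} = \sign(x_{k:})$ (since $\tau > 0$ preserves signs). Hence the augmentation distributions of distinct inputs have disjoint supports. The labeling function $\ystar(\bar{x}) = \bar{x}_1$ is balanced under uniform $\Dorig$ on $\{\pm 1\}^D$, and the augmentation protocol $x = T(\bar x, \tau)$ is deterministic in a shared randomness $\tau \sim \gU((0,1])$ and invertible in $\tau$ for any fixed $\bar x$, so the hypotheses of \Cref{lem:normalized_lb} are satisfied. \Cref{cor:bias_agnostic_vacuous} then yields the claimed vacuous bound.

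For part (b), I will apply \Cref{thm:low_rank_result} with $\phi(x)=x$. First I compute the two second-moment matrices using $\ex[\tau]=\tfrac{1}{2}$ and $\ex[\tau^2]=\tfrac{1}{3}$ together with $\ex[\bar{x}\bar{x}^\top]=I_D$:
\begin{align*}
\Sigma(\phi)
\;=\; \ex_x[xx^\top]
\;=\; \mathrm{diag}\!\left(I_k,\, \tfrac{1}{3} I_{D-k}\right), \quad
\Sigma(\aug{\phi})
\;=\; \ex_{\bar{x}}[\aug{\phi}(\bar{x})\aug{\phi}(\bar{x})^\top]
\;=\; \mathrm{diag}\!\left(I_k,\, \tfrac{1}{4} I_{D-k}\right),
\end{align*}
where the last line uses $\aug{\phi}(\bar{x}) = (\bar{x}_{:k},\, \tfrac{1}{2}\bar{x}_{k:})$. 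Therefore
\begin{align*}
I_D - \Sigma(\phi)^{-\half}\Sigma(\aug{\phi})\Sigma(\phi)^{-\half} \;=\; \mathrm{diag}\!\left(0_k,\, \tfrac{1}{4}I_{D-k}\right),
\end{align*}
so the eigenvalues in increasing order are $\lambda_1=\cdots=\lambda_k=0$ and $\lambda_{k+1}=\cdots=\lambda_D=\tfrac{1}{4}$.

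Next, I exhibit the labeling $g \in \{\pm1\}^{\Xaug}$ that zeros out the first term of the bound. Take $g(x) = x_1$, which is valid since $x_1=\bar{x}_1\in\{\pm1\}$ (the first $k$ coordinates are never perturbed). Then $g(x) = \bar{x}_1 = \ystar(\bar{x})$ almost surely, so $\Lcons(g,\ystar)=0$. For the regression loss, the choice $w=e_1 \in \R^D$ gives $w^\top \phi(x) = x_1 = g(x)$, whence $\Lreg(\phi; g)=0$. The first summand in \Cref{thm:low_rank_result} therefore vanishes.

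Finally, I invoke \Cref{thm:low_rank_result} with $\dd'=k$ (which requires $k\le \dd$; the interesting regime is $\dd\ge k$, else the bound is trivially taken to be vacuous). Then $\lambda_{\dd'}=0$, $\lambda_{\dd'+1}=\tfrac{1}{4}$, and $\lambda_{\dd+1}=\tfrac{1}{4}$, giving $(1-\lambda_{\dd'})(\lambda_{\dd+1}-\lambda_{\dd'})^2 = \tfrac{1}{16}$. Substituting,
\begin{align*}
\Lclf(f;\ystar) \;\le\; \frac{2k\,\bigl(\Lspec(f)-\inf_{f^\star\in\gF_\phi}\Lspec(f^\star)\bigr)}{1/16} \;=\; 32k\bigl(\Lspec(f)-\inf_{f^\star\in\gF_\phi}\Lspec(f^\star)\bigr).
\end{align*}

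The only real obstacle is the bookkeeping in computing the covariances and verifying that the eigenvalue pattern yields a clean $(1-\lambda_{\dd'})(\lambda_{\dd+1}-\lambda_{\dd'})^2$ factor; everything else reduces to choosing the right ``oracle'' labeling $g$ and the right cutoff $\dd'$. The construction of $g$ is natural because the first coordinate is preserved by augmentation and the downstream label only depends on it, so a single feature is simultaneously consistent and linearly realizable by $\phi$.
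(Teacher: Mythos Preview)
Your proposal is correct and follows essentially the same approach as the paper: for part (a) you invoke disjointness of the hypercube augmentations and apply \Cref{cor:bias_agnostic_vacuous}, and for part (b) you compute $\Sigma(\phi)$ and $\Sigma(\aug{\phi})$, read off the eigenvalue pattern $\lambda_1=\cdots=\lambda_k=0$, $\lambda_{k+1}=\cdots=\lambda_D=\tfrac14$, choose $g(x)=x_1$ to kill the first term, and plug $\dd'=k$ into \Cref{thm:low_rank_result} to obtain the factor $32k$. The paper's proof is identical in all substantive respects; your write-up is slightly more explicit in verifying disjointness and the hypotheses of the lower-bound lemmas.
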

Result $(b)$ suggests that finding the minimizer (or an approximate minimizer) of the contrastive loss, within the class $\gF_{\phi}$, is sufficient to guarantee good downstream performance; this explains the good performance of linear representation in \Cref{table:hypercube}.
The proof of this part is presented in \Cref{sec:apx_proof_hypercube}.
Result $(a)$ explains the presence of spurious representation in the same table and also why prior analyses fail on this example, and follows from \Cref{cor:bias_agnostic_vacuous}.

\vspace{-0.1in}

\section{Experiments}
\label{sec:experiments}

\ifthenelse{\boolean{arxiv}}{
\begin{figure}[t!]
    \centering
        \includegraphics[width=.65\linewidth]{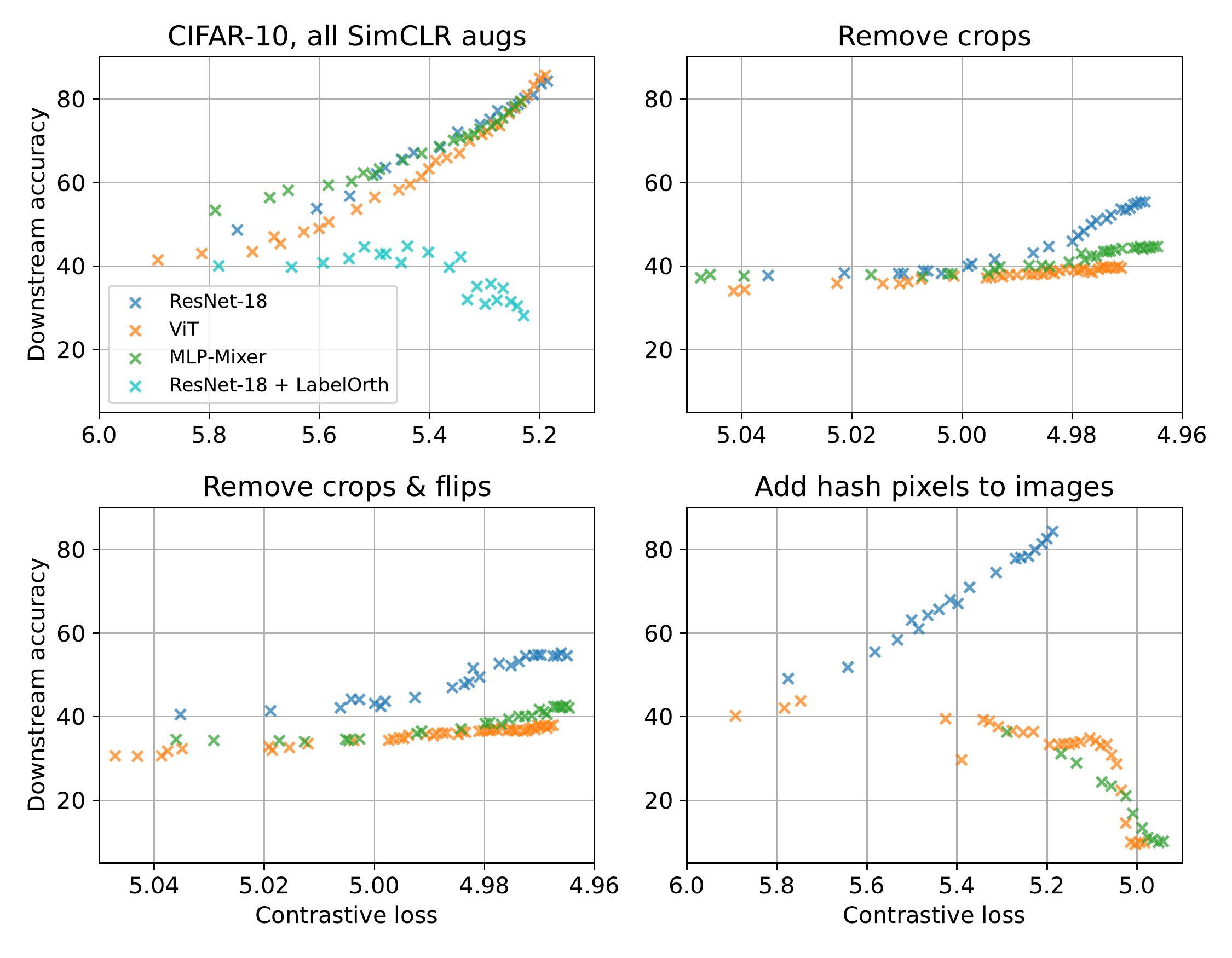}
        \vspace{-0.1in}
    \caption{Contrastive loss $\rightarrow$ accuracy transfer plots for CIFAR-10 with {\resnete}, {\vit}, and {\mlpmixer} architectures for different augmentations. {\bf TL:} Full pipeline of augmentations from SimCLR \citep{chen2020simple}. {\bf TR:} Remove random cropping. {\bf BL:} Remove random cropping and horizontal flip. {\bf BR:} Add ``hash pixels'' to each image, as described in \Cref{sec:vision_exp} to ensure that there is no overlap in augmentations. Here, we observe transfer collapse for the {\vit} and {\mlpmixer} architectures, as they overfit to these uninformative features; {\resnete} ignores these pixels.
    }
    \label{fig:vision_scatter_plot}
\end{figure}
}{
\begin{figure}[t!]
    \centering
        \includegraphics[width=.9\linewidth]{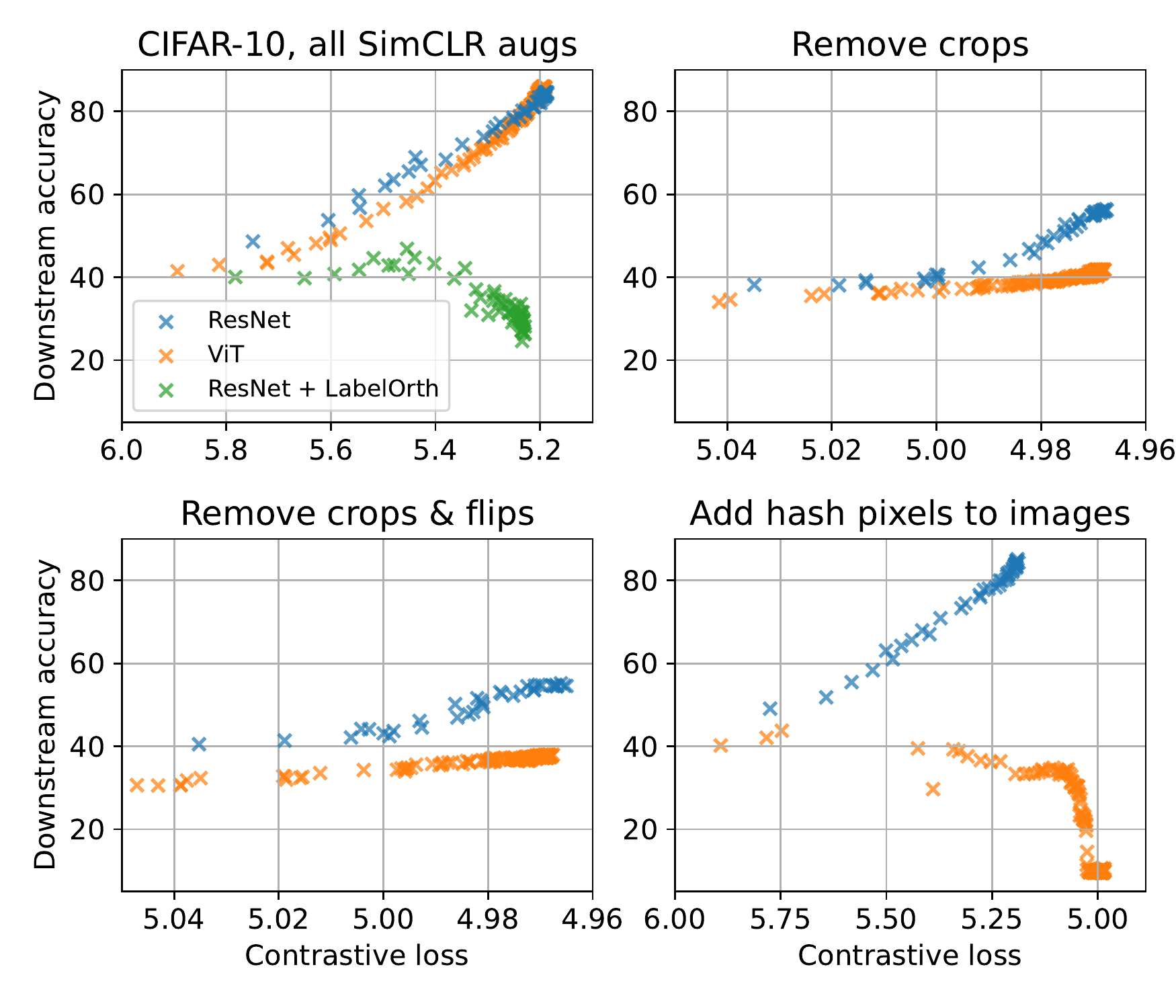}
        \vspace{-0.1in}
    \caption{Contrastive loss $\rightarrow$ accuracy transfer plots for CIFAR-10 with {\resnete} and {\vit} architectures for different augmentations. {\bf TL:} Full pipeline of augmentations from SimCLR \citep{chen2020simple}. {\bf TR:} Remove random cropping. {\bf BL:} Remove random cropping and horizontal flip. {\bf BR:} Add hash pixels to the image as described in \Cref{sec:vision_exp} to ensure that there is no overlapping in augmentations. Here we observe brittleness of transfer for {\vit}, despite {\resnete} working perfectly fine.
    }
    \label{fig:vision_scatter_plot}
\end{figure}
}

Our theoretical examples and analysis show that the prior transfer bounds (that ignore function class biases) can be near vacuous, particularly in the regime where augmentation distributions are disjoint (\Cref{cor:bias_agnostic_vacuous}) or near disjoint (\Cref{lem:aug_clf}).
Furthermore they suggest that meaningful downstream guarantees for contrastive learning would need to depend not only on the contrastive loss but also on the representation function class and possibly training algorithm.
In this section, we ask, in the context of modern contrastive learning pipelines: \emph{$(a)$ how sensitive to the function class is the contrastive loss $\rightarrow$ downstream accuracy transfer in practice?}, \emph{$(b)$ do augmentations sufficiently overlap in standard settings, as required by prior analyses} and \emph{$(c)$ can contrastive learning work when there is little to no overlap?}




\subsection{CIFAR-10 + SimCLR experiments}
\label{sec:vision_exp}

We consider the setting of CIFAR-10 image classification, where the augmentation distribution for contrastive learning is derived from the popular SimCLR protocol \citep{chen2020simple}.
An augmentation is generated by applying a series of transformations (each with some probability) to an image, like random cropping, horizontal flipping, color jittering, grayscaling and Gaussian blurring (details in \Cref{sec:apx_vision_exps}).

We run contrastive learning with standard function classes (architectures): residual convolutional networks ({\resnet}) \citep{he2016deep}, Vision Transformers ({\vit}) \citep{dosovitskiy2021an} and {\mlpmixer} \citep{tolstikhin2021mlp}.
Like in the hypercube example, we compare the transfer performance of different function class and algorithmic choices, as contrastive pre-training proceeds, by plotting the trajectories through $(\Lcont(f), 1-\Lclf(f))$ space of different setups.
Figure~\ref{fig:vision_scatter_plot} summarizes our findings at a glance and we list the key observations below:
\begin{itemize}[leftmargin=*]
    \item \textbf{Effect of function class.} While standard training using the full pipeline of SimCLR augmentations (top left) displays very similar behavior for different architectures, removal of certain transformations like random cropping (top right) and horizontal flipping (bottom left), that make the augmentations ``weaker'', can accentuate the difference in transfer performances between different architectures.

    \item \textbf{Label-orthogonal training.} All architectures behaving similarly for the full SimCLR pipeline (top left) might superficially suggest that the role of inductive biases is not that significant, and that function class agnostic guarantees are good enough to explain the practical success of contrastive learning for these augmentations.
    However for the same augmentations, we can find pathological representations that have small contrastive loss but poor downstream performance, by introducing an adversarial modification to the training algorithm and minor tweak to ResNet architecture that is detailed in \Cref{sec:label_orth}.
    This suggests that guarantees depending only on the contrastive loss, but not the function class or algorithm, cannot explain the effectiveness of contrastive learning with standard architectures and augmentations.

    \item \textbf{Hash pixels.} The difference in architectures is even more prominent in the hash pixels setting (bottom right). 
    Here we non-destructively\footnote{Only add a small number of pseudorandom pixels in random locations of a 2D image; this kind of noise can be easily removed and do not visually change images by much.} modify images and augmentations in order to force the augmentation to be in the disjoint augmentation regime, as defined in \Cref{defn:non_overlapping}.
    In this case, {\vit} and {\mlpmixer} representations make the contrastive loss much smaller than {\resnet}, but have close to random guessing downstream performance.
    {\resnet} training however is unaffected by this hash pixel modification, and it does well on the downstream task, despite being far from minimizing the contrastive loss.
    This experiment not only highlights the difference in function classes, but also concretely demonstrate a case where {\em contrastive learning can succeed despite the augmentation distributions being disjoint}.
    Details on the hash pixel augmentation are in \Cref{sec:hash_exp}.
\end{itemize}

\begin{figure}[t!]
    \centering
    \begin{minipage}[b]{0.5\linewidth}
         \includegraphics[width=\linewidth]{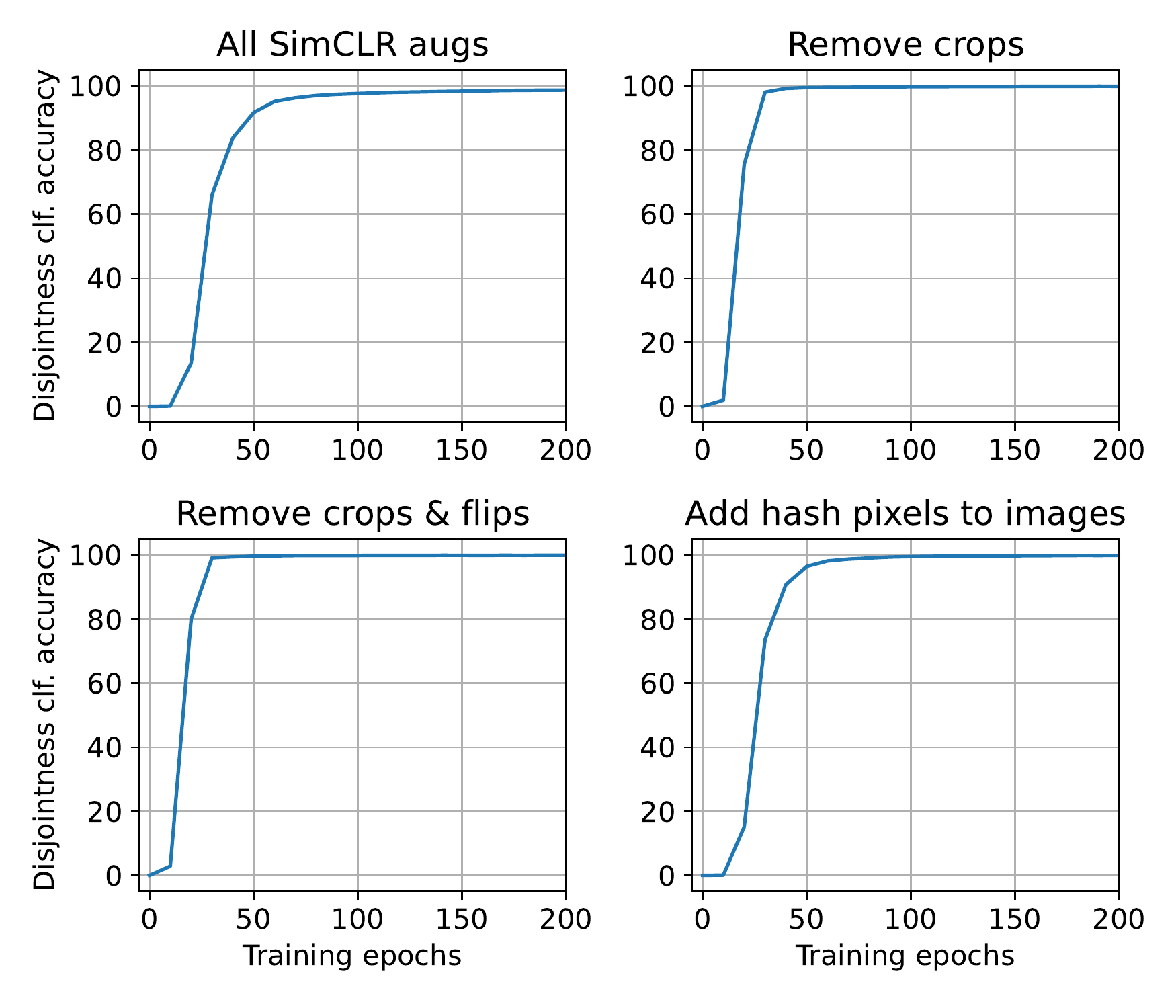}
    \end{minipage}
    \begin{minipage}[b]{0.4\linewidth}
        \begin{tabular}[b]{ |c|c|c| } 
         \hline
            Aug. distribution & Classifier acc. (\%) \\
         \hline
            CIFAR-10 + SimCLR & 99.623 $\pm$ 0.061 \\
            Remove crops & 99.981 $\pm$ 0.024 \\
            Remove crops \& flips & 99.974 $\pm$ 0.038\\
            Add hash pixels & 99.935 $\pm$ 0.033\\
         \hline
        \end{tabular}
        \vspace{20mm}
    \end{minipage}
    \caption{Demonstration of augmentation disjointness for CIFAR-10 with SimCLR augmentations.
    As described in \Cref{sec:disjoint_regime} we train classifiers to distinguish between 5000 same-class examples, for each class.
    These classifiers reach $\approx 100\%$ accuracy (averaged over 10 classes) in the 5000-way classification task, in all 4 settings from Figure~\ref{fig:vision_scatter_plot} (standard deviations shown over $10$ random epoch picked close to end of training).
    This is evidence that these distributions are close to the disjoint regime, despite contrastive learning leading to good downstream accuracy.}
    \label{fig:vision_disjoint_plot}
\end{figure}

An important point to note is that the contrastive losses and downstream accuracies in \Cref{fig:vision_scatter_plot} are measured on unseen data and are thus reflective of the population versions of these metrics; thus the difference in transfer performance is not an issue of generalization.
Further details of experimental setups and hyperparameters are in \Cref{sec:apx_vision_exps}.
We hope that experiments like these, which directly visualize the contrastive loss $\rightarrow$ downstream performance, will be adopted by the community in analyzing and mitigating the brittleness of representation learning using deep networks.
The next question we tackle is understanding how much overlap there is in augmentations for standard settings.

\ifthenelse{\boolean{arxiv}}{
\begin{figure}[t!]
    \centering
        \includegraphics[width=.65\linewidth]{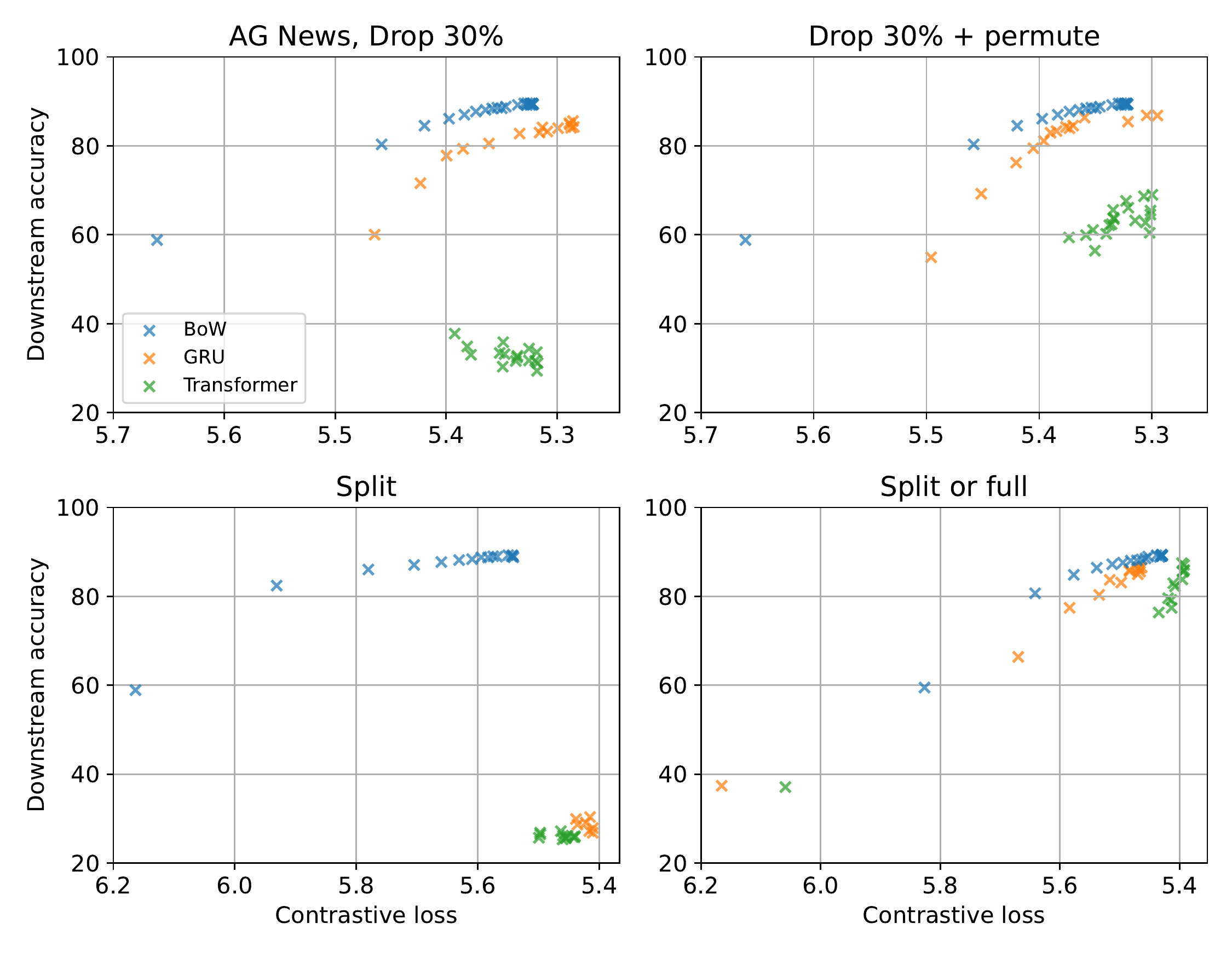}
        \vspace{-0.1in}
    \caption{Contrastive loss $\rightarrow$ accuracy transfer plots for AG News with bag-of-words ({\bow}), {\gru} and {\transformers} architectures with representation dimensionality $\dd=128$. Augmentations in each case are as follows: {\bf TL:} Drop random 30\% of tokens. {\bf TR:} Drop random 30\% of tokens and randomly permute the rest. {\bf BL:} Either the first half or second half of the input. {\bf BR:} Either the first half, second half or the full input. In all cases {\bow} representation makes the contrastive loss reasonably small and does quite well downstream ($\sim 90$\%), but either {\transformers} or both {\gru} and {\transformers} demonstrate brittleness of transfer for different augmentations.
    }
    \label{fig:agnews_scatter_plot}
\end{figure}
}{
\begin{figure}[t!]
    \centering
        \includegraphics[width=.9\linewidth]{media/agnews-q0.7-d128-2x2.pdf}
        \vspace{-0.1in}
    \caption{Contrastive loss $\rightarrow$ accuracy transfer plots for AG News with bag-of-words ({\bow}), {\gru} and {\transformers} architectures. Augmentations in each case are as follows: {\bf TL:} Drop random 30\% of tokens. {\bf TR:} Drop random 30\% of tokens and randomly permute the rest. {\bf BL:} Either the first half or second half of the input. {\bf BR:} Either the first half, second half or the full input. In all cases {\bow} representation makes the contrastive loss reasonably small and does quite well downstream ($\sim 90$\%), but either {\transformers} or both {\gru} and {\transformers} demonstrate brittleness of transfer for different augmentations.
    }
    \label{fig:agnews_scatter_plot}
\end{figure}
}


\subsection{Are we in the disjoint augmentation regime?}
\label{sec:disjoint_regime}

Central to previous theory is the assumption that there exists overlap between augmentations distributions of data within a class.
We test this by setting up a classification task of predicting the image $\bar{x}$ that could have generated an augmentation $x$, similar to \Cref{defn:bayes_error_}.
The standard {\resnet}-18 architecture is modified to have $5000$ output classes, one for each image in a CIFAR-10 class.
We train the model to take augmentations from a fixed CIFAR-10 class and predict index of the original image generating it.
Performance is measured on {\em unseen} augmented data from inputs from the same class, by evaluating the accuracy of predicting the original input.
The results of these experiments in Figure~\ref{fig:vision_disjoint_plot} (averaged over the 10 classes) suggest that with extremely high accuracy, the trained model is able to identify the image given an augmentation, with accuracies higher than 99.5\% for different augmentation types.
This suggests that we may be closer to the disjoint augmentation setting than we think. 


\subsection{Experiments on a Text Domain}
\label{sec:nlp_exp}

In order to understand if our findings apply beyond images, we evaluate the contrastive pipeline on text domain.
We use the AG News classification dataset\footnote{We use the PyTorch torchtext library: \url{https://pytorch.org/text/stable/index.html}} \citep{zhang2015character}, where inputs are new articles and the 4 classes correspond to topics of the articles.
Compared to vision, there is relatively less study of augmentations for text.
Inspired by simple strategies like word/span deletion and word reordering \citep{wu2020clear,giorgi2021declutr,meng2021coco,yan2021consert}, we consider four simple augmentations strategies for our study: (i) \emph{Drop}: randomly drop 30\% tokens but keep the order of remaining tokens, (ii) \emph{Drop+Permute}: randomly drop 30\% tokens and randomly permute the remaining tokens, (iii) \emph{Split}: randomly return either the left half or right half of the text, and (iv) \emph{Split+Full}: randomly return from the full text, its left half, or its right half.

We run contrastive learning with three models on this dataset.
The first is a simple Bag-of-Word ($\bow$) model that learns a single word embedding matrix and returns the average word embedding of tokens in the text.
The second model is Gated Recurrent Unit ($\gru$), which is a recurrent neural network~\cite{chung2014empirical}.
The last is a $\transformers$~\citep{vaswani2017attention}, which is the base model for many state-of-the-art neural networks in NLP.
Both $\gru$ and $\transformers$ are selected to be unidirectional, and they map text to a sequence of hidden representations; we pick the representation for the final token as the text representation for contrastive learning and downstream evaluation.
We follow the SimCLR-like contrastive objective for training and linear classification for evaluating these models.
All models are trained from scratch to minimize just the contrastive loss, without the auxiliary MLM objective employed in some prior works \citep{wu2020clear,giorgi2021declutr,meng2021coco}.
See Appendix~\ref{sec:apx_exps} for details on the experimental setup and hyperparameters.

Figure~\ref{fig:agnews_scatter_plot} visualizes the training trajectories through the $(\Lcont(f), 1-\Lclf(f))$ space, i.e. the contrastive learning $\rightarrow$ downstream accuracy transfer plots.
We observe that for all augmentations, $\bow$ performs the best on downstream classification task, despite doing somewhat worse on the contrastive learning task.
For the drop augmentation (top left), the {\bow} and {\gru} plots might suggest that the augmentation is good; however the {\transformers} model leads to brittle transfer, i.e. it fails to solve the downstream task despite achieving very low contrastive learning loss.
This kind of difference in transfer performance is unexplained by existing function class agnostic theoretical guarantees.
Since the {\bow} representation is order invariant, we also test the augmentation that permutes tokens after dropping 30\% of them (top right).
This change does help the downstream accuracy of {\transformers}, however it does not completely bridge the gap.
While the split augmentation (bottom left) works for {\bow}, both {\gru} and {\transformers} display brittle transfer.
However a simple change of including the original text as an augmentation leads to both {\gru} and {\transformers} doing well downstream.
This is particularly surprising, since including the identity augmentation only decreases the probability of overlap between augmentations, a desirable property based on our current understanding of contrastive learning.
In \Cref{sec:robust_eval} we verify that this difference in performance is not just due to distribution shift (augmentations in contrastive learning v/s unaugmented inputs in downstream evaluation).

In \Cref{fig:agnews_visualize_reps_drop} we visualize two dimensional representations learned using contrastive learning, where it is evident that while the {\transformers} makes the representations invariant to augmentations, representations of augmentations from different classes look very similar to each in distribution and are thus not linear separable.
This phenomenon aligns with our lower bound \Cref{lem:unnormalized_lb,lem:normalized_lb}, whose proofs reveal how such spurious representations can be constructed.
The main takeaway is the for various augmentations, a weaker (less expressive) function class can succeed with weaker augmentations, while more expressive ones like {\gru} and {\transformers} might require stronger augmentations to transfer well to downstream tasks.
This phenomenon is not well understood by current theory and deserves more exploration.

\section{Conclusion}
\label{sec:conclusion}

Contrastive learning has emerged as a unifying paradigm for building flexible learners that can adapt to many tasks.
It is imperative to understand it better at a conceptual and mathematical level. The current paper lays out simple experiments and theoretical examples which suggest gaps in our current understanding.
Filling these gaps will require incorporating the inductive bias of the deep nets being used, which has primarily been studied in simplistic architectures (e.g., depth $2$ or $3$) so far.
The hypercube example from \Cref{sec:hypercube} and the behavior of simple architectures like MLPs is already an open problem.
Incorporating function class bias into transfer bounds is quite non-trivial and our results show how this can be done for linear representations.
Extending these results to more complex function classes, and incorporating training procedures could potentially give us new insights.
Our study in this paper has been diagnostic in nature: identifying gaps in our understanding. Converting these insights into algorithmic approaches is a very promising direction. 
We also hope that visualizations of contrastive loss $\rightarrow$ downstream performance can aid selection of more robust augmentations.

\clearpage
\bibliography{references}
\bibliographystyle{plainnat}

\clearpage
\appendix
\onecolumn



\section{Omitted Proofs}
\subsection{Proof of Proposition \ref{prop:hypercube_results}}
\label{sec:apx_proof_hypercube}
The proof of $(a)$ follows directly from \Cref{lem:unnormalized_lb}, since all the conditions are satisfied and the augmentations are disjoint.

For the eigenvalues, we can compute the covariances by using $\tau\sim\gU((0,1])$
\begin{align*}
    \Sigma(\phi) 
    &= \diag\left(\bm{1}_{k}, \ex_{\tau}[\tau^{2}] \bm{1}_{\DD-k}\right) =\diag(\bm{1}_{k}, \nicefrac{1}{3} \bm{1}_{\DD-k})\\
    \Sigma(\aug{\phi}) 
    &= \diag\left(\bm{1}_{k}, (\ex_{\tau}[\tau])^{2} \bm{1}_{\DD-k}\right) =\diag(\bm{1}_{k}, \nicefrac{1}{4} \bm{1}_{\DD-k})
\end{align*}

Thus the matrix of interest is \begin{align*}
    I_{\DD} - \Sigma(\phi)^{-\half}\Sigma(\aug{\phi})\Sigma(\phi)^{-\half}
    &= \diag\left(\bm{0}_{k}, \nicefrac{1}{4}\bm{1}_{\DD-k}\right)
\end{align*}
giving us $\lambda_{i}=0$ for $i\le k$ and $\lambda_{i}=\nicefrac{1}{4}$ for $i>k$.
Plugging into \Cref{thm:low_rank_result} for $\dd'=k$ finishes the proof.
\section{Low-rank Linear Representation Proof}
\label{sec:apx_low_rank}

\begin{table}
\begin{center}
    \caption{Notations\vspace{0.1in}}
    \label{table:notation}
    \begin{tabular}{c c l}
    \toprule
      Notation & Definition & Description \\
    \toprule
     & \underline{Distributions} & \\
      $\Xorig,\Xaug$ &  & Set of inputs and augmentations\\
      $\gA$ & $x\sim \gA(\cdot \mid \bar{x})$ & Augmentation distribution \\
      $\Dorig$ & $\bar{x}\sim\Dorig$ & Marginal distribution on inputs $\Xorig$ \\
      $\Daug$ & $\ex_{\bar{x}}\left[\gA(\cdot\mid \bar{x})\right]$ & Marginal distribution on augmentations $\Xaug$ \\
      $\Dbar\in\R^{\Xorig \times \Xorig}$ & $\Dbar[\bar{x}, \bar{x}] = \Dorig(\bar{x})$ & Matrix of marginal distributions on $\Xorig$\\
      $D\in\R^{\Xaug \times \Xaug}$ & $D[x, x] = \Daug(x)$ & Matrix of marginal distributions on $\Xaug$\\
      $\Abar\in\R^{\Xorig \times \Xaug}$ & $\Abar[\bar{x},x] = \Dorig(\bar{x}) \gA(x \mid \bar{x})$ & Input augmentation distribution \\
      $\Abarn\in\R^{\Xorig \times \Xaug}$ & $\Dbar^{-\half} \Abar D^{-\half}$ & Normalized matrix version of $\Abar$\\
      $\A\in\R^{\Xaug \times \Xaug}$ & $\A[x,x] = \Dsim(x, x)$ & Matrix of joint distribution of augmentations\\
      $\An\in\R^{\Xaug \times \Xaug}$ & $D^{-\half} \A D^{-\half} = \Abarn^{\top} \Abarn$ & Normalized matrix version of $\A$\\
    \toprule
     & \underline{Fixed features} & \\
      $\phi:\Xaug\rightarrow\R^{\DD}$ & & Fixed feature map for augmentations\\
      $\aug{\phi}:\Xorig\rightarrow\R^{\DD}$ & $\ex_{x \sim \gA(x \mid \cdot)}\left[\phi(x)\right]$ & Augmentation averaged feature\\
      $\Sigma(\phi) \in \R^{\DD\times \DD}$ & $\ex_{x}\left[\phi(x)\phi(x)^{\top}\right]$ & Covariance of feature map $\phi$\\
      $\Phi\in\R^{\Xaug\times \DD}$  & $\Phi[x] = \phi(x)$ & Matrix version of feature map $\phi$ \\
      $\Phin\in\R^{\Xaug\times \DD}$  & $D^{\half} \Phi$ & Normalized version of $\Phi$ \\
    \toprule
     & \underline{Representation} & \\
      $f: \Xaug \rightarrow \R^{\dd}$ & & Representation function \\
      $\fn: \Xaug \rightarrow \R^{\dd}$ & $\sqrt{\Daug(\cdot)} f(\cdot)$ & Normalized version of $f$ \\
      $F\in \R^{\Xaug \times \dd}$ & $F[x, i] = f(x)_{i}$ & Matrix version of $f$ \\
      $\Fn\in \R^{\Xaug \times \dd}$ & $D^{\half} F$ & Normalized version of $F$ \\
      %
    \toprule
     & \underline{Function classes} & \\
      $\gF\subseteq \{f: \Xaug \rightarrow \R^{\dd}\}$ & & Representation function class \\
      $\gF_{\phi} \subseteq  \{f: \Xaug \rightarrow \R^{\dd}\}$ & $\left\{W^{\top} \phi(\cdot) \mid W \in \R^{\DD \times \dd}\right\}$ & Linear representation class \\
      $\gF_{\Phi} \subseteq  \R^{\Xaug \times \dd}$ & $\left\{\Phi W \mid W \in \R^{\DD \times \dd}\right\}$ & Linear representation class (matrix version) \\
    \bottomrule
    \end{tabular}
\end{center}
\end{table}


Firstly we set up some notation. For sets $P$ and set $Q$, where $P$ is finite, we denote $Q^{P}$ to denote the set of all functions from $P \rightarrow Q$.
We abuse notation and also denote $Q^{P}$ to be a subset of $Q^{|P|}$, where an element $r \in Q^{P}$ is a vector of $|P|$ dimensions and coordinates are indexed by elements of $P$.
For instance, when $Q = \{\pm1\}$ and $P$ is finite, $Q^{P} = \{\pm 1\}^{P}$ denotes all functions mapping elements in $P$ to either $1$ or $-1$.
Furthermore, $r \in \{\pm 1\}^{P}$ denotes a vector in $\{\pm 1\}^{|P|}$ that looks like $(r(p))_{p\in P}$.
Similarly we denote $Q^{P \times R}$ to denote a matrix in $Q^{|P| \times |R|}$.
For a matrix $Q \in \R^{m \times n}$, $Q_{:\dd}\in\R^{}$

We now prove function class dependent guarantees for the class of linear representations.
As in \Cref{sec:upper_bound}, for a feature map $\phi : \Xaug \rightarrow \R^{\DD}$, we define the linear representation class $\gF_{\phi} = \left\{f(\cdot) = W^{\top} \phi(\cdot) \mid W \in \R^{\DD \times \DD}\right\}$.
We wish to show downstream guarantees for contrastive learning that depend not only on the contrastive loss of a representation $f:\Xaug \rightarrow \R^{\dd}$ but also uses the fact that it belongs to the class $\gF_{\phi}$.
In particular, we desire a bound that looks like $\Lclf(f; \gstar) \le \gT(\gA, \gstar, \Lcont(f), \gF_{\phi})$, as described in \Cref{eqn:fn_class_dep_bound}.

We employ the strategy from \citet{haochen2021provable} and show guarantees for the spectral contrastive loss, defined in \Cref{eqn:spectral_loss} as
\begin{align}
    \Lspec(f)= -2 \ex_{\substack{(x,x^{+})\sim\Dsim}}\left[f(x)^{\top}f(x^{+})\right] + \ex_{\substack{x,x^{-}\sim\Dneg^{2}}}\left[ \left(f(x)^{\top}f(x^{-})\right)^{2}\right]
\end{align}
We first provide a sketch of their proof in our notation and highlight the main steps.
Our result is similar in spirit to theirs, but deviates at crucial junctions due to incorporation of the function class.
\paragraph{1. Rewrite as matrix factorization.}
Lemma 3.2 from \citet{haochen2021provable} shows that this objective can be rewritten as matrix factorization.
For any two augmentations $x,x'\in\Xaug$, define $w_{x,x'} = \Dsim(x,x') = \ex_{\bar{x}}\left[\gA(x\mid \bar{x}) \gA(x\mid \bar{x})\right]$ to be the probability that $x$ and $x'$ appear as a similar pair, i.e. two augmentations of the same input.
Let $w_{x} = \sum_{x'\in\Xaug} w_{x,x'} = \Dneg(x)$ be the marginal probability.
Then the objective can be rewritten as follows:
\begin{align*}
    \Lspec(f)
    &= \ex_{\substack{(x,x^{+})\sim\Dsim}}\left[f(x)^{\top}f(x^{+})\right] + \ex_{\substack{x,x^{-}\sim\Dneg^{2}}}\left[ \left(f(x)^{\top}f(x^{-})\right)^{2}\right]\\
    &= -2\sum_{x,x^{+}\in\Xaug} w_{x,x^{+}} f(x)^{\top} f(x^{+}) + \sum_{x,x^{-}\in\Xaug} w_{x} w_{x^{-}} \left(f(x)^{\top} f(x^{-})\right)^{2}\\
    &= \sum_{x,x'\in\Xaug} \left(-2 w_{x,x'} f(x)^{\top} f(x') + w_{x} w_{x^{'}} \left(f(x)^{\top} f(x')\right)^{2}\right)\\
    &= C + \sum_{x,x' \in \Xaug} \left(\frac{w_{x,x'}}{\sqrt{w_{x} w_{x'}}} - \left(\sqrt{w_{x}}f(x)\right)^{\top}\left(\sqrt{w_{x'}}f(x')\right)\right)^{2}
\end{align*}
where $C$ depends only on $w$ and thus only on $\gA$, but not $f$.
Thus $\Lspec(f)$ can be interpreted as a matrix factorization objective, with the matrix being $\An \in \R^{\Xaug \times \Xaug}$ such that $\An[x,x'] = \frac{w_{x,x'}}{\sqrt{w_{x} w_{x'}}}$ and scaled version of representation $u_{x} = \sqrt{w_{x}} f(x)$ is being used to factorize this.
Note that $\An$ only depends on $w$'s which in turn only depend on the distributions $\gA$, $\Dorig$ and $\Daug$.
We stack the representation $f$ into a matrix $\Fn \in \R^{\Xaug \times \dd}$, where the column corresponding to $x\in\Xaug$ is $\Fn[x] = \sqrt{w_{x}} f(x)$.
Then the objective can be written as 
\begin{align}
    \Lspec(f) = C + \|\An - \Fn\Fn^{\top}\|^{2}_{F}.
\end{align}
This helps characterize the optimal solution $f^{\star}$ of the contrastive objective, which corresponds to the matrix $\Fn^{\star}$ learning the top $\dd$ eigen-directions of the matrix $\An$.
Inspired by this analysis, we also show that the spectral loss with the function class $\gF_{\phi}$ is a matrix factorization problem, but for a different matrix that depends on both $\An$ and $\phi$.

\paragraph{2. $\epsilon$-optimal solution $f$}
While the above characterization tells us something about the optimal representation $f^{\star}$, in general we might have a representation that has sub-optimality of $\epsilon = \Lspec(f) - \Lspec(f^{\star})$.
In this case, it can be argued that such a representation captures significant mass of the first $\dd$ eigen-directions of $\An$ as long as $\epsilon$ is small and the eigen-gap is large.
More specifically, if $\gamma_{1}, \dots, \gamma_{\Xaug}$ denote the eigenvalues of $\An$, then the suboptimal $f$ will capture all except $\gO\left(\frac{\epsilon}{\left(\gamma_{\dd+1} - \gamma_{\dd}\right)^{2}}\right) = \gO\left(\frac{\Lspec(f) - \inf_{f^{\star}}\Lspec(f^{\star})}{\left(\gamma_{\dd+1} - \gamma_{\dd}\right)^{2}}\right)$ mass of the first $\dd$ eigen-directions of $\An$.
For our analysis, we will suffer a suboptimality only w.r.t. the function class $\gF_{\phi}$, i.e. the $\epsilon$ will be $\Lspec(f) - \inf\limits_{f^{\star}\in\gF_{\phi}}\Lspec(f^{\star})$ rather than $\Lspec(f) - \inf_{f^{\star}}\Lspec(f^{\star})$

\paragraph{3. Connecting to downstream.}
It remains to show why approximately learning the top $\dd$ directions of the augmentation matrix $\An$ can help with a downstream task $\gstar$.
This step uses two assumptions, (1) there is sufficient overlap in augmentation distributions overall, and (2) augmentations are approximately label invariant, i.e. there is not much overlap in augmentations of inputs from different classes.
These assumptions imply that the true label vector $\gstaraug\in\{\pm1\}^{\Xaug}$ has a high component on the first $\dd$ directions of $\An$.
We use similar properties but with less stringent conditions on the amount of overlap between augmentations. In addition to this we need a crucial assumption that the function class $\gF_{\phi}$ is expressive enough to solve the classification task on augmentations.

\subsection{Matrix notation}
\label{sec:apx_matrix_notation}

Given the backdrop of the results from \citet{haochen2021provable}, we now presentation the matrix notations for various functions that will be helpful to prove our main result.
All definitions and notations are summarized in \Cref{table:notation}.

\subsubsection{Distributions to matrices}

Let $w_{\bar{x}} = \Dorig(x)$ denote the marginal probabilities of input $\bar{x}\in\Xorig$ and $w_{\bar{x}, x} = \gA(x \mid \bar{x}) w_{\bar{x}}$ denote the joint probability of input and augmentation.
The marginal for augmentations can then be defined as $w_{x} = \Daug(x) = \sum_{\bar{x}} w_{\bar{x}, x}$.
To summarize
\begin{align}
    w_{\bar{x}} &= \Dorig(x)\label{eqn:w_xbar}\\
    w_{x\mid \bar{x}} &= \gA(x \mid \bar{x})\label{eqn:w_x_given_xbar}\\
    w_{x, \bar{x}} &= w_{\bar{x}, x} = \gA(x \mid \bar{x}) w_{\bar{x}} = w_{x \mid \bar{x}} w_{\bar{x}}\label{eqn:w_xbar_x}\\
    w_{x} &= \Daug(x)\label{eqn:w_x}
\end{align}

Let $\Dbar\in\R^{\Xorig \times \Xorig}$ denote a diagonal matrix of marginal probabilities, i.e. $\Dbar = \text{diag}((w_{\bar{x}})_{\bar{x}\in\Xorig})$.
Similarly $D\in\R^{\Xaug \times \Xaug}$ is the diagonal matrix of augmentation marginals.
Thus these diagonal matrices satisfy
\begin{align}
\label{eqn:D_Dbar}
    \Dbar[\bar{x}, \bar{x}] = w_{\bar{x}}, ~
    D[x, x] = w_{x}
\end{align}
We express the augmentation distributions $\gA(\cdot \mid \bar{x})_{\bar{x} \in \Xorig}$ as a matrix $\Abar \in \R^{\Xorig \times \Xaug}$, where $\Abar[\bar{x}, x] = w_{\bar{x}, x}$.
A normalized version of $\Abar$ is denoted by $\Abarn\in\R^{\Xorig \times \Xaug}$ and defined as $\Abarn[\bar{x}, x] = \frac{w_{\bar{x}, x}}{\sqrt{w_{\bar{x}} w_{x}}}$.
We summarize these definitions below, along with a matrix equation that follows easily from the definition
\begin{align}
\label{eqn:Abarn}
    \Abar[\bar{x}, x] = w_{\bar{x}, x}, ~
    \Abarn[\bar{x}, x] = \frac{w_{\bar{x}, x}}{\sqrt{w_{\bar{x}} w_{x}}}, ~
    \Abarn = \Dbar^{-\half} \Abar D^{-\half}
\end{align}

For the similarity distribution $\Dsim$ on pairs of augmentations, define the following
\begin{align}
\label{eqn:w_x_xp}
    w_{x,x'} = \Dsim(x,x') = \ex_{\bar{x}} [\gA(x \mid \bar{x}) \gA(x' \mid \bar{x})] = \sum_{\bar{x}} w_{\bar{x}} w_{x \mid \bar{x}} w_{x' \mid \bar{x}}.
\end{align}
$\Dsim$ is expressed as a matrix $\A\in\R^{\Xaug \times \Xaug}$, where $\A[x, x'] = w_{x,x'}$.
The normalized version of $\A$ is defined as $\An\in\R^{\Xaug \times \Xaug}$, where $\An[x, x'] = \frac{w_{x,x'}}{\sqrt{w_{x} w_{x'}}}$.
We summarize these definitions below, along with a matrix equation that follows easily from the definition
\begin{align}
\label{eqn:An}
    \A[x, x'] = w_{x,x'}, ~
    \An[x, x'] = \frac{w_{x,x'}}{\sqrt{w_{x} w_{x'}}}, ~
    \An = D^{-\half} \A D^{-\half}
\end{align}

The following lemma connects the $\Abarn$ and $\An$
\begin{lemma}
\label{lem:Abar_to_A}
    For $\Abarn$ and $\An$ defined in \Cref{table:notation}, we have the following
    \begin{align}
        \An = \Abarn^{\top} \Abarn
    \end{align}
\end{lemma}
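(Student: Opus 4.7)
The plan is to verify the identity $\An = \Abarn^\top \Abarn$ by a direct entrywise calculation, exploiting the definitions collected in \Cref{table:notation} together with the factorization of the joint similarity weight $w_{x,x'}$ given in \Cref{eqn:w_x_xp}. Since both matrices are indexed by pairs of augmentations $(x,x') \in \Xaug \times \Xaug$, it suffices to show equality of the $(x,x')$ entry.

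First I would compute the $(x,x')$ entry of $\Abarn^\top \Abarn$ by expanding the matrix product as a sum over the row index $\bar{x} \in \Xorig$. Using $\Abarn[\bar{x},x] = w_{\bar{x},x}/\sqrt{w_{\bar{x}} w_x}$, the sum becomes
\begin{align*}
(\Abarn^\top \Abarn)[x,x'] = \sum_{\bar{x} \in \Xorig} \frac{w_{\bar{x},x}\, w_{\bar{x},x'}}{w_{\bar{x}} \sqrt{w_x w_{x'}}}.
\end{align*}
Next I would substitute $w_{\bar{x},x} = w_{x\mid\bar{x}}\, w_{\bar{x}}$ and $w_{\bar{x},x'} = w_{x'\mid\bar{x}}\, w_{\bar{x}}$ from \Cref{eqn:w_xbar_x}; the factor of $w_{\bar{x}}$ in the denominator cancels one of the two in the numerator, leaving
\begin{align*}
(\Abarn^\top \Abarn)[x,x'] = \frac{1}{\sqrt{w_x w_{x'}}} \sum_{\bar{x}} w_{\bar{x}}\, w_{x\mid\bar{x}}\, w_{x'\mid\bar{x}}.
\end{align*}
Finally I would recognize the inner sum as exactly the definition of $w_{x,x'}$ in \Cref{eqn:w_x_xp}, which yields $w_{x,x'}/\sqrt{w_x w_{x'}} = \An[x,x']$ by \Cref{eqn:An}. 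This completes the identification entrywise.

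There is no real obstacle here: the lemma is a tautological restatement of the fact that the Markov kernel from inputs to similar augmentation pairs factors through the per-input augmentation distribution, and the normalization by $D^{-1/2}$ and $\Dbar^{-1/2}$ is set up precisely so the $w_{\bar{x}}$ measure on the intermediate index collapses cleanly. Alternatively, one could give a one-line coordinate-free proof: write $\A = \Abar^\top \Dbar^{-1}\Abar$ (a direct consequence of $w_{x,x'} = \sum_{\bar{x}} w_{\bar{x}}\, w_{x\mid\bar{x}} w_{x'\mid\bar{x}}$ with $w_{x\mid\bar{x}} = w_{\bar{x},x}/w_{\bar{x}}$) and then conjugate by $D^{-1/2}$:
\begin{align*}
\An = D^{-1/2}\A D^{-1/2} = (D^{-1/2}\Abar^\top \Dbar^{-1/2})(\Dbar^{-1/2}\Abar D^{-1/2}) = \Abarn^\top \Abarn,
\end{align*}
using \Cref{eqn:Abarn}. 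I would present the entrywise version for clarity and mention the matrix-form derivation as a slicker alternative.
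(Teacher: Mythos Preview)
Your proposal is correct and essentially matches the paper's proof: the paper first strips the normalization by writing $\Abarn^{\top}\Abarn = D^{-\half}\Abar^{\top}\Dbar^{-1}\Abar D^{-\half}$ and then verifies $\A = \Abar^{\top}\Dbar^{-1}\Abar$ entrywise via the same substitution $w_{\bar{x},x} = w_{x\mid\bar{x}} w_{\bar{x}}$ and identification with \Cref{eqn:w_x_xp}, whereas you carry the normalizing factors $\sqrt{w_x w_{x'}}$ through the entrywise computation directly. Your ``coordinate-free alternative'' is in fact exactly the paper's reduction step, so the two arguments coincide up to the order of presentation.
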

\begin{proof}
    Firstly from \Cref{eqn:Abarn}, we get that $\Abarn^{\top} \Abarn = D^{-\half} \Abar^{\top} \Dbar^{-1} \Abar D^{-\half}$.
    Given that $\An = D^{-\half} \A D^{-\half}$ from \Cref{eqn:An}, it suffices to show that $\A = \Abar^{\top} \Dbar^{-1} \Abar$.
    The $(x, x')$ entry of the RHS is as follows
    \begin{align*}
        \left(\Abar^{\top} \Dbar^{-1} \Abar\right)[x, x']
        &= \sum_{\bar{x}} \frac{w_{\bar{x}, x} ~w_{\bar{x}, x'}}{w_{\bar{x}}}
        =^{(a)} \sum_{\bar{x}} \frac{w_{x \mid \bar{x}} w_{\bar{x}} ~w_{x' \mid \bar{x}} w_{\bar{x}}}{w_{\bar{x}}}
        = \sum_{\bar{x}} w_{\bar{x}} w_{x \mid \bar{x}} w_{x' \mid \bar{x}}
        =^{(b)} w_{x, x'} = \A[x, x']
    \end{align*}
    where $(a)$ follows from \Cref{eqn:w_xbar_x} and $(b)$ follows from \Cref{eqn:w_x_xp}.
    This completes the proof.
\end{proof}

\subsubsection{Representations to matrices}

The previous section described how to convert distributions to matrices.
We now do the same for representation functions.
For a feature map $\phi: \Xaug \rightarrow \R^{\DD}$, we denote $\Phi\in\R^{\Xaug \times \DD}$ to be the matrix of representations, with the rows being $\Phi[x] = \phi(x)$.
The distributionally normalized version of the representation $\phin(x) = \sqrt{\Daug(x)}\phi(x) = \sqrt{w_{x}} \phi(x)$ is denoted by $\Phin \in \R^{\Xaug \times \DD}$ with row for $x\in\Xaug$ being $\phin(x)$.
We similarly define the matrices for representation $f: \Xaug \rightarrow \R^{\dd}$ to be $F$, $\Fn$ for the distributionally normalized version.
It is easy to see the following relationship between $F$ and $\Fn$: $\Fn = D^{\half} F$.
For the function class of linear representations $\gF_{\phi} = \{W^{\top} \phi(\cdot) \mid W \in \R^{\DD \times \dd}\}$, the matrix version is defined as $\gF_{\Phi} = \{\Phi W \mid W \in \R^{\DD \times \dd}\}$.


\subsection{Connecting losses to matrix notations}


We first define various downstream evaluation metrics for representation.

\begin{definition}
\label{defn:clf_reg_def}
We define the classification and regression error for any augmentation representation function $h: \Xaug \rightarrow \R^{\dd}$.
For any ground-truth labeling $\gstar: \Xorig \rightarrow \{\pm 1\}$ on original inputs, we define the following
\begin{align}
    \Lclf(h;\gstar)
    &= \inf_{w\in\R^{\dd}}~ \ex_{\bar{x}\sim\Xorig} \left[\mathbbm{1}\left\{\sign\left(w^{\top}\aug{h}(\bar{x})\right) = \gstar(\bar{x})\right\}\right]\\
    \Lreg(h;\gstar)
    &= \inf_{w\in\R^{\dd}}~ \ex_{\bar{x}\sim\Xorig} \left[\left(w^{\top}\aug{h}(\bar{x}) - \gstar(\bar{x})\right)^{2}\right]
\end{align}
where $\aug{h}(\bar{x}) = \ex_{x\sim\gA(\cdot\mid\bar{x})}[h(x)]$ is the augmentation averaged representation (see \Cref{table:notation}).
For any labeling $g: \Xaug \rightarrow \{\pm 1\}$ on augmentations, we define the following
\begin{align}
    \Lreg(h; g)
    &= \inf_{w\in\R^{\dd}}~ \ex_{x\sim\Xaug} \left[\left(w^{\top} h(\bar{x}) - g(x)\right)^{2}\right]
\end{align}
\end{definition}


We now connect the downstream regression loss with matrix versions of feature map $\phi$.
\begin{lemma}
\label{lem:Lreg_norm}
    For an arbitrary predictor on augmentations $g\in\{\pm 1\}^{\Xaug}$ and its normalized version $\gn = D^{\half} g$, and an augmentation feature map $\phi: \Xaug \rightarrow \R^{\dd}$ and its normalized matrix $\Phin$,
    \begin{align}
        \Lreg(\phi; g) = \|P_{\Phin}^{\perp} \gn\|^{2}
    \end{align}
\end{lemma}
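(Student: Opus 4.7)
The plan is to unfold the definition of $\Lreg(\phi; g)$ into a standard finite-dimensional least-squares objective, after which the lemma follows from the orthogonal projection theorem. The entire argument is bookkeeping; the only care needed is to absorb the marginal weights $\Daug(x)$ symmetrically into both $\phi$ and $g$ so that the \emph{normalized} matrices $\Phin$ and $\gn$—rather than $\Phi$ and $g$—appear in the final projection identity.

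First I would expand the expectation in \Cref{defn:reg_loss} as a weighted sum over $\Xaug$:
\begin{align*}
    \Lreg(\phi; g)
    = \inf_{w \in \R^{\DD}} \sum_{x \in \Xaug} \Daug(x)\,\bigl(w^{\top}\phi(x) - g(x)\bigr)^{2}.
\end{align*}
Pulling a factor of $\sqrt{\Daug(x)}$ inside the square converts this into
\begin{align*}
    \Lreg(\phi; g)
    = \inf_{w \in \R^{\DD}} \sum_{x \in \Xaug} \bigl(w^{\top}\phin(x) - \gn(x)\bigr)^{2},
\end{align*}
where $\phin(x) = \sqrt{\Daug(x)}\,\phi(x)$ and $\gn(x) = \sqrt{\Daug(x)}\,g(x)$ are exactly the normalized quantities recorded in \Cref{table:notation}. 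Stacking these over $x \in \Xaug$, and noting that $\Phin \in \R^{\Xaug \times \DD}$ has row $\phin(x)^{\top}$ indexed by $x$, the sum is precisely $\|\Phin w - \gn\|^{2}$.

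The remaining step is to invoke the standard projection identity: for any matrix $M$ and vector $b$, $\inf_{v}\|Mv - b\|^{2} = \|P_{M}^{\perp} b\|^{2}$, where $P_{M}$ is the orthogonal projector onto the column span of $M$ and $P_{M}^{\perp} = I - P_{M}$. Applying this with $M = \Phin$ and $b = \gn$ yields $\Lreg(\phi; g) = \|P_{\Phin}^{\perp} \gn\|^{2}$, as claimed. There is no genuine obstacle: the only subtlety is that one must multiply through by $\sqrt{\Daug(x)}$ on both $\phi$ and $g$ at the same time, since doing so on only one side would change the minimizer and break the projection interpretation.
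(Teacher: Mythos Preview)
The proposal is correct and follows essentially the same approach as the paper: expand the expectation as a $\Daug$-weighted sum, absorb $\sqrt{\Daug(x)}$ into both $\phi$ and $g$ to get $\inf_w\|\Phin w - \gn\|^2$, then invoke the least-squares projection identity. Your write-up is if anything slightly more careful than the paper's about why both sides must be scaled simultaneously.
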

\begin{proof}
    Note that $\Phin = D^{\half} \Phi$, where $\Phi \in \R^{\Xaug \times \dd}$ is the matrix version of the augmentation feature map $\phi$ (refer \Cref{table:notation}).
    We prove the result by rewriting $\Lreg$ as follows
    \begin{align}
        \Lreg(\phi; g)
        &= \inf_{w\in\R^{\dd}} \ex_{x} \left(\phi(x)^{\top} w - g(x)\right)^{2}
        = \inf_{w\in\R^{\dd}} \sum_{x\in\Xaug} D(x) \left(\phi(x)^{\top} w - g(x)\right)^{2}\\
        &= \inf_{w\in\R^{\dd}} \sum_{x} \left(\sqrt{D(x)} \phi(x)^{\top} w - \sqrt{D(x)} g(x)\right)^{2}\\
        &= \inf_{w\in\R^{\dd}} \left\|D^{\half}\Phi w - D^{\half} g\right\|^{2}
        = \inf_{w\in\R^{\dd}} \left\|\Phin w - \gn\right\|^{2}\\
        &= \left\|P_{\Phin}^{\perp} \gn\right\|^{2}
    \end{align}
\end{proof}


We now express the spectral contrastive loss and upper bound the downstream classification error using matrix versions of distributions and representations. 
\begin{lemma}
\label{lem:matrix_form}
For any representation $f$ and its corresponding normalized matrix $\Fn \in \R^{\Xaug \times \dd}$, the spectral contrastive loss (\Cref{eqn:spectral_loss}) and classification loss (\Cref{eqn:clf_loss}) can be rewritten and upper bounded as
\begin{align}
    \Lspec(f)
    &= \Lspec(\Fn)
    = \left\|\An - \Fn\Fn^{\top}\right\|_{F}^{2} - \left\|\An\right\|_{F}^{2}
    = \left\|\Abarn^{\top}\Abarn - \Fn\Fn^{\top}\right\|_{F}^{2} - \left\|\Abarn^{\top}\Abarn\right\|_{F}^{2}\\
    \Lclf(f; \gstar)
    &\le \Lreg(f; \gstar)
    = \inf_{w\in\R^\dd} \left\|\Abarn \Fn w - \gstarn\right\|_{2}^{2}
    = \|P_{\Abarn \Fn}^{\perp} \gstarn\|_{2}^{2}
\end{align}
\end{lemma}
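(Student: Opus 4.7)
The plan is to prove the four equalities and one inequality by direct manipulation of the distribution-to-matrix dictionary built in \Cref{sec:apx_matrix_notation}; no assumptions beyond those definitions are needed, so this is essentially a bookkeeping lemma that translates functional objects into matrix ones.

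For the spectral loss, I would first expand \Cref{eqn:spectral_loss} as a double sum over $\Xaug \times \Xaug$, using $w_{x,x'} = \Dsim(x,x')$ on the positive term and $w_x w_{x'}$ on the negative term. Substituting $\Fn[x] = \sqrt{w_x} f(x)$ and $\An[x,x'] = w_{x,x'}/\sqrt{w_x w_{x'}}$ rewrites the two terms as $-2\tr(\An \Fn \Fn^\top)$ and $\|\Fn \Fn^\top\|_F^2$ respectively. Completing the Frobenius square then yields
\begin{align*}
    \Lspec(f) = \|\An - \Fn \Fn^\top\|_F^2 - \|\An\|_F^2,
\end{align*}
which is the first claimed identity. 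The second identity follows immediately from $\An = \Abarn^\top \Abarn$, already established in \Cref{lem:Abar_to_A}.

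For the downstream side, the inequality $\Lclf(f;\gstar) \le \Lreg(f;\gstar)$ is the pointwise fact that a misclassification of a $\{\pm 1\}$-valued label by $\sign(w^\top \aug{f}(\bar{x}))$ forces $(w^\top \aug{f}(\bar{x}) - \gstar(\bar{x}))^2 \ge 1$, so the $0$-$1$ loss is dominated by the squared loss at every $\bar{x}$. I would then rewrite $\Lreg(f;\gstar)$ in matrix form: letting $\bar{F}$ denote the $|\Xorig| \times \dd$ matrix whose rows are $\aug{f}(\bar{x}) = \sum_x w_{x \mid \bar{x}} f(x)$, the relation $w_{\bar{x}, x} = w_{\bar{x}} w_{x \mid \bar{x}}$ from \Cref{eqn:w_xbar_x} gives $\bar{F} = \Dbar^{-1} \Abar F$. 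Inserting $D^{-\half} D^{\half}$ and $\Dbar^{\half}\Dbar^{-\half}$ to introduce the right normalizations,
\begin{align*}
    \Lreg(f;\gstar)
    &= \inf_{w\in\R^\dd} \|\Dbar^{\half}(\bar{F} w - \gstar)\|_2^2\\
    &= \inf_{w\in\R^\dd} \|\Dbar^{-\half}\Abar F w - \Dbar^{\half}\gstar\|_2^2\\
    &= \inf_{w\in\R^\dd} \|\Abarn \Fn w - \gstarn\|_2^2,
\end{align*}
using $\Abarn = \Dbar^{-\half}\Abar D^{-\half}$, $\Fn = D^{\half} F$, and $\gstarn = \Dbar^{\half}\gstar$. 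The final equality $\inf_w \|Mw - y\|_2^2 = \|P_M^\perp y\|_2^2$ is the standard orthogonal-projection identity applied to $M = \Abarn \Fn$.

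Since the lemma is a bookkeeping result, there is no substantive obstacle. The only care required is in tracking where $D$ versus $\Dbar$ and which half-powers appear, so that the final expressions are exactly in terms of $\Abarn$, $\Fn$, and $\gstarn$ rather than their unnormalized counterparts.
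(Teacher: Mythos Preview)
Your proposal is correct and follows essentially the same approach as the paper: expand $\Lspec$ over $\Xaug\times\Xaug$ and complete the Frobenius square, invoke \Cref{lem:Abar_to_A} for the $\Abarn^\top\Abarn$ form, use the pointwise $0$--$1$ versus squared-loss inequality for $\Lclf\le\Lreg$, and rewrite $\aug{f}$ in matrix form to obtain $\Abarn\Fn$. The only cosmetic difference is that you package the regression computation via the identity $\bar{F}=\Dbar^{-1}\Abar F$ and matrix algebra, whereas the paper computes $\sqrt{w_{\bar{x}}}\,\aug{f}(\bar{x})$ entrywise; both arrive at the same place.
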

\begin{proof}
    We first prove the expression for $\Lspec(f)$.
    Note that $\An[x, x'] = w_{x,x'} = \Dsim(x, x')$ from \Cref{eqn:w_x_xp}.
    Furthermore $\Fn[x] = \sqrt{\Daug(x)} f(x)$ from \Cref{table:notation}.
    On expanding out the contrastive loss, we get
    \begin{align*}
        \Lspec(f)
        &= \ex_{\substack{(x,x^{+})\sim\Dsim}}\left[f(x)^{\top}f(x^{+})\right] + \ex_{\substack{x,x^{-}\sim\Dneg^{2}}}\left[ \left(f(x)^{\top}f(x^{-})\right)^{2}\right]\\
        &= -2\sum_{x,x^{+}\in\Xaug} w_{x,x^{+}} f(x)^{\top} f(x^{+}) + \sum_{x,x^{-}\in\Xaug} w_{x} w_{x^{-}} \left(f(x)^{\top} f(x^{-})\right)^{2}\\
        &= \sum_{x,x'\in\Xaug} \left(-2 w_{x,x'} f(x)^{\top} f(x') + w_{x} w_{x^{'}} \left(f(x)^{\top} f(x')\right)^{2}\right)\\
        &= -\sum_{x,x'\in\Xaug} \left(\frac{w_{x,x'}}{\sqrt{w_{x} w_{x'}}}\right)^{2} + \sum_{x,x' \in \Xaug} \left(\frac{w_{x,x'}}{\sqrt{w_{x} w_{x'}}} - \left(\sqrt{w_{x}}f(x)\right)^{\top}\left(\sqrt{w_{x'}}f(x')\right)\right)^{2}\\
        &= -\sum_{x,x'} \An[x, x']^{2} + \sum_{x,x'} \left(\An[x, x'] - \Fn[x]^{\top} \Fn[x']\right)^{2}\\
        &= -\|\An\|_{F}^{2} + \|\An - \Fn\Fn^{\top}\|_{F}^{2}
    \end{align*}
    Now we prove the upper bound of $\Lclf(f; \gstar)$.
    Firstly note that for any input representation $h: \Xorig \rightarrow \{\pm1\}$, we have that 
    \begin{align*}
        \Lclf(h; \gstar)
        &= \inf_{w\in\R^{\dd}} \E_{\bar{x}} \left[\mathbbm{1}\left\{\gstar(\bar{x}) \left({h(\bar{x})}^{\top} w\right) < 0\right\}\right]
        \le^{(a)} \inf_{w\in\R^{\dd}} \E_{\bar{x}} \left[\left(\gstar(\bar{x}) -  {h(\bar{x})}^{\top} w\right)^{2}\right]
        = \Lreg(h; \gstar)
    \end{align*}
    where $(a)$ from the fact that whenever $\gstar(\bar{x}) \left({h(\bar{x})}^{\top} w\right) < 0$, $h(\bar{x})^{\top} w$ has different sign compared to $\gstar \in \{\pm1\}$, and so $(h(\bar{x})^{\top} w - \gstar)^{2} \ge {{}\gstar}^{2} = 1$.
    Thus for an augmentation representation $f: \Xaug$, we have
    \begin{align*}
        \Lreg(f; \gstar)
        &= \Lreg(\aug{f}; \gstar)
        = \inf_{w\in\R^{\dd}} \E_{\bar{x}} \left[\left({\aug{f}(\bar{x})}^{\top} w - \gstar(\bar{x})\right)^{2}\right]
        = \inf_{w\in\R^{\dd}} \sum_{\bar{x}} \left[\left({\sqrt{w_{\bar{x}}}\aug{f}(\bar{x})}^{\top} w - \sqrt{w_{\bar{x}}}\gstar(\bar{x})\right)^{2}\right]\\
        &= \inf_{w\in\R^{\dd}} \sum_{\bar{x}} \left[\left({\sqrt{w_{\bar{x}}}\aug{f}(\bar{x})}^{\top} w - \gstarn(\bar{x})\right)^{2}\right]
    \end{align*}
    We first observe the following about $\aug{f}$:
    \begin{align*}
        \sqrt{w_{\bar{x}}}\aug{f}(\bar{x})
        &= \sqrt{w_{\bar{x}}}\ex_{x\sim\gA(\cdot \mid \bar{x})} [f(x)]
        = \sqrt{w_{\bar{x}}}\sum_{x\in\Xaug} \gA(x \mid \bar{x}) f(x)
        = \sum_{x\in\Xaug} \sqrt{w_{\bar{x}}}\frac{w_{\bar{x}, x}}{w_{\bar{x}}} f(x)
        = \sum_{x\in\Xaug} \frac{w_{\bar{x}, x}}{\sqrt{w_{\bar{x}} w_{x}}} \sqrt{w_{x}} f(x)\\
        &= \sum_{x} \An[\bar{x}, x] \Fn[x]
        = (\An \Fn)[\bar{x}]
    \end{align*}
    Plugging this back into the previous calculation, we get 
    \begin{align*}
        \Lreg(f; \gstar)
        &= \inf_{w\in\R^{\dd}} \sum_{\bar{x}} \left[\left((\An \Fn)[\bar{x}]^{\top} w - \gstarn[\bar{x}]\right)^{2}\right]
        = \inf_{w\in\R^{\dd}} \left\|\An\Fn w - \gstarn\right\|_{F}^{2}
    \end{align*}
    The final step follows from the standard expression for error of linear regression, which is the norm of the component of $\gstarn$ on the null space of $\An\Fn$, i.e. $\|P_{\An\Fn}^{\perp}\|_{F}^{2}$.
\end{proof}


We now show a more specialized form of matrix factorization objective that results from the representation belonging to a particular linear function class.
\begin{lemma}
\label{lem:matrix_form_phi}
For any representation $f\in\gF_{\phi}$ and its normalized matrix $\Fn\in\R^{\Xaug \times \dd}$, the spectral contrastive loss (\Cref{eqn:spectral_loss}) can be rewritten as
\begin{align}
    \Lspec(f)
    &= \Lspec(\Fn)
    = \left\|P_{\Phin}\An P_{\Phin} - \Fn\Fn^{\top}\right\|_{F}^{2} + C
    = \left\|P_{\Phin}\Abarn^{\top} \Abarn P_{\Phin} - \Fn\Fn^{\top}\right\|_{F}^{2} + C
\end{align}
where $C$ is a constant independent of $f$ but dependent on features $\phi$. Here $\Phin$ is the normalized matrix for the features $\phi$ and $P_{\Phin}\in\R^{\Xaug \times \Xaug}$ is the column projection matrix of $\Phin$.
\end{lemma}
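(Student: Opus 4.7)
The plan is to reduce the claim to \Cref{lem:matrix_form} and then exploit the additional structure imposed by $f \in \gF_\phi$ on the normalized representation matrix $\Fn$. \Cref{lem:matrix_form} gives $\Lspec(f) = \|\An - \Fn\Fn^\top\|_F^2 - \|\An\|_F^2$, so it suffices to rewrite the matrix-factorization term $\|\An - \Fn\Fn^\top\|_F^2$ so that $\An$ is replaced by $P_{\Phin}\An P_{\Phin}$, at the cost of picking up an $f$-independent additive constant.

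First I would observe that $f \in \gF_\phi$ means $f(x) = W^\top \phi(x)$ for some $W \in \R^{\DD \times \dd}$, so in matrix form $F = \Phi W$ and therefore $\Fn = D^{1/2} F = D^{1/2}\Phi W = \Phin W$. Hence the columns of $\Fn$ lie in the column space of $\Phin$, which gives the crucial identity
\begin{equation*}
\Fn\Fn^\top = P_{\Phin}\,\Fn\Fn^\top\,P_{\Phin},
\end{equation*}
where $P := P_{\Phin}$ denotes the orthogonal projector onto $\mathrm{col}(\Phin)$ and $P^\perp := I - P$.

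Next I would decompose $\An$ using $P$ and $P^\perp$ and write
\begin{equation*}
\An - \Fn\Fn^\top \;=\; \bigl(P\An P - \Fn\Fn^\top\bigr) \;+\; P\An P^\perp \;+\; P^\perp\An P \;+\; P^\perp \An P^\perp.
\end{equation*}
The key step is to verify that these four blocks are mutually orthogonal in the Frobenius inner product. This follows from $PP^\perp = 0$ and the cyclic property of trace: for instance, $\langle P\An P - \Fn\Fn^\top,\; P\An P^\perp\rangle_F = \tr\bigl(P^\perp \An^\top P (P\An P - \Fn\Fn^\top)\bigr)$, and every such cross term contains a factor $P^\perp P = 0$. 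Applying this Pythagorean decomposition gives
\begin{equation*}
\|\An - \Fn\Fn^\top\|_F^2 \;=\; \|P\An P - \Fn\Fn^\top\|_F^2 \;+\; \underbrace{\|P\An P^\perp\|_F^2 + \|P^\perp\An P\|_F^2 + \|P^\perp\An P^\perp\|_F^2}_{=: \, R(\phi)},
\end{equation*}
where $R(\phi)$ depends only on $\An$ (fixed by the data/augmentation distribution) and on $P$ (determined by $\phi$), hence is independent of $f$. Substituting into \Cref{lem:matrix_form} and setting $C := R(\phi) - \|\An\|_F^2$ yields the first equality. The second equality is then immediate from \Cref{lem:Abar_to_A}, which identifies $\An = \Abarn^\top\Abarn$.

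I expect the main (and essentially only) obstacle to be cleanly proving the Frobenius orthogonality of the four blocks; once that is in hand the rest is bookkeeping, absorbing the $f$-independent pieces into the constant $C$.
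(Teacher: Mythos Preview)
Your proposal is correct and essentially identical to the paper's own proof: both start from \Cref{lem:matrix_form}, use $\Fn=\Phin W$ to get $P_{\Phin}\Fn=\Fn$, decompose $\An$ via $P_{\Phin}+P_{\Phin}^\perp$, and invoke Frobenius orthogonality of the resulting blocks to absorb the $f$-independent pieces into $C$. The only cosmetic difference is that the paper groups the two off-diagonal blocks $P_{\Phin}\An P_{\Phin}^\perp + P_{\Phin}^\perp\An P_{\Phin}$ together, while you keep all four blocks separate.
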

\begin{proof}
    From \Cref{lem:matrix_form}, we know that $\Lspec(f)$ can be written as a matrix factorization objective as 
    \begin{align*}
        \Lspec(f)
        &= \left\|\An - \Fn\Fn^{\top}\right\|_{F}^{2} - \left\|\An\right\|_{F}^{2}
    \end{align*}
    Since $f$ is from the class $\gF_{\phi}$, the matrix form $F$ belongs to the class $\gF_{\Phi} = \left\{\Phi W \mid W \in \R^{\DD \times \dd}\right\}$ (refer to \Cref{table:notation}).
    Thus $\Fn = D^{\half} F$ can be written as $\Fn = D^{\half} F = D^{\half} \Phi W = \Phin W$ for some $W \in \R^{\DD \times \dd}$.
    We can conclude that $P_{\Phin} \Fn = \Fn$ and $P^{\perp}_{\Phin} \Fn = 0$ and further simplify the contrastive loss as
    \begin{align*}
        \Lspec(f)
        &=^{(a)} \left\|P_{\Phin}\An P_{\Phin} + P_{\Phin}\An P^{\perp}_{\Phin} + P^{\perp}_{\Phin}\An P_{\Phin} + P^{\perp}_{\Phin} \An P^{\perp}_{\Phin} - \Fn\Fn^{\top}\right\|_{F}^{2} - \left\|\An\right\|_{F}^{2}\\
        &=^{(b)} \left\|P_{\Phin}\An P_{\Phin} - \Fn\Fn^{\top}\right\|_{F}^{2} + \left\|P_{\Phin}\An P^{\perp}_{\Phin} + P^{\perp}_{\Phin}\An P_{\Phin}\right\|_{F}^{2} + \left\|P^{\perp}_{\Phin}\An P^{\perp}_{\Phin}\right\|_{F}^{2} - \left\|\An\right\|_{F}^{2}\\
        &=^{(c)} \left\|P_{\Phin}\An P_{\Phin} - \Fn\Fn^{\top}\right\|_{F}^{2} + C
    \end{align*}
    where $(a)$ follows by decomposing $\An = (P_{\Phin} + P^{\perp}_{\Phin}) \An (P_{\Phin} + P^{\perp}_{\Phin})$, $(b)$ follows because cross terms cancel through $P_{\Phin} P^{\perp}_{\Phin}$ multiplications, and $(c)$ because all other terms are independent of $\Fn$ (and so $f$).
    This completes the proof.
\end{proof}


We now restate the definition of Inconsistency from \Cref{sec:upper_bound} and then relate it to some matrix form.
\begin{definition}[Inconsistency]
\label{defn:apx_inconsitent_pred}
    We define inconsistency of a labeling function $g \in \{\pm 1\}^{\Xaug}$ on augmentations w.r.t. some ground truth labeling $\gstar\in \{\pm 1\}^{\Xorig}$ on original inputs, as followed:
    \begin{align}
        \Lcons(g, \gstar) = \ex_{\bar{x}} \left[ \ex_{x\sim \gA(\cdot \mid \bar{x})} \left[\mathbbm{1}\{g(x) \neq \gstar(\bar{x})\}\right]\right]
    \end{align}
\end{definition}

\begin{lemma}
\label{lem:gstar_aligns}
    For the normalized matrix $\Abarn \in \R^{\Xorig \times \Xaug}$ corresponding to augmentation distribution $\gA$ (refer \Cref{table:notation}), ground-truth labeling $\gstar\in\{\pm 1\}^{\Xorig}$ on original inputs and its normalized version $\gstarn = D^{\half} \gstar$, and an arbitrary predictor $g\in\{\pm 1\}^{\Xaug}$ on augmentations and its normalized version $\gn = D^{\half} g$, we have
    \begin{align}
        {{}\gstarn}^\top \Abarn \gn = 1 - 2\Lcons(g, \gstar)
    \end{align}
\end{lemma}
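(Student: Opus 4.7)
The statement is a direct calculation, so the plan is simply to expand both sides carefully using the definitions in \Cref{table:notation} and then apply an elementary $\pm 1$ identity.

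First I would write out the bilinear form entry-wise. Using $\Abarn[\bar{x},x] = w_{\bar{x},x}/\sqrt{w_{\bar{x}} w_{x}}$, $\gstarn[\bar{x}] = \sqrt{w_{\bar{x}}}\,\gstar(\bar{x})$, and $\gn[x] = \sqrt{w_{x}}\, g(x)$, the $\sqrt{w_{\bar{x}}}$ and $\sqrt{w_{x}}$ factors in the numerator cancel exactly against the $\sqrt{w_{\bar{x}} w_{x}}$ in the denominator of $\Abarn$, leaving
\begin{align*}
\gstarn^\top \Abarn \gn \;=\; \sum_{\bar{x}\in\Xorig}\sum_{x\in\Xaug} w_{\bar{x},x}\, \gstar(\bar{x})\, g(x).
\end{align*}

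Next I would use the observation that whenever $a,b\in\{\pm 1\}$, we have $ab = 1 - 2\,\mathbbm{1}\{a \neq b\}$. Applying this to $\gstar(\bar{x}) g(x)$ splits the sum into
\begin{align*}
\sum_{\bar{x},x} w_{\bar{x},x} \;-\; 2 \sum_{\bar{x},x} w_{\bar{x},x}\, \mathbbm{1}\{g(x) \neq \gstar(\bar{x})\}.
\end{align*}
The first sum equals $1$ since $w_{\bar{x},x}$ is a joint probability distribution over $\Xorig\times\Xaug$ (by \Cref{eqn:w_xbar_x}, $\sum_{\bar{x},x} w_{\bar{x},x} = \sum_{\bar{x}} w_{\bar{x}} \sum_{x} w_{x\mid\bar{x}} = \sum_{\bar{x}} w_{\bar{x}} = 1$). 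The second sum is exactly $\ex_{\bar{x}}\bigl[\ex_{x\sim\gA(\cdot\mid\bar{x})}[\mathbbm{1}\{g(x)\neq \gstar(\bar{x})\}]\bigr]$, which by \Cref{defn:apx_inconsitent_pred} equals $\Lcons(g,\gstar)$. Combining gives $\gstarn^\top \Abarn \gn = 1 - 2\Lcons(g,\gstar)$, as claimed.

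There is no real obstacle here; the whole argument is a two-line check once the definitions are unpacked. The only thing to be careful about is the normalization convention: the $\sqrt{w_{\bar{x}} w_{x}}$ factors in $\Abarn$, $\gstarn$, and $\gn$ must be tracked so that they cancel cleanly and leave the joint probability $w_{\bar{x},x}$ as the weighting — this is what makes the answer come out as a clean expectation rather than something rescaled.
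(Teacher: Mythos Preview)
The proposal is correct and essentially identical to the paper's proof. The paper first collapses the normalization via the matrix identity $\gstarn^\top \Abarn \gn = \gstar^\top \Abar g$ and then expands entry-wise, whereas you expand entry-wise from the outset and let the $\sqrt{w_{\bar{x}} w_x}$ factors cancel in place; both routes arrive at $\sum_{\bar{x},x} w_{\bar{x},x}\,\gstar(\bar{x})g(x)$ and finish with the same $\pm 1$ identity.
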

\begin{proof}
    Since $\gstarn = \Dbar^{\half} \gstar$, $\Abarn = \Dbar^{-\half} \Abar D^{-\half}$ and $\gn = D^{\half} g$, the left hand size is equivalent to ${{}\gstar}^{\top} \Abar g$.
    Expanding this further we get
    \begin{align}
        {{}\gstar}^{\top} \Abar g
        &= \sum_{\bar{x}, x} \Abar[\bar{x}, x] \gstar(\bar{x}) g(x)
        =^{(a)} \sum_{\bar{x}, x} w_{\bar{x}, x} \gstar(\bar{x}) g(x)
        =^{(b)} \sum_{\bar{x}, x} w_{\bar{x}, x} \left(1 - 2\mathbbm{1}\{\gstar(\bar{x}) \neq g(x)\}\right)\\
        &=^{(c)} \sum_{\bar{x}\in\Xorig} w_{\bar{x}} \sum_{x\in\Xaug} w_{x \mid \bar{x}} \left(1 - 2\mathbbm{1}\{\gstar(\bar{x}) \neq g(x)\}\right)\\
        &= 1 - 2\ex_{\bar{x}} \ex_{x\sim\gA(\cdot \mid \bar{x})} \left[\mathbbm{1}\{\gstar(\bar{x}) \neq g(x)\}\right]
        = 1 - 2\Lcons(g, \gstar)
    \end{align}
    where $(a)$ follows from \Cref{eqn:An}, $(b)$ follows from \Cref{eqn:w_xbar_x}, and $(c)$ follows from $\gstar(\bar{x}), g(x) \in \{\pm 1\}$.
\end{proof}

\subsection{Proof of main result}
\label{sec:apx_main_results}

We first state the key lemmas that will used to prove the main result.

The following lemma says that if there is a predictor on augmentations that is consistent with $\gstar$ and also expressible enough by fixed features $\phi$, then most of $\gstar$ is retained by multiplication by $P_{\Phin} \Abarn$.
\begin{lemma}
\label{lem:gstar_aligns_Phi}
    For the normalized matrix $\Abarn \in \R^{\Xorig \times \Xaug}$ corresponding to augmentation distribution $\gA$, an augmentation feature map $\phi$ and corresponding normalized matrix $\Phin$ (refer \Cref{table:notation}), ground-truth labeling $\gstar\in\{\pm 1\}^{\Xorig}$ on original inputs and its normalized version $\gstarn = D^{\half} \gstar$, and an arbitrary predictor $g\in\{\pm 1\}^{\Xaug}$ on augmentations and its normalized version $\gn = D^{\half} g$, we have
    \begin{align}
        \left\|P_{\Phin} \Abarn^{\top} \gstarn\right\| \ge 1 - 2\Lcons(g, \gstar) - \sqrt{\Lreg(\phi; g)}
    \end{align}
\end{lemma}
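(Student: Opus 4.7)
The plan is to expand the identity of \Cref{lem:gstar_aligns} by splitting $\gn$ into its component in the column space of $\Phin$ and the orthogonal component, and then invoke \Cref{lem:Lreg_norm} to identify the latter with $\sqrt{\Lreg(\phi;g)}$.

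Concretely, I would start from
\[
1 - 2\Lcons(g,\gstar) \;=\; {\gstarn}^\top \Abarn \gn \;=\; {\gstarn}^\top \Abarn \bigl(P_{\Phin} \gn + P_{\Phin}^{\perp} \gn\bigr),
\]
which uses \Cref{lem:gstar_aligns} and the fact that $P_{\Phin} + P_{\Phin}^{\perp} = I$. Since $P_{\Phin}$ is a symmetric projection, Cauchy--Schwarz on the first piece gives
\[
{\gstarn}^\top \Abarn P_{\Phin}\gn \;=\; \bigl(P_{\Phin}\Abarn^{\top}\gstarn\bigr)^\top \bigl(P_{\Phin}\gn\bigr) \;\le\; \bigl\|P_{\Phin}\Abarn^{\top}\gstarn\bigr\|\,\|P_{\Phin}\gn\| \;\le\; \bigl\|P_{\Phin}\Abarn^{\top}\gstarn\bigr\|,
\]
where I use $\|P_{\Phin}\gn\|\le \|\gn\|=1$; the latter holds because $\|\gn\|^2 = \sum_{x} w_x g(x)^2 = \sum_x w_x = 1$ since $g(x)\in\{\pm 1\}$.

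For the second piece, Cauchy--Schwarz gives
\[
{\gstarn}^\top \Abarn P_{\Phin}^{\perp}\gn \;\le\; \|\Abarn^{\top}\gstarn\|\,\|P_{\Phin}^{\perp}\gn\|.
\]
I would then bound $\|\Abarn^{\top}\gstarn\|\le 1$ via a direct computation: using $\Abarn^{\top}\gstarn = D^{-1/2}\Abar^{\top}\gstar$, one has $[\Abar^{\top}\gstar]_x = \sum_{\bar x} w_{\bar x,x}\gstar(\bar x)$, and Cauchy--Schwarz inside together with $\gstar(\bar x)^2 = 1$ gives $[\Abar^\top\gstar]_x^2 \le w_x\sum_{\bar x}w_{\bar x,x} = w_x^2$, so $\|\Abarn^\top\gstarn\|^2 \le \sum_x w_x = 1$. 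Finally, \Cref{lem:Lreg_norm} identifies $\|P_{\Phin}^{\perp}\gn\| = \sqrt{\Lreg(\phi;g)}$.

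Combining the two bounds and rearranging yields
\[
\bigl\|P_{\Phin}\Abarn^{\top}\gstarn\bigr\| \;\ge\; 1 - 2\Lcons(g,\gstar) - \sqrt{\Lreg(\phi;g)},
\]
as required. No step looks particularly subtle; the main thing to be careful about is the role of the various normalizations (the $D^{1/2}$ and $\Dbar^{1/2}$ factors baked into $\gn$, $\gstarn$, $\Abarn$) so that the terms align with the definitions of $\Lcons$ (via \Cref{lem:gstar_aligns}) and $\Lreg$ (via \Cref{lem:Lreg_norm}). The ``obstacle,'' such as it is, is recognizing that the operator-norm-type bound $\|\Abarn^\top\gstarn\|\le 1$ follows cleanly from $\gstar,g\in\{\pm 1\}$; nothing deeper is needed.
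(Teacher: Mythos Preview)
Your proof is correct and follows essentially the same route as the paper's: split $\gn = P_{\Phin}\gn + P_{\Phin}^{\perp}\gn$, invoke \Cref{lem:gstar_aligns} on the full inner product, Cauchy--Schwarz on each piece, and finish with \Cref{lem:Lreg_norm}. The only cosmetic difference is in how the $P_{\Phin}^{\perp}$ term is controlled: the paper bounds ${\gstarn}^{\top}\Abarn P_{\Phin}^{\perp}\gn \le \|\gstarn\|\,\|\Abarn\|_{2}\,\|P_{\Phin}^{\perp}\gn\|$ and appeals to $\|\Abarn\|_{2}\le 1$, whereas you bound it by $\|\Abarn^{\top}\gstarn\|\,\|P_{\Phin}^{\perp}\gn\|$ and verify $\|\Abarn^{\top}\gstarn\|\le 1$ by a direct entrywise computation---both arrive at the same place.
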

\begin{proof}
    We will use \Cref{lem:gstar_aligns} to prove this result.
    Note that $\|\gstarn\| = \|\gn\| = 1$.
    First we lower bound $\|P_{\Phin} \Abarn^{\top} \gstarn\|$ by computing ${{}\gstarn}^{\top} \Abarn P_{\Phin} \gn$
    \begin{align*}
        \|P_{\Phin} \Abarn^{\top} \gstarn\|
        &\ge^{(a)} \frac{{{}\gstarn}^{\top} \Abarn P_{\Phin} \gn}{\|\gn\|}
        = {{}\gstarn}^{\top} \Abarn P_{\Phin} \gn\\
        &= {{}\gstarn}^{\top} \Abarn \gn - {{}\gstarn}^{\top} \Abarn P_{\Phin}^{\perp} \gn\\
        &=^{(b)} 1 - 2\Lcons(g,\gstar) - {{}\gstarn}^{\top} \Abarn P_{\Phin}^{\perp} \gn\\
        &\ge^{(c)} 1 - 2\Lcons(g,\gstar) - \|\gstarn\| \|\Abarn\|_{2} \|P_{\Phin}^{\perp} \gn\|\\
        &\ge^{(d)} 1 - 2\Lcons(g,\gstar) -  \|P_{\Phin}^{\perp} \gn\|\\
        &\ge^{(e)} 1 - 2\Lcons(g,\gstar) -  \sqrt{\Lreg(\phi;g)}
    \end{align*}
    where $(a)$ and $(c)$ follow from Cauchy-Schwarz inequality, $(b)$ follows from \Cref{lem:gstar_aligns}, $(d)$ follows from the fact that $\|\Abarn\|_{2} = 1$ and $(e)$ follows from \Cref{lem:Lreg_norm}
\end{proof}


The next lemma quantifies how much of the top singular directions of $P_{\Phin}\Abarn$ are be captured by an $\epsilon$-optimal representation $f$ (or its matrix version $\Fn$). This is related to Lemma D.10 from \citet{haochen2021provable}, however it differs in the fact that we are decomposing $P_{\Phin} \Abarn$ instead of $\Abarn$, and we have a better dependence on $\dd$ on the right hand side.
Furthermore, the sub-optimality term is w.r.t. the best representation in the class $\gF_{\phi}$ rather than the unconstrained optimizer of $\Lspec$.
\begin{lemma}
\label{lem:eps_opt_cont}
    Let $f \in \gF_{\phi} \in \gF_{\phi}$ be an augmentation representation function. Suppose $\Fn\in\R^{\Xaug\times \dd}$ is the normalized representation matrix corresponding to $f$, $\Abarn$ is the normalized matrix corresponding to augmentation distribution $\gA$ and $\Phin$ is normalized version of $\Phi$ (refer \Cref{table:notation}).
    Let $P_{\Phin} \Abarn^{\top} = U S V^{\top}$ be the singular value decomposition, with $\sqrt{\gamma_{1}}, \dots, \sqrt{\gamma_{\DD}}$ being the singular values in decreasing order.
    Then for $\dd' \le \dd$,
    \begin{align}
        \left\|P^{\perp}_{\Fn} U_{:\dd'}\right\|_{F}^{2} \le \frac{\Lspec(f) - \inf\limits_{f^{\star}\in\gF_{\phi}} \Lspec(f^{\star})}{\gamma_{\dd'}^{2} - \gamma_{\dd+1}^{2}} \le \frac{\Lspec(f) - \inf\limits_{f^{\star}\in\gF_{\phi}} \Lspec(f^{\star})}{\left(\gamma_{\dd'} - \gamma_{\dd+1}\right)^{2}}
    \end{align}
    where $U_{:\dd'} \in \R^{\Xaug \times \dd'}$ corresponds to the first $\dd'$ columns (and thus singular vectors) of $P_{\Phin} \Abarn$.
\end{lemma}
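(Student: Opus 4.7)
My plan is to reduce the statement to a spectral / matrix-factorization estimate for PSD matrices, along the lines of the Davis--Kahan / Eckart--Young circle of ideas, but adapted to the constrained class $\gF_\phi$ rather than to unconstrained rank-$\dd$ matrices. The starting point is Lemma~\ref{lem:matrix_form_phi}, which lets me rewrite
\[
  \Lspec(f) \;=\; \|M^\star - \Fn\Fn^\top\|_F^2 + C,
  \qquad M^\star \coloneqq P_{\Phin}\Abarn^\top\Abarn P_{\Phin} \;=\; N N^\top,
\]
with $N = P_{\Phin}\Abarn^\top$ and $C$ independent of $f$. Thus $M^\star$ is PSD with eigenpairs $(\gamma_i, u_i)$, where $\sqrt{\gamma_i}$ are the singular values of $N$ and $u_i$ the corresponding left singular vectors, and the nonzero eigenvectors of $M^\star$ all lie in $\mathrm{col}(\Phin)$. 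Since $\Fn = \Phin W$, the minimizer in $\gF_\phi$ is given (by Eckart--Young applied inside $\mathrm{col}(\Phin)$) by $\Fn^\star {\Fn^\star}^\top = \sum_{i\le \dd} \gamma_i u_i u_i^\top$, and
\[
  \inf_{f^\star \in \gF_\phi}\Lspec(f^\star) \;=\; \sum_{i>\dd} \gamma_i^2 + C,
  \qquad
  \epsilon \;\coloneqq\; \Lspec(f) - \inf_{f^\star\in\gF_\phi}\Lspec(f^\star) \;=\; \|M^\star - \Fn\Fn^\top\|_F^2 - \sum_{i>\dd}\gamma_i^2.
\]

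Next I would lower-bound $\|M^\star - \Fn\Fn^\top\|_F^2$ by projecting out $\Fn$. Writing $P=P_{\Fn}$, $P^\perp = I-P$ and noting that $\Fn\Fn^\top = P\Fn\Fn^\top P$, the four blocks in the decomposition $M^\star = PM^\star P + PM^\star P^\perp + P^\perp M^\star P + P^\perp M^\star P^\perp$ are mutually $F$-orthogonal and $\Fn\Fn^\top$ only touches the first, yielding $\|M^\star - \Fn\Fn^\top\|_F^2 \ge \|P^\perp M^\star\|_F^2$. Expanding via $M^\star = \sum_i \gamma_i u_i u_i^\top$ and using $u_i^\top P u_i = \|P u_i\|^2$ gives
\[
  \|P^\perp M^\star\|_F^2 \;=\; \sum_i \gamma_i^2\,\|P^\perp u_i\|^2.
\]

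The final step is the eigenvalue-shifting argument. Set $a_i = \|P^\perp u_i\|^2 \in [0,1]$. Since $\{u_i\}$ is an ONB and $r \coloneqq \mathrm{rank}(P) \le \dd$, we have $\sum_i (1-a_i) = \mathrm{tr}(P) = r \le \dd$, hence
$\sum_{i>\dd}(1-a_i) \le \sum_{i\le\dd} a_i$. Combining with the previous display,
\begin{align*}
  \epsilon \;\ge\; \sum_i \gamma_i^2 a_i - \sum_{i>\dd}\gamma_i^2
  &\;=\; \sum_{i\le \dd} \gamma_i^2 a_i \;-\; \sum_{i>\dd}\gamma_i^2(1-a_i) \\
  &\;\ge\; \sum_{i\le \dd} \gamma_i^2 a_i \;-\; \gamma_{\dd+1}^2 \sum_{i\le \dd} a_i
  \;=\; \sum_{i\le\dd}(\gamma_i^2 - \gamma_{\dd+1}^2)\,a_i
  \;\ge\; (\gamma_{\dd'}^2 - \gamma_{\dd+1}^2)\,\|P^\perp U_{:\dd'}\|_F^2,
\end{align*}
where I drop the non-negative tail $i\in(\dd',\dd]$ and use $\gamma_i \ge \gamma_{\dd'}$ for $i\le \dd'$. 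Rearranging yields the first bound, and factoring $\gamma_{\dd'}^2 - \gamma_{\dd+1}^2 = (\gamma_{\dd'}-\gamma_{\dd+1})(\gamma_{\dd'}+\gamma_{\dd+1}) \ge (\gamma_{\dd'}-\gamma_{\dd+1})^2$ (since $\gamma_{\dd+1}\ge 0$) gives the weaker second bound.

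\textbf{Main obstacle.} The one delicate point is step two: certifying that the unconstrained minimizer of $\|M^\star - X\|_F^2$ over rank-$\dd$ PSD $X$ is actually \emph{attainable} inside $\gF_\phi$, so that the sub-optimality $\epsilon$ I work with matches the one in the statement. This rests on the observation that the eigenvectors $u_i$ with $\gamma_i > 0$ lie in the column span of $P_{\Phin}$ (because they lie in the column span of $N = P_{\Phin}\Abarn^\top$), so $\sum_{i\le\dd}\gamma_i u_i u_i^\top = \Phin W W^\top \Phin^\top$ for a suitable $W$. Past this, the proof is straightforward PSD matrix calculus, and step $P^\perp(\cdot)P^\perp$ reduction plus the rank-$r\le\dd$ inventory is what cleanly separates out the gap $\gamma_{\dd'}^2 - \gamma_{\dd+1}^2$.
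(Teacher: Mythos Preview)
Your argument is correct and in fact sharper than the paper's own proof. The paper does not carry out the block decomposition and rank-counting you use; instead it invokes Lemma~D.10 of \citet{haochen2021provable} (applied to $P_{\Phin}\An P_{\Phin}$ in place of $\An$) to obtain the per-eigenvector bounds
\[
  \|P_{\Fn}^{\perp} u_i\|^2 \;\le\; \frac{\epsilon}{\gamma_i^2 - \gamma_{\dd+1}^2},
\]
and then sums these over $i\le \dd'$, picking up an extra factor of $\dd'$ and arriving at $\|P_{\Fn}^{\perp} U_{:\dd'}\|_F^2 \le \epsilon\,\dd'/(\gamma_{\dd'}-\gamma_{\dd+1})^2$. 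That $\dd'$ factor is what is actually used downstream in the proof of \Cref{thm:low_rank_result}, so the paper's proof of the lemma overshoots the stated bound. Your route avoids this loss: by lower-bounding $\|M^\star - \Fn\Fn^\top\|_F^2$ by $\|P_{\Fn}^\perp M^\star\|_F^2 = \sum_i \gamma_i^2 a_i$ and then using the single budget inequality $\sum_{i>\dd}(1-a_i)\le \sum_{i\le\dd}a_i$ (from $\mathrm{rank}(P_{\Fn})\le \dd$), you get $\epsilon \ge \sum_{i\le\dd}(\gamma_i^2-\gamma_{\dd+1}^2)a_i$ in one shot, which yields the bound exactly as stated. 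The advantage of your approach is that it is self-contained, elementary, and tight; the paper's approach is shorter on the page (it outsources the work to an existing lemma) but is looser by the $\dd'$ factor. Your ``main obstacle'' is handled correctly: the nonzero eigenvectors of $M^\star$ lie in $\mathrm{col}(\Phin)$, so the Eckart--Young minimizer is attained within $\gF_\phi$ and the suboptimality $\epsilon$ is the right quantity.
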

\begin{proof}
    We first note, using \Cref{lem:matrix_form_phi}, that the contrastive loss can be written as the following matrix factorization objective
    \begin{align*}
        \Lspec(f)
        = \|P_{\Phin}\An P_{\Phin} - \Fn\Fn^{\top}\|_{F}^{2}
        = \|P_{\Phin}\Abarn^{\top} \Abarn P_{\Phin} - \Fn\Fn^{\top}\|_{F}^{2}
        = \|US^{2}V^{\top} - \Fn\Fn^{\top}\|_{F}^{2}
    \end{align*}
    It is is easy to see that $\gamma_{i}\le 1$ for every $i$, since $\max_{i} \gamma_{i} = \|P_{\Phin} \Abarn\|_{2}^{2} \le \|\Abarn\|_{2}^{2} \le 1$.
    Thus we can invoke Lemma D.10 from \citet{haochen2021provable}, but for matrix $P_{\Phin}\An P_{\Phin}$ instead of $\An$, to argue that
    \begin{align*}
        \|P^{\perp}_{\Fn} u_{i}\|_{F}^{2} \le \frac{\epsilon}{(\gamma_{i}^{2} - \gamma_{\dd+1}^{2})} \le \frac{\epsilon}{(\gamma_{i} - \gamma_{\dd+1})^{2}}
    \end{align*}
    where $\epsilon$ is the suboptimality $\Lspec(f) - \inf\limits_{f^{\star}} \Lspec(f^{\star}) = \Lspec(f) - \inf\limits_{f^{\star}\in\gF_{\phi}} \Lspec(f^{\star})$, since the optimal decomposition for $P_{\Phin} \An P_{\Phin}$ must lie in the span of $\Phin$, and thus $f^{*} \in \gF_{\phi}$.
    Adding these for $i\in[\dd']$ we get
    \begin{align*}
        \|P^{\perp}_{\Fn} U_{:\dd'}\|_{F}^{2} \le \sum_{i=1}^{\dd'} \frac{\epsilon}{(\gamma_{i} - \gamma_{\dd+1})^{2}} \le \frac{\epsilon \dd'}{(\gamma_{\dd'} - \gamma_{\dd+1})^{2}}
    \end{align*}
    This completes the proof.
\end{proof}


We now show conditions under which the top $\dd'$ directions of the matrix being factorized captures significant mass of the ground-truth labels.
\begin{lemma}
\label{lem:top_k_downstream}
    Let $\Abarn$ be the normalized matrix corresponding to augmentation distribution $\gA$ and $\Phin$ be the normalized version of $\Phi$ (refer \Cref{table:notation}).
    Let $P_{\Phin} \Abarn^{\top} = U S V^{\top}$ be the singular value decomposition, with $\sqrt{\gamma_{1}}, \dots, \sqrt{\gamma_{\DD}}$ being the singular values in decreasing order.
    Then we have,
    \begin{align}
        \left\|V_{\dd':}^{\top} ~\gstarn\right\|^{2} \le \frac{1 - \left\|P_{\Phin} \Abarn^{\top} \gstarn\right\|^{2}}{1 - \gamma_{\dd'+1}}
    \end{align}
    where $V_{\dd':} \in \R^{\Xaug \times (|\Xaug|-\dd')}$ corresponds to the last $|\Xaug| - \dd'$ columns (and thus singular vectors) of $P_{\Phin} \Abarn$.
\end{lemma}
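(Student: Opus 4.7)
\textbf{Proof proposal for \Cref{lem:top_k_downstream}.}

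The plan is to diagonalize $P_{\Phin}\Abarn^{\top}\gstarn$ in the right singular basis $V$ and then compare the weighted tail sum $\sum_{i > d'} \gamma_i c_i^2$ with the unweighted tail $\sum_{i>d'} c_i^2$, using that all $\gamma_i \le 1$. Concretely, write $c_i = (V^{\top}\gstarn)_i$. Since $U$ has orthonormal columns,
\begin{align*}
    \|P_{\Phin}\Abarn^{\top}\gstarn\|^2 = \|U S V^{\top}\gstarn\|^2 = \|S V^{\top}\gstarn\|^2 = \sum_{i} \gamma_i\, c_i^2,
\end{align*}
and since $V$ has orthonormal columns together with $\|\gstarn\|^2 = \sum_{\bar x}w_{\bar x}\gstar(\bar x)^2 = 1$, we get $\sum_i c_i^2 \le 1$.

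The next step is to verify that every $\gamma_i \in [0,1]$, so that $1-\gamma_i \ge 0$ and inequalities in that direction make sense. This follows because $\|\Abarn\|_2 \le 1$ (a standard fact for the normalization used in \Cref{eqn:Abarn}, which makes $\Abarn$ a substochastic-type matrix; it is also implicit in the previous lemma where the same bound was invoked), so $\|P_{\Phin}\Abarn^{\top}\|_2 \le \|P_{\Phin}\|_2 \|\Abarn\|_2 \le 1$, giving $\gamma_i = \sigma_i(P_{\Phin}\Abarn^{\top})^2 \le 1$.

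Given these two ingredients, the proof reduces to the chain
\begin{align*}
    1 - \|P_{\Phin}\Abarn^{\top}\gstarn\|^2
    &= 1 - \sum_i \gamma_i c_i^2
    \ge \sum_i c_i^2 - \sum_i \gamma_i c_i^2
    = \sum_i (1-\gamma_i)\, c_i^2 \\
    &\ge \sum_{i > d'} (1-\gamma_i)\, c_i^2
    \ge (1 - \gamma_{d'+1}) \sum_{i > d'} c_i^2
    = (1 - \gamma_{d'+1})\, \|V_{d':}^{\top}\gstarn\|^2,
\end{align*}
where the penultimate inequality uses that the singular values are in decreasing order, so $\gamma_i \le \gamma_{d'+1}$ for $i > d'$, i.e.\ $1-\gamma_i \ge 1-\gamma_{d'+1}\ge 0$. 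Dividing through by $1-\gamma_{d'+1}$ (assumed positive, else the statement is vacuous) yields the claimed bound.

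There is no real obstacle here; the only subtle checks are the two sign conditions $\sum_i c_i^2 \le 1$ and $\gamma_i \le 1$, both of which follow from the operator-norm control on $\Abarn$ together with orthonormality of the SVD factors. The argument is essentially the ``tail inertia'' calculation for the Rayleigh quotient of $I - P_{\Phin}\Abarn^{\top}\Abarn P_{\Phin}$ evaluated at $\gstarn$, packaged so as to plug directly into the proof of \Cref{thm:low_rank_result} via \Cref{lem:gstar_aligns_Phi} and \Cref{lem:eps_opt_cont}.
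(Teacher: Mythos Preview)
Your proof is correct and follows essentially the same approach as the paper: diagonalize in the right singular basis, use $\|\gstarn\|^2=1$ together with the partial orthonormality of $V$ to get $\sum_i c_i^2\le 1$, and then exploit $\gamma_i\le 1$ and the monotonicity of the $\gamma_i$'s to extract the tail bound. The only cosmetic difference is your explicit separation of the step $\sum_i(1-\gamma_i)c_i^2\ge\sum_{i>d'}(1-\gamma_i)c_i^2$, which the paper absorbs into a single inequality.
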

\begin{proof}
    We expand out the term $1 - \left\|P_{\Phin} \Abarn^{\top} \gstarn\right\|^{2}$ as follows
    \begin{align*}
        1 - \left\|P_{\Phin} \Abarn^{\top} \gstarn\right\|^{2}
        &= 1 - \left\|USV^{\top} \gstarn\right\|^{2}
        = 1 - \left\|SV^{\top} \gstarn\right\|^{2}
        = 1 - \sum_{i=1}^{\DD} \gamma_{i} (v_{i}^{\top} \gstarn)^{2}\\
        &=^{(a)} \|\gstarn\|_{2}^{2} - \sum_{i=1}^{\DD} \gamma_{i} (v_{i}^{\top} \gstarn)^{2}\\
        &\ge^{(b)} \sum_{i=1}^{\DD}(v_{i}^{\top}\gstarn)^{2} - \sum_{i=1}^{\DD} \gamma_{i} (v_{i}^{\top} \gstarn)^{2}
        = \sum_{i=1}^{\DD} (1-\gamma_{i}) (v_{i}^{\top} \gstarn)^{2}\\
        &\ge^{(c)} (1-\gamma_{\dd'+1}) \sum_{i=\dd'+1}^{\DD} (v_{i}^{\top} \gstarn)^{2}
        = (1-\gamma_{\dd'+1}) \|V_{\dd':}\gstarn\|^{2}
    \end{align*}
    where $(a)$ follows because $\|\gstarn\|^{2} = \sum_{\bar{x}} \Dorig(\bar{x}) \gstar(\bar{x}) = 1$, $(b)$ follows because $\{v_{i}\}_{i=1}^{\DD}$ form a partial orthonormal basis and $(c)$ is true because $\gamma_{i} \le 1$ for every $i$ and since $\gamma_{i}$'s are in decreasing order.
    Rearranging terms completes the proof
\end{proof}


The following lemma connects the eigenvalues of augmentation averaged features that shows up in the final bound, to the eigenvalues of the matrix being decomposed in the spectral contrastive loss.
\begin{lemma}
\label{lem:features_to_matrix}
    Let $\Sigma(\cdot)$ be the covariance operator for features and $\aug{\phi}$ denote augmentation averaged representation obtained from $\phi$ (see \Cref{table:notation}).
    Let $\lambda_{1},\cdots,\lambda_{\DD}$ be the eigenvalues of $I_{\DD} - \Sigma(\phi)^{-\half}\Sigma(\aug{\phi})\Sigma(\phi)^{-\half}$ in increasing order.
    Let $\Abarn$ be the normalized matrix corresponding to augmentation distribution $\gA$ and $\Phin$ be the normalized version of $\phi$ (refer \Cref{table:notation}).
    Let $P_{\Phin} \Abarn^{\top} = U S V^{\top}$ be the singular value decomposition, with $\sqrt{\gamma_{1}}, \dots, \sqrt{\gamma_{\DD}}$ being the singular values in decreasing order.
    Then we have,
    \begin{align}
        \lambda_{i} = 1 - \gamma_{i}, \forall~i\in[\DD]
    \end{align}
\end{lemma}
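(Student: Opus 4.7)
The plan is to show that the squared singular values $\gamma_i$ of $P_{\Phin}\Abarn^\top$ coincide exactly with the eigenvalues of the $\DD \times \DD$ matrix $\Sigma(\phi)^{-1/2}\Sigma(\aug{\phi})\Sigma(\phi)^{-1/2}$, at which point the identity $\lambda_i = 1 - \gamma_i$ follows from the fact that $\lambda_i$'s are enumerated in increasing order while $\gamma_i$'s are in decreasing order.

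First I would rewrite the two feature-covariance objects in matrix form. By direct expansion, $\Sigma(\phi) = \sum_x w_x \phi(x)\phi(x)^\top = \Phin^\top \Phin$. For the averaged feature, the same computation that appeared in the proof of \Cref{lem:matrix_form} (where $\sqrt{w_{\bar x}}\,\aug f(\bar x) = (\Abarn \Fn)[\bar x]$) gives $\sqrt{w_{\bar x}}\,\aug\phi(\bar x) = (\Abarn \Phin)[\bar x]$, so $\Sigma(\aug\phi) = \Phin^\top \Abarn^\top \Abarn \Phin$. Hence
\begin{equation*}
\Sigma(\phi)^{-1/2}\Sigma(\aug\phi)\Sigma(\phi)^{-1/2} = M^\top M, \qquad \text{where } M := \Abarn \Phin \Sigma(\phi)^{-1/2}.
\end{equation*}
Its eigenvalues (in decreasing order) are the squared singular values of $M$, and one minus these quantities are the $\lambda_i$'s (in increasing order).

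Next I would identify the squared singular values of $M$ with the $\gamma_i$'s. Assuming $\Phin$ has full column rank (which we may assume WLOG by restricting to the span of $\phi$), the orthogonal projector onto its column space admits the closed form $P_{\Phin} = \Phin(\Phin^\top\Phin)^{-1}\Phin^\top = \Phin \Sigma(\phi)^{-1} \Phin^\top$. The squared singular values of $P_{\Phin}\Abarn^\top$ are the eigenvalues of
\begin{equation*}
P_{\Phin}\Abarn^\top \Abarn P_{\Phin} = \Phin \Sigma(\phi)^{-1}\bigl(\Phin^\top \Abarn^\top \Abarn \Phin\bigr)\Sigma(\phi)^{-1}\Phin^\top = \Phin \Sigma(\phi)^{-1}\Sigma(\aug\phi)\Sigma(\phi)^{-1}\Phin^\top.
\end{equation*}
Using the standard fact that $AB$ and $BA$ share the same non-zero eigenvalues, cycling the factors $\Phin$ and $\Sigma(\phi)^{-1/2}\Phin^\top$ and invoking $\Phin^\top\Phin = \Sigma(\phi)$ reduces this to $\Sigma(\phi)^{-1/2}\Sigma(\aug\phi)\Sigma(\phi)^{-1/2}$, which is the $\DD\times\DD$ matrix $M^\top M$ above.

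Combining the two paragraphs, the non-zero $\gamma_i$'s (listed in decreasing order) match the non-zero eigenvalues of $\Sigma(\phi)^{-1/2}\Sigma(\aug\phi)\Sigma(\phi)^{-1/2}$ (also in decreasing order), with any trailing zero entries matching on both sides by rank considerations. Since $\lambda_i$ in increasing order are $1$ minus the eigenvalues of the same matrix in decreasing order, we conclude $\lambda_i = 1 - \gamma_i$ for every $i \in [\DD]$. The main subtlety will be bookkeeping between the singular-value index set of $P_{\Phin}\Abarn^\top$ (nominally up to $\min(|\Xaug|,|\Xorig|)$) and the $\DD$-dimensional spectrum of $M^\top M$; this is handled by noting that $P_{\Phin}$ has rank $\DD$, so $P_{\Phin}\Abarn^\top$ has at most $\DD$ non-zero singular values, and the paper's convention of listing exactly $\sqrt{\gamma_1},\dots,\sqrt{\gamma_\DD}$ aligns the indices cleanly (padding with zeros if $\mathrm{rank}(\Abarn)<\DD$).
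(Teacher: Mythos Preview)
Your proposal is correct and follows essentially the same approach as the paper: both establish $\Sigma(\phi)=\Phin^\top\Phin$ and $\Sigma(\aug\phi)=\Phin^\top\Abarn^\top\Abarn\Phin$, then identify the spectrum of $\Sigma(\phi)^{-1/2}\Sigma(\aug\phi)\Sigma(\phi)^{-1/2}$ with the squared singular values of $P_{\Phin}\Abarn^\top$. The only cosmetic difference is that the paper writes $\Phin=MNR^\top$ via its SVD and tracks singular values through orthogonal factors, whereas you use the closed-form projector $P_{\Phin}=\Phin(\Phin^\top\Phin)^{-1}\Phin^\top$ together with the $AB$--$BA$ eigenvalue cycling trick; these are equivalent bookkeeping devices.
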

\begin{proof}
    Let $w_{x}, w_{\bar{x}}, w_{x,\bar{x}}, w_{x\mid\bar{x}}$ be as defined in \Cref{eqn:w_x,eqn:w_xbar,eqn:w_xbar_x,eqn:w_x_given_xbar}.
    Using $\Phin = D^{\half} \Phi$ and that $D$ is diagonal with $D[x,x] = w_{x}$, we first simplify $\Sigma(\phi)$ as follows
    \begin{align*}
        \Sigma(\phi)
        &= \ex_{x \sim \Daug} \left[\phi(x)\phi(x)^{\top}\right]
        = \sum_{x \in \Xaug} \left[w_{x} (x)\phi(x)\phi(x)^{\top}\right]
        = \Phi^{\top} D \Phi = \Phin^{\top} \Phin
    \end{align*}
    Next we find the matrix version of $\aug{\phi}$ using $\Abar[\bar{x}, x] = w_{\bar{x}, x}$ and the following sequence of equalities.
    \begin{align*}
        \aug{\phi}(\bar{x})
        = \ex_{x\sim \gA(\cdot \mid \bar{x})} [\phi(x)]
        = \sum_{x \in \Xaug} w_{x \mid \bar{x}} \phi(x)
        = \frac{1}{w_{\bar{x}}}\sum_{x \in \Xaug} w_{x, \bar{x}} \phi(x)
        = \frac{1}{w_{\bar{x}}}\sum_{x}\Abar[\bar{x}, x] \Phi[x]
    \end{align*}
    Thus the matrix form of $\aug{\phi}$ is $\aug{\Phi} = \Dbar^{-1} \Abar \Phi$.
    Similar to the argument for $\Sigma(\phi)$, we can then write $\Sigma(\aug{\phi})$ as follows
    \begin{align*}
        \Sigma(\aug{\phi})
        &= \aug{\Phi}^{\top} \Dbar \aug{\Phi}
        = (\Dbar^{-1} \Abar \Phi)^{\top} \Dbar (\Dbar^{-1} \Abar \Phi)
        = \Phi^{\top} \Abar^{\top} \Dbar^{-1} \Abar \Phi\\
        &= \Phi^{\top} D^{\half} \left(D^{-\half} \Abar^{\top} \Dbar^{-\half}\right) \left(\Dbar^{-\half} \Abar D^{-\half}\right) D^{\half} \Phi\\
        &=^{(a)} \Phin^{\top} \Abarn^{\top} \Abarn \Phin
    \end{align*}
    where $(a)$ follows from \Cref{eqn:Abarn}.
    Let $MNR^{\top} = \Phin$ be the SVD, so  $P_{\Phin} = M M^{\top}$ and $\Phin^{\top} \Phin = R^{\top} N^{2} R$.
    Using this, we simplify the matrix $\Sigma(\phi)^{-\half}\Sigma(\aug{\phi})\Sigma(\phi)^{-\half}$.
    \begin{align*}
        \Sigma(\phi)^{-\half}\Sigma(\aug{\phi})\Sigma(\phi)^{-\half}
        &= \left(\Phin^{\top}\Phin\right)^{-\half} \Phin^{\top} \Abarn^{\top} \Abarn \Phin \left(\Phin^{\top}\Phin\right)\\
        &= (R N^{2} R^{\top})^{-\half} \left(R N M^{\top}\right) \Abarn^{\top} \Abarn \left(M N R^{\top}\right) (R N^{2} R^{\top})^{-\half}\\
        &= (R N^{-1} R^{\top}) \left(R N M^{\top}\right) \Abarn^{\top} \Abarn \left(M N R^{\top}\right) (R N^{-1} R^{\top})\\
        &= R M^{\top} \Abarn^{\top} \Abarn M R^{\top}
    \end{align*}
    Since $\{\lambda_{i}\}_{i=1}^{\DD}$ are the eigenvalues of $I_{\DD} - \Sigma(\phi)^{-\half}\Sigma(\aug{\phi})\Sigma(\phi)^{-\half}$, $\{1-\lambda_{i}\}_{i=1}^{\DD}$ are the eigenvalues of $\Sigma(\phi)^{-\half}\Sigma(\aug{\phi})\Sigma(\phi)^{-\half} = R M^{\top} \Abarn^{\top} \Abarn M R^{\top}$.
    Thus $\{\sqrt{1-\lambda_{i}}\}_{i=1}^{\DD}$ are the singular values of $R M^{\top} \Abarn^{\top}$ and thus $M^{\top} \Abarn^{\top}$ and thus $M M^{\top} \Abarn^{\top} = P_{\Phin} \Abarn^{\top}$.
    The previous statements are true because multiplication by an orthogonal matrix does not change the singular values.
    Thus $\sqrt{\gamma_{i}} = \sqrt{1 - \lambda_{i}}$, finishing the proof.
\end{proof}


\begin{lemma}
\label{lem:matrix_to_laplacian}
    Let $\Sigma(\cdot)$ be the covariance operator for features and $\aug{\phi}$ denote augmentation averaged representation obtained from $\phi$ (see \Cref{table:notation}).
    Let $\Ln = I - \An$ be the Laplacian of the augmentation graph.
    If the features $\phi$ are full rank, then the eigenvalues of $I_{\DD} - \Sigma(\phi)^{-\half}\Sigma(\aug{\phi})\Sigma(\phi)^{-\half}$ are the same as the eigenvalues of $\Ln$.
\end{lemma}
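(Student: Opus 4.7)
The strategy is to apply \Cref{lem:features_to_matrix} and observe that the full rank assumption on $\phi$ trivializes the projector $P_{\Phin}$ that distinguishes the two spectra. Concretely, by \Cref{lem:features_to_matrix}, the eigenvalues of $I_{\DD} - \Sigma(\phi)^{-\half}\Sigma(\aug{\phi})\Sigma(\phi)^{-\half}$ are $\{1-\gamma_i\}_{i=1}^{\DD}$, where $\sqrt{\gamma_i}$ are the singular values of $P_{\Phin}\Abarn^{\top}$. I want to show that, under full rankness, these $\gamma_i$ coincide with the eigenvalues of $\An$; then $1-\gamma_i$ are exactly the eigenvalues of $\Ln = I - \An$.

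First, I interpret ``$\phi$ is full rank'' as $\DD = |\Xaug|$ with $\Phi \in \R^{\Xaug \times \DD}$ invertible (equivalently, $\Phin = D^{\half}\Phi$ invertible, since $D$ is positive diagonal). Under this assumption, the column space of $\Phin$ is all of $\R^{\Xaug}$, and hence the orthogonal projector satisfies $P_{\Phin} = I_{|\Xaug|}$. Therefore $P_{\Phin}\Abarn^{\top} = \Abarn^{\top}$, so the nonzero singular values $\sqrt{\gamma_i}$ of $P_{\Phin}\Abarn^{\top}$ agree (with the same multiplicities) with those of $\Abarn$.

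Next, by \Cref{lem:Abar_to_A}, $\An = \Abarn^{\top}\Abarn$. The eigenvalues of $\Abarn^{\top}\Abarn$ are precisely the squared singular values of $\Abarn$, so $\{\gamma_i\}_{i=1}^{\DD}$ are exactly the eigenvalues of $\An$ (listed with multiplicity and in matching order). Consequently $\{1 - \gamma_i\}_{i=1}^{\DD}$ are the eigenvalues of $\Ln = I - \An$, which matches the eigenvalues produced by \Cref{lem:features_to_matrix}. This completes the argument.

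The only subtle point, and the main thing to verify carefully, is that the dimensions line up so that we are comparing two lists of eigenvalues of the same length: $I_{\DD} - \Sigma(\phi)^{-\half}\Sigma(\aug{\phi})\Sigma(\phi)^{-\half}$ is $\DD \times \DD$, while $\Ln$ is $|\Xaug| \times |\Xaug|$, and the full rank hypothesis forces $\DD = |\Xaug|$ (otherwise $\Phi$ cannot have rank $|\Xaug|$ by a dimension count, since $\Phi \in \R^{\Xaug \times \DD}$). This ensures both spectra have the same cardinality and the matching $\lambda_i = 1-\gamma_i$ is a bijection, not just an inclusion.
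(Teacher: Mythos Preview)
Your proposal is correct and follows essentially the same route as the paper: invoke \Cref{lem:features_to_matrix} to identify the eigenvalues with $1-\gamma_i$ for $\sqrt{\gamma_i}$ the singular values of $P_{\Phin}\Abarn^{\top}$, use the full-rank assumption to get $P_{\Phin}=I$, and then apply \Cref{lem:Abar_to_A} to conclude that the $\gamma_i$ are the eigenvalues of $\An$. Your explicit check that $\DD=|\Xaug|$ so the two spectra have the same cardinality is a nice addition that the paper leaves implicit.
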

\begin{proof}
    Note from \Cref{lem:Abar_to_A} that the normalized adjacency matrix can be rewritten as $\An = \Abarn^{\top} \Abarn$.
    Also from \Cref{lem:features_to_matrix}, we can imply that the eigenvalues of $I - P_{\Phin} \Abarn^{\top} \Abarn P_{\Phin}$ are the same as the eigenvalues of $I_{\DD} - \Sigma(\phi)^{-\half}\Sigma(\aug{\phi})\Sigma(\phi)^{-\half}$.
    Since $\phi$ is full rank, so is the matrix $\Phin$, thus $P_{\Phin} = I$.
    So $I - P_{\Phin} \Abarn^{\top} \Abarn P_{\Phin} = I - \Abarn^{\top} \Abarn = I - \An = \Ln$; this completes the proof.
\end{proof}


We are now ready to present our main result.

\begin{reptheorem}{thm:low_rank_result}
    Let $\Sigma(\cdot)$ be the covariance operator for features and $\aug{\phi}$ denote augmentation averaged representation obtained from $\phi$ (see \Cref{table:notation}).
    Let $\lambda_{1},\cdots,\lambda_{d}$ be the eigenvalues of $I_{d} - \Sigma(\phi)^{-\half}\Sigma(\aug{\phi})\Sigma(\phi)^{-\half}$ in increasing order, then for $\dd \le \DD$, any representation $f\in\gF_{\phi}$, will satisfy
    \begin{align}
        \Lclf(f; \gstar) \le \min\limits_{\substack{1 \le \dd' \le \dd}} \left\{\frac{\min\limits_{g\in\{\pm 1\}^{\Xaug}} 4\left(2\Lcons(g, \gstar) + \sqrt{\Lreg(\phi; g)}\right)}{\lambda_{\dd'+1}} +  \frac{2\dd'\left(\Lspec(f) - \inf\limits_{f^{\star}\in\gF_{\phi}}\Lspec(f^{\star})\right)}{(1-\lambda_{\dd'}) (\lambda_{\dd+1}-\lambda_{\dd'})^{2}}\right\}
    \end{align}
    where $\Lcons$ is defined in \Cref{defn:inconsitent_pred} and $\Lreg$ in \Cref{defn:clf_reg_def}.
\end{reptheorem}

\begin{proof}
    We first sketch an outline of the proof and how different lemmas will be used to prove the final result.
    We will use matrix versions of distributions and functions from \Cref{table:notation} throughout the proof.
    The following are the main steps:
    \begin{enumerate}[leftmargin=*]
        \item (Matrix factorization) 
        The spectral contrastive loss is shown to be equivalent to a matrix factorization objective as in \citet{haochen2021provable}.
        For representations in the class $\gF_{\phi}$, \Cref{lem:matrix_form_phi} shows that the problem of contrastive learning is reduced to matrix factorization of a projected adjacency matrix $P_{\Phin} \An P_{\Phin}$ through the objective $\Lspec(F) = \left\|P_{\Phin}\Abarn^{\top}\Abarn P_{\Phin} - \Fn \Fn^{\top}\right\|_{F}^{2} + C$, where $\An$ is the normalized matrix corresponding to augmentation distribution $\gA$ and $\Fn$ is the normalized matrix for representation function $f$ (refer \Cref{table:notation}).
        Thus the spectral contrastive loss $\Lspec$ is attempting to find a rank $\dd$ approximation for $P_{\Phin} \Abarn^{\top}$.

        \item ($\epsilon$-optimal solutions) If $P_{\Phin} \Abarn^{\top} = U S V^{\top}$ is the singular value decomposition, then any $\epsilon$-optimal representation $f$ (and corresponding $\Fn$), i.e. $\Lspec(f) - \inf\limits_{f^{\star}\in\gF_{\phi}}\Lspec(f^{\star})\le\epsilon$, can be shown (\Cref{lem:eps_opt_cont}) to capture most of the signal for the top $\dd'$ directions of $P_{\Phin} \Abarn^{\top}$, i.e. $\left\|P_{\Fn}^{\perp} U_{:\dd'}\right\|_{F} = \gO(\epsilon)$ is small.

        \item (Connecting to downstream) Top $\dd'$ directions of $P_{\Phin} \Abarn^{\top}$ can be shown to capture a lot of the mass of $\gstar$ if the features $\phi$ and augmentation distribution $\gA$ are ``nice'' enough, as quantified by \Cref{lem:top_k_downstream} that upper bounds $\left\|V_{\dd':} ~\gstarn\right\|^{2}$, in conjunction with \Cref{lem:gstar_aligns_Phi} which quantifies these nice properties of $\gA$ and features $\phi$.

        \item (Wrapping up) Both of the above steps will have upper bounds that depend on the singular values of $P_{\Phin} \Abarn$.
        Relating the singular values of $P_{\Phin} \Abarn$ with the eigenvalues of $\left(I_{d} - \Sigma(\phi)^{-\half}\Sigma(\aug{\phi})\Sigma(\phi)^{-\half}\right)$ through \Cref{lem:features_to_matrix} completes the proof.
    \end{enumerate}
    
    \textbf{Step 1:}
    We can rewrite the contrastive loss in matrix forms using \Cref{lem:matrix_form}.
    \begin{align}
        \Lspec(f)
        = \Lspec(\Fn)
        &= \left\|\An - \Fn\Fn^{\top}\right\|_{F}^{2} - \left\|\An\right\|_{F}^{2}
        = \left\|\Abarn^{\top}\Abarn - \Fn\Fn^{\top}\right\|_{F}^{2} - \left\|\Abarn^{\top}\Abarn\right\|_{F}^{2}
    \end{align}
    For representation $F\in\gF_{\phi}$, we can write it as $F = \Phi W$, thus giving $\Fn = D^{\half} F = D^{\half} \Phi W = \Phin W$.
    Note that $P_{\Phin} = \Phin\Phin^{\dagger}$ is the projection matrix of column space of $\Phin$; then we have $\Fn = P_{\Phin} \Fn$.
    From \Cref{lem:matrix_form_phi}, we know that $\Lspec(F) = \left\|P_{\Phin}\Abarn^{\top}\Abarn P_{\Phin} - \Fn \Fn^{\top}\right\|_{F}^{2} + C$.

    Thus from this we see that the contrastive learning is aiming to learn a good rank $\dd$ decomposition of the matrix $P_{\Phin} \An P_{\Phin} = P_{\Phin} \Abarn^{\top} \Abarn P_{\Phin}$.
    This is a similar to the formulation in \citet{haochen2021provable} where the matrix $\An$ is being factorized instead.
    The classical result on low-rank approximation of matrices tells us that the minimizer $\Fn$ will span the top $\dd$ singular of $P_{\Phin} \Abarn$.

    \textbf{Step 2:}
    Let $P_{\Phin} \Abarn = U S V^{\top}$ be the singular value decomposition, with $S = \text{diag}(\sqrt{\gamma_{1}}, \dots, \sqrt{\gamma_{d}})$ being the singular values in decreasing order.
    Then we know that the optimal solution $\Fn^{\star}$ be will $U_{:\dd} S_{:\dd} R$ for any orthogonal matrix $R$.
    Note that $\Fn^{\star} \in \gF_{\Phin}$ since the matrix $P_{\Phin} \Abarn$ being decomposed is in the span of $\Phin$.
    This argument can be extended to $\epsilon$-optimal representation $f$ (or matrix $F$) by invoking \Cref{lem:eps_opt_cont}, which gives us that
    \begin{align}
        \left\|P^{\perp}_{\Fn} U_{:\dd'}\right\|_{F}^{2} \le \frac{\epsilon \dd'}{\gamma_{\dd'}^{2} - \gamma_{\dd+1}^{2}} \le \frac{\epsilon  \dd'}{\left(\gamma_{\dd'} - \gamma_{\dd+1}\right)^{2}}
    \end{align}
    This tells us that being close to optimality ensures that the representation captures most of the top $\dd'$ singular directions of $U$ and thus $P_{\Phin} \Abarn$, whenever $\dd' \le \dd$.
    Note that the gap $\gamma_{\dd'} - \gamma_{\dd+1}$ in singular values determines how the suboptimality affects the magnitude of ``signal'' captured.

    \textbf{Step 3:}
    Given that $\epsilon$-optimal solutions can capture the top directions of $P_{\Phin} \Abarn$ (or $U$), we now focus our attention on what this means for downstream performance.
    We invoke \Cref{lem:matrix_form} again to upper bound the downstream classification error $\Lclf$ (refer \Cref{defn:clf_reg_def}) as
    \begin{align}
        \Lclf(f; \gstar)
        &\le \Lreg(f; \gstar)
        \le \Lreg(\Fn)
        = \inf_{w\in\R^{\dd}} \left\|\Abarn \Fn w - \gstarn\right\|_{2}^{2}\\
        &\le^{(a)} \inf_{w\in\R^{\dd}} \left\|\Abarn P_{\Phin} \Fn w - \gstarn\right\|_{2}^{2}
        \le^{(b)} \inf_{w\in\R^{\dd}} \left\|\Abarn P_{\Phin} P_{\Fn} w - \gstarn\right\|_{2}^{2}\\
        &\le^{(c)} \inf_{w\in\R^{\dd}} \left\|V S U^{\top} P_{\Fn} w - \gstarn\right\|_{2}^{2}
    \end{align}
    where $(a)$ follows from the fact that $\Fn \in \gF_{\Phin}$ and thus $P_{\Phin} \Fn = \Fn$, $(b)$ is true since for any $w\in\R^{\dd}$, there exists $w'\in\R^{\dd}$ such that $\Fn w = P_{\Fn} w'$, and $(c)$ uses the singular value decomposition of $P_{\Phin} \Abarn$.
    Thus the downstream error is upper bounded by a quantity that depends on how much of $\gstarn$ is not captured by the columns of $\Abarn P_{\Phin} P_{\Fn} = V S U^{\top} P_{\Fn}$.
    We show this quantity is small, by arguing that the top $\dd'$ directions of $V$ captures enough component of $\gstarn$ \Cref{lem:gstar_aligns_Phi}, and that an $\epsilon$-optimal representation will capture a large enough portion of the top $\dd'$ directions.
    Note that for any matrix $B\in\R^{n\times n}$, $B_{:m}\in\R^{m\times n}$ denotes the first $m$ columns of $B$ and $B_{m:}\in\R^{n-m\times n}$ denotes that last $m$ columns of $B$.
    The calculation is as follows
    \begin{align*}
        \Lclf(f; \gstar)
        &\le \inf_{w\in\R^{\dd}} \left\|V S U^{\top} P_{\Fn} w - \gstarn\right\|_{2}^{2}\\
        &= \inf_{w\in\R^{\dd}} \left\|V S U^{\top} P_{\Fn} w - V_{:\dd'} V_{:\dd'}^{\top}\gstarn + V_{\dd':} V_{\dd':}^{\top}\gstarn\right\|_{2}^{2}\\
        &\le^{(a)} 2 \left(\inf_{w\in\R^{\dd}} \left\|V S U^{\top} P_{\Fn} w - V_{:\dd'} V_{:\dd'}^{\top}\gstarn\right\|^{2} + \left\|V_{\dd':} V_{\dd':}^{\top}\gstarn\right\|_{2}^{2}\right)\\
        &= 2 \inf_{w\in\R^{\dd}} \left\|V_{:\dd'} S_{:\dd'} U_{:\dd'}^{\top} P_{\Fn} w - V_{:\dd'} V_{:\dd'}^{\top}\gstarn\right\|^{2} + 2\left\|V_{\dd':} V_{\dd':}^{\top}\gstarn\right\|_{2}^{2}\\
        &= 2 \inf_{w\in\R^{\dd}} \left\|S_{:\dd'} U_{:\dd'}^{\top} P_{\Fn} w - V_{:\dd'}^{\top}\gstarn\right\|^{2} + 2\left\|V_{\dd':}^{\top}\gstarn\right\|_{2}^{2}\\
        &\le^{(b)} 2 \left\|S_{:\dd'} U_{:\dd'}^{\top} P_{\Fn} U_{:\dd'} S_{:\dd'}^{-1} V_{:\dd'}^{\top} \gstarn - V_{:\dd'}^{\top}\gstarn\right\|^{2} + 2\left\|V_{\dd':}^{\top}\gstarn\right\|_{2}^{2}\\
        &= 2 \left\|S_{:\dd'} U_{:\dd'}^{\top} P_{\Fn}^{\perp} U_{:\dd'} S_{:\dd'}^{-1} V_{:\dd'}^{\top} \gstarn\|^{2} + 2\|V_{\dd':}^{\top}\gstarn\right\|_{2}^{2}\\
        &\le^{(c)} 2 \left\|S_{:\dd'}\right\|_{2}^{2} \left\|P_{\Fn}^{\perp} U_{:\dd'}\right\|_{2}^{2} \left\|S_{:\dd'}^{-1}\right\|_{2}^{2} \left\|\gstarn\right\|^{2} + 2\left\|V_{\dd':}^{\top}\gstarn\right\|_{2}^{2}\\
        &\le^{(d)} \frac{2\left\|P_{\Fn}^{\perp} U_{:\dd'}\right\|_{F}^{2}}{\gamma_{\dd'}} + 2\left\|V_{\dd':}^{\top}\gstarn\right\|_{2}^{2}
    \end{align*}
    where $(a)$ follows from the inequality $\|a + b\|^{2} \le 2(\|a\|^{2} + \|b\|^{2})$, $(b)$ follows by a picking a specific value $w = U_{:\dd'}S^{-1}_{:\dd'}V_{:\dd'}\gstarn$, $(c)$ follows from multiple applications of Cauchy-Schwarz inequality and that $\|V_{:\dd'}^{\top} \gstarn\| \le \|\gstarn\|$ and $(d)$ follows from $\|S_{:\dd'}\|_{2} \le 1$ and $\|S_{:\dd'}^{-1}\|_{2} \le \gamma_{\dd'}^{-1}$.

    The first term is upper bounded in step 2 already by the sub-optimality of $f$, while the second term is upper bounded using \Cref{lem:top_k_downstream}.
    Plugging these in, we get
    \begin{align*}
        \Lclf(f; \gstar)
        \le \frac{2\epsilon \dd'}{\gamma_{\dd'} (\gamma_{\dd'} - \gamma_{\dd+1})^{2}} + \frac{2(1 - \left\|P_{\Phin} \Abarn^{\top} \gstarn\right\|^{2})}{1 - \gamma_{\dd'+1}}
        \le \frac{2\epsilon \dd'}{\gamma_{\dd'} (\gamma_{\dd'} - \gamma_{\dd+1})^{2}} + \frac{4(1 - \left\|P_{\Phin} \Abarn^{\top} \gstarn\right\|)}{1 - \gamma_{\dd'+1}}
    \end{align*}
    where for the last inequality we use that $1-x^{2} = (1-x)(1+x) \le 2(1-x)$ for $x\in[0,1]$.
    This is further simplified using \Cref{lem:gstar_aligns_Phi} to
    \begin{align*}
        \Lclf(f; \gstar) \le \frac{2\epsilon \dd'}{\gamma_{\dd'} (\gamma_{\dd'} - \gamma_{\dd+1})^{2}} + \frac{4\left(2\Lcons(g, \gstar) + \sqrt{\Lreg(\phi; g)}\right)}{1 - \gamma_{\dd'+1}}
    \end{align*}
    
    \textbf{Step 4:}
    Finally the singular values $\gamma_{i}$ are linked the eigenvalues $\lambda_{i}$ in the theorem statement through \Cref{lem:features_to_matrix}.
    Specifically, we have $\gamma_{i} = 1-\lambda_{i}$, giving us the final result
    \begin{align*}
        \Lclf(f; \gstar) \le \frac{4\left(2\Lcons(g, \gstar) + \sqrt{\Lreg(\phi; g)}\right)}{\lambda_{\dd'+1}} +  \frac{2\epsilon \dd'}{(1-\lambda_{\dd'}) (\lambda_{\dd+1}-\lambda_{\dd'})^{2}}
    \end{align*}
    where $\epsilon$ is the suboptimality $\Lspec(f) - \inf\limits_{f^{\star}\in\gF_{\phi}}\Lspec(f^{\star})$.
    The above inequality holds for every $g\in\{\pm1\}^{\Xaug}$ and for every $\dd'\in[\dd]$.
    Taking a $\min$ over both completes the proof.
    This completes the proof.

\end{proof}

\subsection{Discussion of upper bound}
\label{sec:apx_discussion}

We dissect our result from \Cref{thm:low_rank_result}, and compare it to the result from \citet{haochen2021provable}, presented in \Cref{thm:haochen_result}.
For the representation $f\in\gF_{\phi}$, downstream performance is good if
\begin{itemize}[leftmargin=*]
    \item \textit{$\Lspec(f) - \inf_{f^{\star}\in\gF_{\phi}}\Lspec(f^{\star})$ is small}: The contrastive loss of $f$ is close to the optimal loss in $\gF_{\phi}$, even if best in class is far from the absolute minimizer.
    The equivalent term in \Cref{thm:haochen_result} was the global sub-optimality of $f$, i.e. $\Lspec(f) - \inf_{f^{\star}} \Lspec(f^{\star})$.
    \item \textit{$2\Lcons(g, \gstar) + \sqrt{\Lreg(\phi; g)}$ is small}: This happens if there exists a predictor $g\in\{\pm1\}^{\Xaug}$ on augmentations that is expressible by the features $\phi$ and is sufficiently consistent with the ground-truth labels $\ystar$ on inputs.
    Note that if augmentation distributions overlap across classes, then $\Lcons(g,\ystar)$ cannot be made small.
    In fact, $\Lcons(g,\ystar)$ is of the same order as $\alpha$ from \Cref{thm:haochen_result}.
    The extra condition we need here is that $\sqrt{\Lreg(\phi; g)}$ is small, i.e. despite $\phi$ not being full rank, it can roughly express a function that is consistent with ground-truth labels.
    \item \textit{Eigenvalues $\lambda_{\dd'}$ and eigen-gaps $\lambda_{\dd+1} - \lambda_{\dd'}$ are not too small}: This is very similar to \Cref{thm:haochen_result}, except there the eigenvalues were of the normalized Laplacian (that only depended on distributions), while here the eigenvalues also depend on $\phi$ and thus the function class.
    Intuitively these values are large if the augmentation graph is dense {\em in the view of the features $\phi$}.
\end{itemize}


\section{Lower bounds for (approximately) disjoint augmentations}
\label{sec:lower_bound_proof}

Here we prove that the global minimizer of the contrastive objective can achieve trivial downstream performance when the augmentation distributions do not overlap. 

\begin{theorem}
Let $N \in \mathbb{N}$ be given and let $d \in \mathbb{N}$ satisfy $3 \leq d \leq c N/\log_2(N)$ for a universal constant $c>0$. Let $\bar{\mathcal{X}}$ be a set of $|\bar{\mathcal{X}}|=N$ instances, $\ystar \in \{\pm 1\}^N$ be any labeling function with $\sum_i y_i^\star = 0$, and let $\mathcal{D}$ be the uniform distribution over $\bar{\mathcal{X}}$. Suppose that the augmentation distribution $\mathcal{A}(\cdot \mid \bar{x})$ is such that $\forall \bar{x}, \bar{x}' \in \bar{\mathcal{X}}: \mathrm{supp}(\mathcal{A}(\cdot \mid \bar{x})) \cap \mathrm{supp}(\mathcal{A}(\cdot \mid \bar{x}')) = \emptyset$. Additionally assume either
\begin{itemize}
    \item \textbf{Unnormalized case:} representations are unconstrained; or
    \item \textbf{Normalized case:} representations are constrained (to any set) and there is a fixed source of randomness $W \in \Delta(\mathcal{W})$ and mapping $T: \bar{\mathcal{X}}\times\mathcal{W} \to \mathcal{X}$ that is invertible in $w$ for any $\bar{x}$ such that $x \sim \mathcal{A}(\cdot \mid \bar{x}) \equiv w \sim W, x = T(\bar{x},w)$.
\end{itemize}
Then for any representation $f^\star: \mathcal{X} \to \RR^d$ there exists a representation $\hat{f}: \mathcal{X} \to \mathbb{R}^d$ such that:
\begin{align*}
    \Lcont(\hat{f}) \leq \Lcont(f^\star), \quad \textrm{ and } \quad \Lclf(\hat{f}) = \min_{w \in \mathbb{R}^d} \frac{1}{N} \sum_{i=1}^n \mathbf{1}\{ \mathrm{sign}(w^\top \hat{f}(\bar{x}_i)) \ne y_i^\star\} \geq \frac{1}{2} - O(\sqrt{d \log(N)/N})
\end{align*}
\end{theorem}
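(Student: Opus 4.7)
The plan is to construct $\hat f$ by effectively relabeling inputs through a permutation $\pi$ of $\bar{\mathcal X}$ at the representation level—possible because disjointness makes the augmentation supports $S_{\bar x} := \mathrm{supp}(\mathcal A(\cdot\mid\bar x))$ addressable—and then use a VC-type counting argument to choose $\pi$ so the induced downstream labeling resists every linear classifier.

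First, I will reduce to a fixed point cloud. Let $v_{\bar x} := \aug{f^\star}(\bar x) = \E_{x\sim\mathcal A(\cdot\mid\bar x)}[f^\star(x)]$. In the unnormalized case, set $\hat f(x) := v_{\pi(\bar x)}$ for $x \in S_{\bar x}$. Both $\Lsimclr$ and $\Lspec$ take the form $\E[\psi(f(x)^\top f(x^+), \{f(x)^\top f(x^-_i)\})]$ with $\psi$ \emph{jointly} convex in its $n{+}1$ arguments (for SimCLR, $\psi(u,v_1,\dots,v_n)=\log(e^u+\sum_i e^{v_i})-u$ is convex as a sum of log-sum-exp and a linear term; $\Lspec$ gives a quadratic). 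Conditioning on $(\bar x, \bar x^-_{1:n})$ and applying Jensen to the conditionally independent inner draws—using $\E_{x,x^+\mid \bar x}[f^\star(x)^\top f^\star(x^+)] = \|v_{\bar x}\|^2$—gives $\Lcont(\hat f_{\mathrm{id}}) \le \Lcont(f^\star)$ for $\pi=\mathrm{id}$; since $\bar x$ is uniform and $\pi$ is a permutation, a change of variables extends this to $\Lcont(\hat f) \le \Lcont(f^\star)$ for every $\pi$. In the normalized case averaging escapes the constraint set, so I instead define $\phi(T(\bar x,w)):= T(\pi(\bar x), w)$: this is well-defined by disjointness and $w$-invertibility of $T$, and pushes the joint law of $((x,x^+), x^-_{1:n})$ to itself because $\pi$ permutes the uniform distribution on $\bar{\mathcal X}$; thus $\hat f := f^\star\circ\phi$ stays in the same set and satisfies $\Lcont(\hat f)=\Lcont(f^\star)$ exactly. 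In both constructions $\aug{\hat f}(\bar x_i) = v_{\pi(i)}$, so downstream error equals $\min_w \tfrac1N\sum_j \mathbf 1\{\mathrm{sign}(w^\top v_j)\ne y^\star_{\pi^{-1}(j)}\}$, i.e., linear classification of the \emph{fixed} point cloud $\{v_j\}_{j=1}^N$ under the balanced labeling $z_j := y^\star_{\pi^{-1}(j)}$.

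Next, I choose $\pi$ (equivalently, the balanced string $z$) by counting. Sauer–Shelah gives at most $R \le 2(eN/d)^d$ realizable sign patterns of $\{v_j\}$ over $w\in\mathbb R^d$, so the number of balanced labelings within Hamming distance $\epsilon N$ of some realizable pattern is at most $R\binom{N}{\lfloor\epsilon N\rfloor} \le 2(eN/d)^d \cdot 2^{NH(\epsilon)}$, while the total number of balanced labelings is $\binom{N}{N/2} \ge 2^N/(N+1)$. Setting $\epsilon = 1/2 - C\sqrt{d\log N/N}$ and using $H(1/2-\delta)\le 1 - 2\delta^2/\ln 2$, the inequality $R\binom{N}{\epsilon N}<\binom{N}{N/2}$ reduces to $O(d\log N) < (2C^2/\ln 2)\, d\log N$, which holds for $C$ a sufficiently large absolute constant whenever $d \le cN/\log_2 N$. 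Hence some balanced $z$ (and therefore some $\pi$) cannot be $\epsilon$-realized, giving $\Lclf(\hat f)\ge 1/2 - O(\sqrt{d\log N/N})$, as required.

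The main obstacle I anticipate is the Jensen step in the unnormalized case: it is critical that $\psi$ be \emph{jointly} convex in all $n{+}1$ inner products (they share the common variable $x$), not merely coordinatewise, so that a single application of Jensen can absorb the full inner conditional expectation; fortunately the log-sum-exp structure delivers exactly this. A secondary bookkeeping point is the counting step, where it matters that $y^\star$ is balanced—so $z$ ranges over the $\binom{N}{N/2} \ge 2^N/(N{+}1)$ balanced strings rather than all $2^N$ labelings—in order for the realizable-ball count to be dominated.
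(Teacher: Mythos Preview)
Your proof is correct and follows essentially the same three-step approach as the paper: (i) reduce to mean embeddings via Jensen (unnormalized case) or relabel augmentations via the seed map $T$ (normalized case), (ii) observe permutation invariance of the contrastive loss under uniform $\Dorig$, and (iii) use Sauer--Shelah plus Hamming-ball entropy bounds against the $\binom{N}{N/2}$ balanced labelings to find a bad permutation. One minor notational slip: the Hamming ball of radius $\epsilon N$ has size $\sum_{k\le \epsilon N}\binom{N}{k}$ rather than the single term $\binom{N}{\lfloor\epsilon N\rfloor}$, but your subsequent bound $2^{NH(\epsilon)}$ is valid for the full sum (when $\epsilon\le 1/2$), so the argument goes through unchanged.
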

\begin{proof}
    Let us start by considering the unnormalized case. Let $f^\star: \mathcal{X} \to \mathbb{R}^d$ be any representation function. The proof consists of three steps:
    \begin{enumerate}
        \item Show that every instance $\bar{x} \in \bar{\mathcal{X}}$ has an embedding $v_{\bar{x}}$, such that if we embed $\bar{x}$ and all of its augmentations to $v_{\bar{x}}$ then we obtain a new embedding function $\hat{f}$ for which $\Lcont(\hat{f})$ is no worse than $\Lcont(f^\star)$. 
        \item Let $\mathcal{V} := \{ v_{\bar{x}} : \bar{x} \in \bar{\mathcal{X}}\}$. Show that for any bijection $\pi: \mathcal{V} \to \mathcal{V}$, we have $\Lcont(\pi\circ \hat{f}) = \Lcont(\hat{f})$. In other words, if we apply a permutation to the embeddings of $\hat{f}$ we do not change the contrastive loss. 
        \item Show that there exists some permutation $\pi$ such that $\pi\circ \hat{f}$ has very high downstream error rate.  
    \end{enumerate}
    
\paragraph{Part 1.}
Let us first show that embeds all augmentations of an instance $\bar{x}$ identically only lowers the contrastive loss. By convexity, we have that
\begin{align*}
    \Lcont(f) &:= \E_{\bar{x}_1, \bar{x}_2 \sim \mathcal{D}} \E_{(x,x^+) \sim \mathcal{A}(\cdot \mid \bar{x}_1), x^- \sim \mathcal{A}(\cdot \mid \bar{x}_2)} \left[ -\log \left(\frac{ \exp(f(x)^\top f(x^+)) }{\exp(f(x)^\top f(x^+)) + \exp(f(x)^\top f(x^-))} \right)\right]\\
    & \geq \E_{\bar{x}_1, \bar{x}_2 \sim \mathcal{D}} \left[ -\log \left(\frac{ \exp(g(\bar{x}_1)^\top g(\bar{x}_1)) }{\exp(g(\bar{x}_1)^\top g(\bar{x}_1)) + \exp(g(\bar{x}_1)^\top g(\bar{x}_2))} \right)\right]
\end{align*}
where $g(\bar{x}) = \mathbb{E}_{x \sim \mathcal{A}(\cdot \mid \bar{x})}[f(x)]$ is the mean embedding for $f$. Note that this inequality is strict if $f$ does not embed all augmentations of an instance identically. If $f^\star$ did not embed augmentations identically, we could replace $f^\star$ with the mean embedding $\hat{f}$ and reduce the contrastive loss. Thus there exists a set $\mathcal{V} := \{ v_{\bar{x}} : \bar{x} \in \bar{\mathcal{X}}\}$ such that for each $\bar{x}$, we can assume that $\hat{f}$ embeds $\bar{x}$ and all of its augmentations as $v_{\bar{x}}$.

The same argument also hold for the spectral contrastive loss defined in \Cref{eqn:spectral_loss}, since it is also convex in the inner products.

\paragraph{Part 2.}
Let us rename the embedding vectors $\{v_{\bar{x}}\}_{\bar{x} \in \bar{\mathcal{X}}}$ to $\mathcal{V} := \{v_i\}_{i=1}^N$. Then we can rewrite the objective as
\begin{align*}
    \Lcont(\hat{f}) = \frac{1}{N^2} \sum_{i,j} - \log\left( \frac{\exp(v_i^\top v_i)}{\exp(v_i^\top v_i) + \exp(v_i^\top v_j) }\right)
\end{align*}
Any bijection from $\mathcal{V}$ to $\mathcal{V}$ can be equivalently viewed as a permutation $\pi: [N] \to [N]$. We claim that the above objective is invariant to permuting the indices. This is easy to see, since all pairs $(i,j)$ appear with equal weighting in the above expression. 
Thus we see that $\Lcont(\pi\circ \hat{f}) = \Lcont(\hat{f})$.

\paragraph{Part 3.}
In the last step of the proof, we use a combinatorial argument to show that there exists some permutation $\pi$ with high error rate. First, note that the embedding function $\pi \circ \hat{f}$ embeds $\bar{x}_i$ and all of its augmentations to $v_{\pi(i)}$.
Thus, the downstream loss when using linear function $w$ is
\begin{align*}
    \frac{1}{N} \sum_{i=1}^n \mathbf{1}\{ \mathrm{sign}(w^\top v_{\pi(i)}) \ne y_i^\star\} = \frac{1}{N} \sum_{i=1}^n \mathbf{1}\{ \mathrm{sign}(w^\top v_{i}) \ne y_{\pi^{-1}(i)}^\star\}
\end{align*}
So instead of permuting the embeddings $\{v_i\}$, we can equivalently permute the labels $\{y_i^\star\}$. 
Define:
\begin{align*}
    \mathcal{Y} &:= \{ ( y_{\pi(i))}^\star )_{i=1}^N : \pi \textrm{ is a permutation } \} \\
    \mathcal{W} &:= \left\{ (\mathrm{sign}(w^\top v_i))_{i=1}^N : w \in \mathbb{R}^d \right\}\\
    \mathcal{Z}_{\tau} &:= \{x \in \{\pm 1\}^N : \exists b\in \mathcal{W} \textrm{ s.t. } \frac{1}{N}\sum_{i=1}^N \mathbf{1}\{x_i\ne b_i\} \leq \tau  \}.
\end{align*}
Here $\mathcal{Y}$ are the possible labellings we can generate by permuting the indices (which as we discussed is equivalent to permuting the embedding vectors). $\mathcal{W}$ is the labels we can generate via a linear function of the embeddings. Finally $\mathcal{Z}_{\tau}$ is the set of labellings that are $\tau$ close to the ones that our embeddings can generate. The statement of the theorem is equivalent to $\mathcal{Y} \setminus \mathcal{Z}_{\tau} \ne \emptyset$ for some large $\tau$, which means that there is some permutation of the labels that is far from all linear functions of our embeddings. 

We prove this via a combinatorial argument. First, since $\sum_i y_i^\star = 0$ (meaning that the classes are balanced), we have $|\mathcal{Y}| = {N \choose N/2} \geq 2^{N/2}$. On the other hand, by Sauer's lemma,
\begin{align}
    \left|\mathcal{W}\right| \leq \sum_{i=0}^d {N \choose i}, \quad \textrm{hence} \quad 
    |\mathcal{Z}_{\tau}| \leq \sum_{i=0}^d {N \choose i} \cdot \sum_{i=0}^{N\tau} {N \choose i}
\end{align}
Let $H(p) := p \log_2(1/p) + (1-p)\log_2(1/(1-p))$ be the binary entropy function, for $p \in [0,1]$. Standard bounds on the volume of Hamming cubes \cite{cover1999elements} gives that
\begin{align*}
    |\mathcal{Z}_{\tau}| \leq 2^{H(d/N)\cdot N} 2^{H(\tau)\cdot N}.
\end{align*}
We also have
\begin{align*}
    {N \choose N/2} \geq 2^{H(1/2)\cdot N}\cdot \frac{2}{eN} \geq 2^{N - \log_2(eN/2)}
\end{align*}
Therefore, a sufficient condition is
\begin{align*}
    H(d/N) + H(\tau) <= 1 - \frac{\log_2(eN/2)}{N}
\end{align*}
To proceed, we upper bound the entropy functional on the left hand side using the taylor expansion. For the $H(d/N)$ term we use a first order expansions around $p=1/N$, which, by concavity, yields an upper bound. 
\begin{align*}
    H(d/N) &\leq H(1/N) + \left.\frac{\partial H(x)}{\partial x}\right|_{x=1/N} \left( d/N - 1/N \right)\\
    & = 1/N \log_2(N) + (1-1/N) \log_2(N/(N-1)) - \log_2\left( \frac{1/N}{1-1/N}\right)\cdot\left(d/N - 1/N\right)\\
    & = \log_2(N/(N-1)) + d/N \cdot \log_2(N-1)\\
    & \leq 2/N + d\log_2(N)/N.
\end{align*}
The last inequality holds for $N \geq 2$. For $H(\tau)$ we have the upper bound
\begin{align*}
    H(\tau) = H(1/2) - \frac{4}{\ln(2)} \cdot \frac{1}{2} (1/2 - \tau)^2 + \frac{H^{(3)}(\xi)}{6} (\tau-1/2)^3 \leq 1 - \frac{2}{\ln(2)} (1/2 - \tau)^2,
\end{align*}
Here the first equality is Taylor's remainder theorem where $\xi \in [\tau,1/2]$ and the second holds because the third derivative is non-negative on the interval $[0,1/2]$ and we will take $\tau \leq 1/2$. Putting these together, a sufficient condition is
\begin{align*}
    & \frac{2}{N} + \frac{ d\log_2(N)}{N} + 1 - \frac{2}{\ln(2)} (1/2 - \tau)^2 \leq 1 - \frac{\log_2(eN/2)}{N}\\
    & \Leftarrow \tau < \frac{1}{2} - \sqrt{ \frac{\ln(2)}{2} \left(\frac{d \log_2(N)}{N} + \frac{2}{N} + \frac{ \log_2(eN/2)}{N}\right)}
\end{align*}
So the error rate is $1/2 - O(\sqrt{d \log(N)/N})$. 

For the normalized case, the proof is structurally very similar, except that we cannot rely on the argument in part 1 to show that $f^\star$ embeds $\bar{x}$ and all of its augmentations to the same vector $v_{\bar{x}}$. However, we only use the mean vector $v_{\bar{x}} := \E_{x \sim \mathcal{A}(\cdot \mid \bar{x})}[f^\star(x)]$ in subsequent steps of the proof and we will see that we can remap embeddings $f^\star(x)$ so that we (a) preserve the NCE loss of $f^\star$ and (b) permute all of the mean vectors $v_{\bar{x}}$. 

Let us number the original inputs $\bar{x}_1,\ldots,\bar{x}_N$ and let $\pi: [N] \to [N]$ be any permutation. Let $\mathcal{W}$ be the choices for the random seed and for input $\bar{x}_i$ let $x_{i,w} = T(\bar{x}_i,w)$ be the augmentation obtained when using seed $w$ on input $\bar{x}$. The invertibility of $\mathcal{A}(\bar{x}_i,\cdot)$ implies that $x_{i,w} \ne x_{i,w'}$. This means that we can define a new predictor $f_\pi$ as
\begin{align*}
    f_\pi: x_{i,w} \mapsto f^\star(x_{\pi(i),w}).
\end{align*}
Since the examples are sampled uniformly at random and since the random seed is independent of the example, we can show that $\Lcont(f_\pi) = \Lcont(f^\star)$ using a similar argument to the one we used to show permutation invariance in the unnormalized case.
At the same time, we have changed the mean embeddings so that $\bar{x}$ is now embedded as $v_{\bar{x}_{\pi(i)}}$. So now we can continue with part 3 to obtain the result. 

\end{proof}

\subsection{Approximately disjoint augmentations}
\label{sec:apx_apx_disjoint_lemma}


\begin{definition}
\label{defn:bayes_error}
    For an augmentation distribution $\gA$, we define $\BE(\gA)$ as the Bayes error of augmentation classification as the minimum error achievable in the input identification task, i.e. predicting the input that could have generated an augmentation.
    Formally we define it as follows:
    \begin{align}
    \label{eqn:bayes_error}
        \BE(\gA)
        = \inf_{g: \Xaug \rightarrow \Xorig} ~\ex_{\bar{x}} \left[\ex_{x\sim\gA(\cdot \mid \bar{x})} \left[\mathbbm{1}\left\{g(x) \neq \bar{x}\right\}\right]\right]
    \end{align}
\end{definition}

\begin{lemma}
\label{lem:bayes_error}
    For an augmentation distribution $\gA$, the Bayes error from \Cref{defn:bayes_error} has the following expression
    \begin{align}
        \BE(\gA)
        &= 1 - \ex_{x \sim \gD} \left[\|\gA(\cdot \mid x)\|_{\infty}\right]
    \end{align}
    where $\gA(\cdot \mid x)$ is the posterior distribution over original inputs given an augmentation $x$.
\end{lemma}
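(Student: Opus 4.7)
The plan is to carry out the standard Bayes-optimal-classifier calculation, adapted to the notation of the lemma. First, I would rewrite the inner double expectation over $(\bar{x}, x)$ as a sum (or integral) against the joint distribution, then swap the order of summation to condition on $x$ first rather than $\bar{x}$. Concretely, I would observe that for any measurable $g:\Xaug\to\Xorig$,
\begin{align*}
\ex_{\bar{x}}\!\left[\ex_{x\sim \gA(\cdot\mid \bar{x})}\!\left[\mathbbm{1}\{g(x)\neq \bar{x}\}\right]\right]
=\sum_{\bar{x},x}\Dorig(\bar{x})\gA(x\mid \bar{x})\,\mathbbm{1}\{g(x)\neq \bar{x}\}.
\end{align*}

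Next, I would apply Bayes' rule $\Dorig(\bar{x})\gA(x\mid \bar{x})=\Daug(x)\,p(\bar{x}\mid x)$ (where $p(\bar{x}\mid x)$ denotes the posterior, which the lemma is overloading as $\gA(\cdot\mid x)$), so the above becomes
\begin{align*}
\sum_{x}\Daug(x)\sum_{\bar{x}} p(\bar{x}\mid x)\,\mathbbm{1}\{g(x)\neq \bar{x}\}
=\sum_{x}\Daug(x)\bigl(1-p(g(x)\mid x)\bigr)
=1-\ex_{x\sim\Daug}\bigl[p(g(x)\mid x)\bigr].
\end{align*}
Since the infimum in \Cref{defn:bayes_error} is over all $g$, and the summand $1-p(g(x)\mid x)$ decouples across values of $x$, one can minimize pointwise: the optimal choice at each $x$ is the MAP predictor $g^{\star}(x)\in\argmax_{\bar{x}} p(\bar{x}\mid x)$, which achieves $p(g^{\star}(x)\mid x)=\|p(\cdot\mid x)\|_{\infty}=\|\gA(\cdot\mid x)\|_\infty$.

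Plugging $g^\star$ back in yields $\BE(\gA)=1-\ex_{x\sim\Daug}[\|\gA(\cdot\mid x)\|_\infty]$, which matches the lemma statement (interpreting the ``$\gD$'' in the lemma as $\Daug$, since that is the marginal over $\Xaug$ induced by $\Dorig$ and $\gA$). There is essentially no obstacle here: the only mild subtleties are (i) verifying that a measurable argmax selector $g^\star$ exists (trivial when $\Xorig$ is finite or countable, as assumed implicitly by the support structure of the problem, and otherwise handled by standard measurable-selection arguments), and (ii) clarifying the abuse of notation between the augmentation likelihood $\gA(x\mid\bar{x})$ and the posterior $\gA(\bar{x}\mid x)$ that appears in the lemma statement.
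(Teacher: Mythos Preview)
Your proof is correct and takes the same approach as the paper: identify the optimal $g$ as the MAP predictor $g^\star(x)=\argmax_{\bar{x}}\gA(\bar{x}\mid x)$ and plug it into the definition. The paper's own proof is a one-line sketch stating exactly this, so your version is simply a more carefully fleshed-out instance of the same argument.
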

\begin{proof}
    In the above definition of Bayes error, we pick the optimal predictor $g$ to be $g(x) = \argmax_{\bar{x}} \gA(\bar{x} \mid x)$, which will give us the expression for Bayes error.
\end{proof}

\begin{lemma}
\label{lem:bayes_error_eigen_gap}
    Consider the augmentation distribution $\gA$ and its normalized adjacency matrix $\An$, and let $\lambda_{1}, \dots, \lambda_{|\Xaug|}$ be the eigenvalues of the normalized Laplacian $I_{|\Xaug|} - \An$ in increasing order.
    The eigen-gap $\lambda_{\dd+1} - \lambda_{\dd}$ can be upper bounded as follows:
    \begin{align}
        \lambda_{\dd+1} - \lambda_{\dd} \le \lambda_{\dd+1} \le \frac{2\bar{\rho} ~\BE(\gA)}{1 - \nicefrac{\dd}{|\Xorig|}}
    \end{align}
    where $\bar{\rho} = \frac{\gDbar_{\max}}{\gDbar_{\min}}$ is the ratio of max and min probabilities over inputs.
\end{lemma}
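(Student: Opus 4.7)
The plan is to upper bound $\lambda_{\dd+1}$ by exhibiting an explicit $(\dd+1)$-dimensional test subspace of $\R^{\Xaug}$ whose Laplacian Rayleigh quotient is small, using the partition of $\Xaug$ induced by a Bayes-optimal predictor. First, using \Cref{lem:Abar_to_A} I would write $\An=\Abarn^\top\Abarn$, so the eigenvalues of $\Ln=I-\An$ are $1-\sigma_i^2(\Abarn)$ for $i\in[N]$ (with $N:=|\Xorig|$) padded with $1$'s, giving $\lambda_{\dd+1}=1-\sigma_{\dd+1}^2(\Abarn)$ whenever $\dd+1\le N$; if $\dd+1>N$ the claimed right-hand side exceeds $2\ge\lambda_{\dd+1}$ and the statement is vacuous. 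By the Courant--Fischer min-max principle it then suffices to exhibit a $(\dd+1)$-dimensional subspace $V\subseteq\R^{\Xaug}$ on which $v^\top\Ln v/\|v\|^2\le 2\bar\rho\tau/(1-\dd/N)$.

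To construct $V$, I would take a Bayes-optimal predictor $g^\star:\Xaug\to\Xorig$ achieving error $\tau=\BE(\gA)$, inducing a partition $\Xaug=\bigsqcup_{\bar x}S_{\bar x}$ with $S_{\bar x}=(g^\star)^{-1}(\bar x)$. For each non-empty cell, define the degree-weighted indicator $v_{\bar x}(x)=\sqrt{\Daug(x)}\,\mathbf{1}[x\in S_{\bar x}]$, so $\|v_{\bar x}\|^2=\Daug(S_{\bar x})$ and the family $\{v_{\bar x}\}$ is pairwise orthogonal. For any $v=\sum_{\bar x\in B}c_{\bar x}v_{\bar x}/\|v_{\bar x}\|$ with $|B|=\dd+1$, applying the Dirichlet identity
\[
v^\top\Ln v \;=\; \tfrac12\sum_{x,x'}\Dsim(x,x')\bigl(v(x)/\sqrt{\Daug(x)}-v(x')/\sqrt{\Daug(x')}\bigr)^2
\]
and using $(a-b)^2\le 2(a^2+b^2)$ to decouple across-cell terms should yield $v^\top\Ln v\le 2\|v\|^2\cdot\max_{\bar x\in B}r_{\bar x}$, where $r_{\bar x}:=W_{\bar x,\neq\bar x}/\Daug(S_{\bar x})$ and $W_{\bar x,\neq\bar x}:=\sum_{x\in S_{\bar x},\,x'\notin S_{\bar x}}\Dsim(x,x')$ is the similarity mass crossing the boundary of $S_{\bar x}$.

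The final step controls the $r_{\bar x}$'s via a union bound and a pigeonhole. Since $(x,x')\sim\Dsim$ are two augmentations of a common $\bar y\sim\Dorig$, a union bound gives
\[
\sum_{\bar x}W_{\bar x,\neq\bar x} \;=\; \Pr_{(x,x')\sim\Dsim}[g^\star(x)\neq g^\star(x')] \;\le\; 2\Pr[g^\star(x)\neq\bar y] \;=\; 2\tau.
\]
Combined with the mass lower bound $\Daug(S_{\bar x})\ge\gDbar_{\min}\ge 1/(\bar\rho N)$ on non-empty cells (using $\gDbar_{\max}\ge 1/N$), this gives $\sum_{\bar x}r_{\bar x}\le 2\bar\rho\tau N$. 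Taking $B$ to be the $\dd+1$ indices with smallest $r_{\bar x}$ then yields $\max_{\bar x\in B}r_{\bar x}\le\bar\rho\tau/(1-\dd/N)$ via the standard averaging bound (the $k$-th smallest of $N$ nonnegatives summing to $S$ is at most $S/(N-k+1)$), and hence $\lambda_{\dd+1}\le 2\bar\rho\tau/(1-\dd/N)$.

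The main obstacle I anticipate is justifying the mass lower bound $\Daug(S_{\bar x})\ge\gDbar_{\min}$: for a raw Bayes-optimal partition some cells may be empty or have negligible mass, since $g^\star$ need not hit every label and may cluster its predictions on a few $\bar x$'s. If fewer than $\dd+1$ cells are non-empty, a separate short argument using $\Dorig$ shows $\tau$ must already be large enough that the target bound exceeds $2$ and is vacuous. For the unbalanced but non-degenerate regime, one either applies a weighted pigeonhole to the stronger identity $\sum_{\bar x}r_{\bar x}\Daug(S_{\bar x})\le 2\tau$, or perturbs $g^\star$ by rerouting a small amount of mass to balance cell sizes while inflating $\tau$ by at most a constant factor---ensuring that this bookkeeping produces only the single $\bar\rho$ factor claimed (and not a $\log N$ or additional polynomial term) is the delicate part of the argument.
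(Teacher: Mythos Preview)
Your approach is genuinely different from the paper's, and the gap you flag in your last paragraph is real and not easily patched to recover the stated constant.

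The paper does not use Courant--Fischer or partition indicators at all. Instead it runs a trace argument: from \Cref{lem:bayes_error} and the chain $\|\cdot\|_\infty\le\|\cdot\|_2$ plus Jensen, one gets
\[
(1-\BE(\gA))^2 \;\le\; \tr\bigl(\Dbar\,\Abarn\Abarn^\top\bigr),
\]
hence $2\,\BE(\gA)\ge 1-(1-\BE(\gA))^2\ge \tr\bigl(\Dbar(I_{|\Xorig|}-\Abarn\Abarn^\top)\bigr)\ge \gDbar_{\min}\,\tr(I_{|\Xorig|}-\Abarn\Abarn^\top)$, using only that $I_{|\Xorig|}-\Abarn\Abarn^\top\succeq 0$. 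Since $\tr(\An)=\tr(\Abarn\Abarn^\top)$ and $\An$ has rank at most $N=|\Xorig|$, the right-hand trace equals $\sum_{i=1}^N\lambda_i\ge (N-\dd)\lambda_{\dd+1}$. Combined with $\gDbar_{\min}\ge 1/(\bar\rho N)$, this gives the claimed bound directly. The point is that the paper controls the \emph{sum} $\sum_i\lambda_i$ globally and then averages, never needing per-cell mass control.

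Your route, by contrast, needs $\Daug(S_{\bar x})\ge 1/(\bar\rho N)$ cell by cell, and a Bayes-optimal $g^\star$ gives no such guarantee: nothing prevents some $S_{\bar x}$ from being empty or arbitrarily light. The weighted-pigeonhole fix you sketch, working from $\sum_{\bar x}r_{\bar x}\Daug(S_{\bar x})\le 2\tau$, does not control the \emph{count} of cells with small $r_{\bar x}$, only their total mass; and perturbing $g^\star$ to balance cells can inflate $\tau$ by factors depending on the mass profile, not obviously bounded by $\bar\rho$ alone. Separately, even granting the mass bound, your arithmetic loses a factor of two: $\sum_{\bar x}r_{\bar x}\le 2\bar\rho N\tau$ gives the $(\dd+1)$-th smallest $r_{\bar x}$ at most $2\bar\rho\tau/(1-\dd/N)$, and the Dirichlet bound contributes another factor $2$, yielding $4\bar\rho\tau/(1-\dd/N)$ rather than the stated $2\bar\rho\tau/(1-\dd/N)$.
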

\begin{proof}
        Let $\gamma_{\dd}$ be the $\dd^{th}$ largest eigenvalue of the normalized adjacency matrix $\An = D^{-\half} \A D^{-\half} \in \R^{\Xaug \times \Xaug}$, where $A[x,x']$ is the joint probability of augmentations $x$ and $x'$ appearing as two augmentations of the same input.
        Then we know that the $i^{th}$ smallest eigenvalue of $I_{|\Xaug|} - \An$ is $\lambda_{i} = 1 - \gamma_{i}$.
        Furthermore we note that $\An \in \R^{\Xaug \times \Xaug}$ has rank at most $|\Xorig|$, since from \Cref{table:notation} we know that $\An = \Abarn^{\top} \Abarn$, where $\Abarn \in \R^{\Xorig \times \Xaug}$ is the normalized of the input-augmentation distribution (refer \Cref{table:notation}) that has entries $\Abarn[\bar{x}, x] = \frac{\Dsim(x,  \bar{x})}{\sqrt{\Daug(x)}\sqrt{\Dorig(\bar{x})}}$.
        Thus we can conclude that $\gamma_{i} = 0$ for $|\Xorig| < i \le |\Xaug|$.
        First we prove the statement $\An = \Abarn^{\top} \Abarn$ below
        \begin{align*}
            (\Abarn^{\top} \Abarn)[x, x']
            &= \sum_{\bar{x}} \Abarn[\bar{x}, x] \Abarn[\bar{x}, x']
            = \sum_{\bar{x}} \frac{\gA(x,  \bar{x})}{\sqrt{\Daug(x)}\sqrt{\Dorig(\bar{x})}} \frac{\gA(x',  \bar{x})}{\sqrt{\Daug(x')}\sqrt{\Dorig(\bar{x})}}\\
            &= \frac{1}{\sqrt{\Daug(x) \Daug(x')}} \sum_{\bar{x}} \frac{\gA(x, \bar{x}) \gA(x', \bar{x})}{\Dorig(\bar{x})}
            = \frac{1}{\sqrt{\Daug(x) \Daug(x')}} \sum_{\bar{x}} \Dorig(\bar{x}) \gA(x \mid \bar{x}) \gA(x' \mid \bar{x})\\
            &= \frac{\A[x, x']}{\sqrt{\Daug(x) \Daug(x')}}
            = \An[x, x']
        \end{align*}
        We now connect $\BE(\gA)$ to the normalized augmentation matrix $\An$ by using \Cref{lem:bayes_error}.
        \begin{align*}
            \left(1 - \BE(\gA)\right)^{2}
            &=^{(a)} \left(\ex_{x \sim \gD} \left[\|\gA(\cdot \mid x)\|_{\infty}\right]\right)^{2}
            \le^{(b)} \left(\ex_{x \sim \gD} \left[\|\gA(\cdot \mid x)\|_{2}\right]\right)^{2}\\
            &\le^{(c)} \ex_{x \sim \gD} \left[\|\gA(\cdot \mid x)\|^{2}_{2}\right]
            = \sum_{x \in \Xaug} \Daug(x) \sum_{\bar{x} \in \Xorig} \gA(\bar{x} \mid x)^{2}\\
            &= \sum_{\bar{x} \in \Xorig} \Dorig(\bar{x}) \sum_{x \in \Xaug} \gA(x \mid \bar{x}) \gA(\bar{x} \mid x)\\
            &= \sum_{\bar{x}} \Dorig(\bar{x}) \sum_{x} \frac{\gA(x, \bar{x})}{\Dorig(\bar{x})} \frac{\gA(x, \bar{x})}{\Daug(x)}\\
            &= \sum_{\bar{x}} \Dorig(\bar{x}) \sum_{x} \frac{\gA(x, \bar{x})}{\sqrt{\Dorig(\bar{x}) \Daug(x)}} \frac{\gA(x, \bar{x})}{\sqrt{\Dorig(\bar{x}) \Daug(x)}}\\
            &= \sum_{\bar{x}} \Dorig(\bar{x}) \sum_{x} \Abarn[\bar{x}, x] \Abarn[\bar{x}, x]
            = \sum_{\bar{x}} \Dorig(\bar{x}) (\Abarn \Abarn^{\top})[\bar{x}, \bar{x}]\\
            &= \tr\left(\Dbar \Abarn \Abarn^{\top}\right)
        \end{align*}
        where $(a)$ follows from \Cref{lem:bayes_error}, $(b)$ follows from $\|\cdot\|_{\infty} \le \|\cdot\|_{2}$, $(c)$ follows from Jensen's inequality since $h(x) = x^{2}$ is convex.
        This upper bound can be used to lower bound the Bayes error as follows:
        \begin{align}
            2~\BE(\gA)
            &\ge 1 - \left(1 - \BE(\gA)\right)^{2}\nonumber\\
            &\ge^{(a)} 1 - \tr\left(\Dbar \Abarn \Abarn^{\top}\right)
            =^{(b)} \tr\left(\Dbar\right) - \tr\left(\Dbar \Abarn \Abarn^{\top}\right)
            =^{(c)} \tr\left(\Dbar (I_{|\Xorig|} - \Abarn \Abarn^{\top})\right)\nonumber\\
            &\ge^{(d)} \|\Dbar^{-1}\|_{2}^{-1} ~\tr\left(I_{|\Xorig|} - \Abarn \Abarn^{\top}\right)
            = \gDbar_{\min} ~\tr\left(I_{|\Xorig|} - \Abarn \Abarn^{\top}\right)\label{eqn:bayes_error_first}
        \end{align}
        where $\gDbar_{\min} = \min_{\bar{x} \in \Xorig} \Dorig(x)$.
        In the above sequence, $(a)$ follows the preceeding calculation, $(b)$ follows from $\tr(\Dbar) = \sum_{\bar{x}} \Dbar(\bar{x}) = 1$ and $(c)$ follows from linearity of the trace operator.
        The penultimate step $(d)$ follows from the fact that $\tr(X Y) \le \|X\|_{2} ~\tr(Y)$ for symmetric psd matrices $X, Y$; a proof for this can be found in Lemma 18 from \citet{jin2017escape}.
        We now connect this quantity to the eigenvalues of $\An$ as follows:
        \begin{align}
            \tr\left(I_{|\Xorig|} - \Abarn \Abarn^{\top}\right)
            &= |\Xorig| - \tr\left(\Abarn \Abarn^{\top}\right)
            =^{(a)} |\Xorig| - \tr\left(\Abarn^{\top} \Abarn\right)
            = |\Xorig| - \tr\left(\An\right)\nonumber\\
            &=^{(b)} |\Xorig| - \sum_{i=1}^{|\Xaug|} \gamma_{i}
            =^{(c)} |\Xorig| - \sum_{i=1}^{|\Xorig|} \gamma_{i}
            = |\Xorig| - \sum_{i=1}^{|\Xorig|} (1 - \lambda_{i})
            =\sum_{i=1}^{|\Xorig|} \lambda_{i}\nonumber\\
            &\ge \sum_{i=\dd+1}^{|\Xorig|} \lambda_{i}
            \ge (|\Xorig| - \dd) \lambda_{\dd+1}\label{eqn:bayes_error_second}
        \end{align}
        where $(a)$ follows from $\tr(PQ) = \tr(QP)$, $(b)$ is true because $\gamma_{i}$'s are the eigenvalues of $\An$ and because trace of a symmetric matrix is the sum of its eigenvalues, $(c)$ follows because $\An$ is rank $|\Xorig|$ and so $\gamma_{i}=0$ for $i>|\Xorig|$.
        Combining \Cref{eqn:bayes_error_first,eqn:bayes_error_second}, we get $\BE(\gA) \ge \nicefrac{1}{2} \gDbar_{\min} (|\Xorig| - \dd) \lambda_{\dd+1}$.
        Note that
        \begin{align}
            \gDbar_{\min} = \frac{\gDbar_{\min}}{\gDbar_{\max}} \gDbar_{\max}
            \ge \frac{\gDbar_{\min}}{\gDbar_{\max}} |\Xorig|^{-1}
            = \bar{\rho}^{-1} |\Xorig|^{-1}\label{eqn:bayes_error_third}
        \end{align}
        Plugging this into the bound gives $\BE(\gA) \ge \frac{1}{2\bar{\rho}} \left(1 - \frac{\dd}{|\Xorig|}\right) \lambda_{\dd+1}$, giving us
        \begin{align*}
            \lambda_{\dd+1} \le \frac{2\bar{\rho}~\BE(\gA)}{1 - \nicefrac{\dd}{|\Xorig|}}
        \end{align*}
    \end{proof}


\section{Experiment details}
\label{sec:apx_exps}

In this section, we provide additional notes, tables, and figures on the experiments.

\subsection{Synthetic experiments: hypercube example}
\label{sec:apx_synthetic_exps}

Figure~\ref{fig:synth-large} shows the results from Section~\ref{sec:hypercube} in greater detail. This section completes the details omitted in the main paper.

\paragraph{Data and augmentations.} As outlined in Section~\ref{sec:hypercube}, the data are drawn uniformly from the hypercube in dimension $D = 50$. The downstream labels are determined by a randomly drawn linear classifier $w$, whose first $k = 10$ coefficients are drawn from $\mathcal{N}(0,1)$; the rest are 0. The training set (under which $\Lcont$ is minimized) is of size $50000$; the downstream accuracies under a linear classifier are evaluated on a holdout validation set of $12500$. The augmentations are selected by i.i.d. random scaling factors $\tau \sim \mathrm{Unif}([0,1])$ and scaling down the last 40 coordinates.

\paragraph{Training and evaluation.} The two-layer MLP models used a hidden layer width of $2D = 100$, and an output (i.e. representation) dimension of $20$. Adam was run with a learning rate of $10^{-3}$, and default parameters $\beta_1=0.9, \beta_2=0.99$. The weight decay parameter for Adam was 0.004, selected from $\{0.001, 0.002, \ldots, 0.007\}$ based on best transfer performance. SGD was run with learning rate $0.01$. Quantitative results are shown in Table~\ref{table:hypercube}; means and 95\% confidence intervals are computed from 10 random seeds. 500 epochs of pre-training were run, with batch size 512.

\begin{figure}
    \centering
    \includegraphics[width=\linewidth]{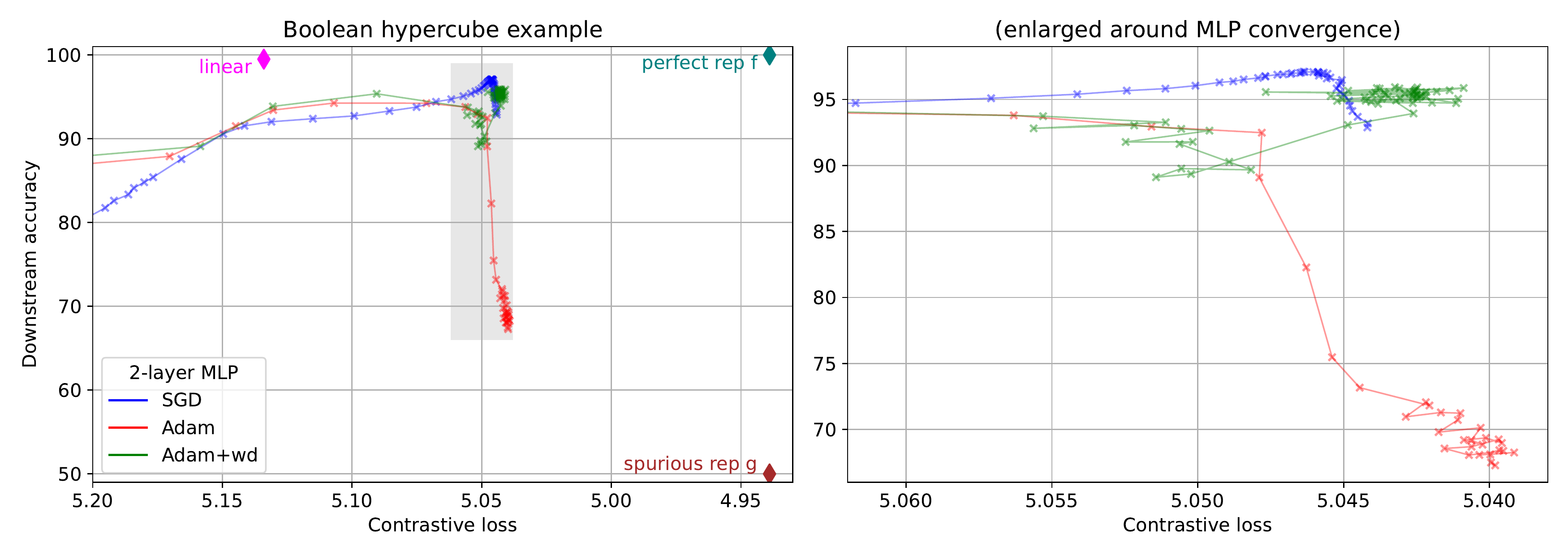}
    \caption{Full plots for the synthetic experiments, with all contrastive loss minimizers shown from various function classes (left) and enlarged plot near convergence of trajectories of solutions found by training 2-layed MLPs with various configurations of first-order optimizers (right).}
    \label{fig:synth-large}
\end{figure}

\subsection{CIFAR-10 + SimCLR experiments}
\label{sec:apx_vision_exps}


For all {\resnet} experiments, we use the {\resnete} architecture from PyTorch, with the standard modification for CIFAR-10 of replacing the first $7 \times 7$ convolution layer with a $3\times 3$ convolution and removing the maxpool layer.
We use the {\vit} implementation from \url{https://github.com/lucidrains/vit-pytorch} with patch size: 4, hidden dimension: 256, depth: 6 and number of heads: 8.
For {\mlpmixer} we use the implementation from \url{https://github.com/lucidrains/mlp-mixer-pytorch} with patch size: 4, hidden dimension: 256 and number of heads: 8.
In each model, the representation for contrastive learning is computed by adding an extra MLP (projection layer) on top of the base model, as proposed in \citet{chen2020simple}.
The projection layer has 1 hidden layer with 2048 dimensions, followed by a batch norm layer and ReLU non-linearity, and output dimensionality of 1024.

\begin{figure}
    \centering
    \includegraphics[width=\textwidth]{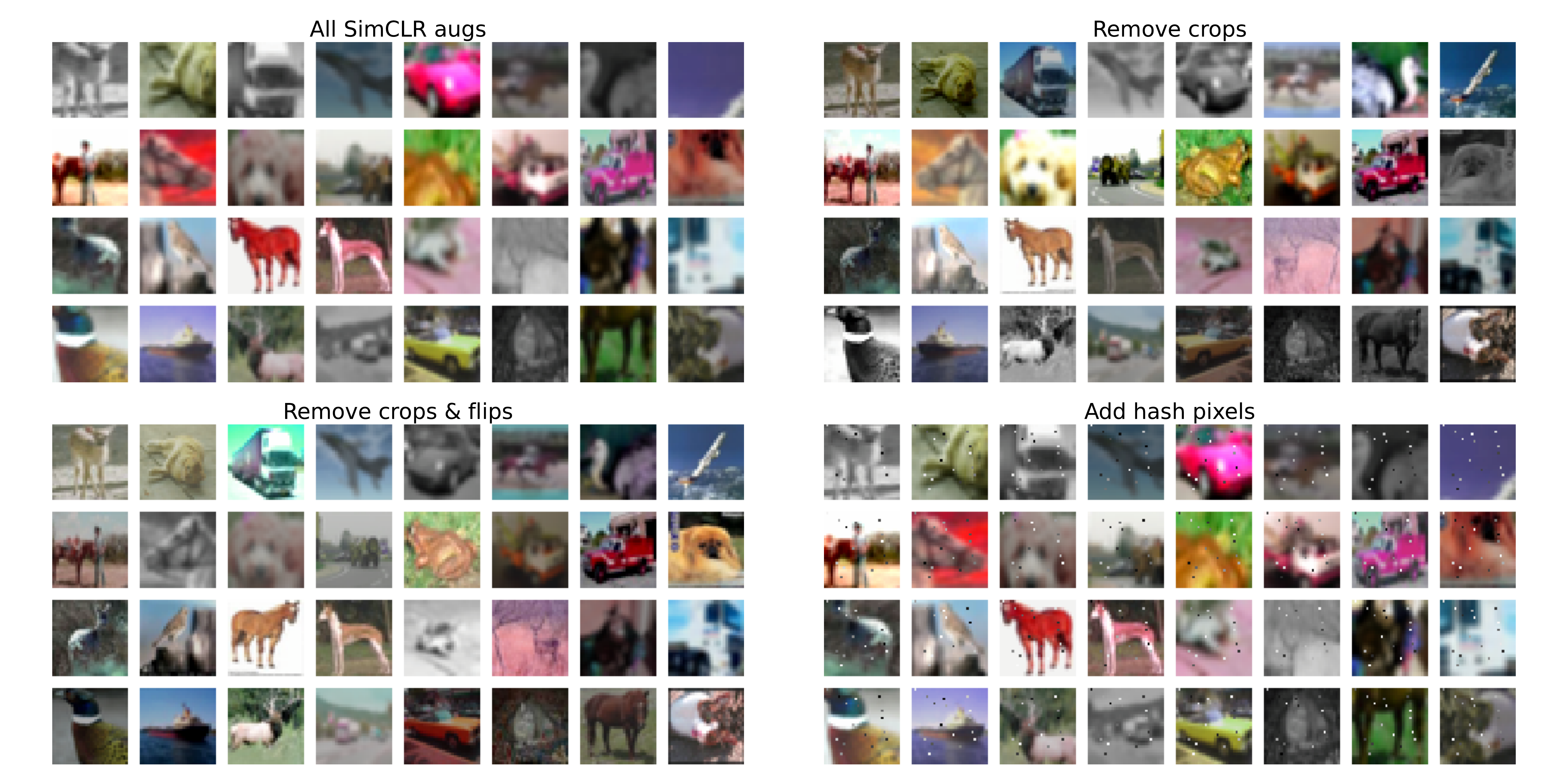}
    \caption{Examples of augmented images from CIFAR-10 used in the SimCLR experiments. {\bf TL:} Full pipeline of augmentations from SimCLR \citep{chen2020simple}. {\bf TR:} Remove random cropping. {\bf BL:} Remove random cropping and horizontal flip. {\bf BR:} Add ``hash pixels'' to each image, which uniquely identify the particular example.}
    \label{fig:aug-examples}
\end{figure}

\textbf{Augmentations.} The following augmentations are used, inspired by \citep{chen2020simple}:

{\tt
    transforms.Compose([
    \begin{quote}
    RandomResizedCrop(32, scale=(0.3, 1.0)),\\
    RandomHorizontalFlip(p=0.5),\\
    transforms.RandomApply([transforms.ColorJitter(0.4, 0.4, 0.4, 0.1)], p=0.8),\\
    RandomGrayscale(p=0.2),\\
    GaussianBlur(kernel\_size=3)
    \end{quote}
    ])
}

For experiments in \Cref{fig:vision_scatter_plot} we use the full pipeline of augmentations (top left) and sequentially remove random cropping (top right) and horizontal flipping (bottom left). Examples of augmented CIFAR-10 images are shown in Figure~\Cref{fig:aug-examples}

\paragraph{Contrastive training.} We train the model for 1000 epochs, by performing a pass over this training dataset and minimize the SimCLR contrastive learning loss.
We normalize the representations $f$ to unit norm when computing the SimCLR loss, as is common in many works.
Formally, given a batch $\{(x_i, x'_i)\}_{i=1}^B$ of pairs of augmentations we perform a single update of Adam to minimize the following loss:
\begin{equation}
\label{eqn:simclr_objective_vision}
    L(f) = -\frac{1}{2B} \sum_{i=1}^n \frac{w(x_i, x'_i)}{\sum_{j=1}^n w(x_i, x'_j) + \sum_{j=1, j\ne i}^n w(x_i, x_j)} -\frac{1}{2B} \sum_{i=1}^n \frac{w(x_i, x'_i)}{\sum_{j=1}^n w(x'_i, x_j) + \sum_{j=1, j\ne i}^n w(x'_i, x'_j)},
\end{equation}
where $w(x, x') = \exp\left(\frac{f(x)^\top f(x')}{\tau\|f(x)\|_2\|f(x')\|_2}\right)$ and we pick the temperature parameter as $\tau = 0.5$.
Training hyperparameters are presented in \Cref{table:hyperparameters_vision}.

\paragraph{Downstream evaluation.} The downstream evaluation is linear classification accuracy of the learned representation $f$ to predict the class for an image.
The linear classifier is trained for 1000 epochs using Adam; hyperparameter details are presented in \Cref{table:hyperparameters_vision}.

For the plots in \Cref{fig:vision_scatter_plot} we evaluate every the contrastive loss and downstream accuracy every 5 epochs of training and stop when the average test contrastive loss (window size of 5) is minimized.

\begin{table*}[!t]
    \centering
    \caption{Hyperparameter values for experiments on CIFAR-10 trained using {\resnete}.}
    \begin{tabular}{c|c|c}
         \hline
         \textbf{Hyperparameters} & \textbf{Values} \\
         \hline 
         \multirow{5}{*}{Contrastive training} & Max epoch & 1000\\
         & Learning rate & 0.001\\
         & Optimizer & Adam + weight decay (0.0005)\\
         & Batch size & 512\\
         & Representation dimension & 1024\\
         \hline
         \multirow{5}{*}{Downstream training} & Epochs & 1000\\
         & Learning rate (start) & 0.01\\
         & Optimizer & Adam + weight decay (0.000005)\\
         & Scheduler & ExponentialLR (gamma: $10^{0.004}$)\\
         & Batch size & 1000\\
         \hline
    \end{tabular}
    \label{table:hyperparameters_vision}
\end{table*}

\subsubsection{Hash experiment.}
\label{sec:hash_exp}

To enforce the disjoint augmentation regime, we select a set of 16 pixels, and modify an augmentation by replacing those 16 pixels (8 bits each) with an 128-bit MD5 hash of the image that generated the augmentation. 
This way the original image hash (and thus its identity) can be recovered from any of its augmentations.
The result of training on this small variation of the standard pipeline is presented in \Cref{fig:vision_scatter_plot} (bottom right). Some examples of this augmentation is shown in \Cref{fig:aug-examples} (bottom right).

\subsubsection{Label-orthogonal training.}
\label{sec:label_orth}

We largely follow the same procedure as standard training, but modify the representation $f(x)$ for an augmentation before passing it to the contrastive loss.
In particular for an augmentation and label pair $(x,y)$, compute the representation $f(x)$ as usual. Before passing it into the contrastive loss, convert apply the transformation $f'(x) = f(x) - \mu_{y}$, where $\mu_{y}$ is the mean representation for augmentations from class $y$.
$\mu_{y}$ is computed at every step, using augmentations from a memory bank of $10240$ pairs of (x,y) collected over training. 
Then $f'(x)$ is passed into the SimCLR loss in \Cref{eqn:simclr_objective_vision} instead of $f(x)$ and everything else remains the same.
The subtraction of the mean $\mu_{y}$ from the representation make the representation orthogonal to the labels, thus declining its ability to linearly classify images.
The result of training with this procedure is presented in \Cref{fig:vision_scatter_plot} (top left).
Note that the implicit assumption in the calculation of the contrastive loss is that different classes do not share any augmentations, i.e. the labels are almost invariant to standard augmentations.



\subsection{Experiments on Text Domain}
\label{sec:apx_text_exps}

\paragraph{Experimental Setup.} We evaluate on the AG News classification dataset~\cite{zhang2015character}.
This dataset contains 4 classes (``World'', ``Sports'', ``Business'', ``Sci/Tech'') and each class contains news articles from that topic.
We use the tokenizer from torchtext library.
If a token sequence is of length more than 60, we then trim it to its first 60 tokens, leading to a vocabulary size of 11970.

We perform contrastive learning similar to SimCLR. We train the model in epochs and in each epoch we sample pairs of augmentation for 50,000 randomly chosen pieces of text in the training dataset.
We then perform a single pass over this dataset and minimize the SimCLR contrastive learning loss from \Cref{eqn:simclr_objective_vision}, with temperature $\tau=1$.
The downstream evaluation task is to simply predict the class given the text.

At the start of contrastive learning, we create a held-out validation set of pairs of augmentation sampled for 10,000 randomly chosen examples from the original validation set. At the end of each epoch of contrastive learning, we evaluate the model on this held-out validation set by computing the SimCLR loss.
We also train a linear classifier on top of fixed model representations, to evaluate the model on the downstream classification task.
During the downstream training, we evaluate the model at the end of epoch on the validation set and report the linear classifier with the best validation loss.
We stop training if the best validation loss does not improve for $\kappa$ consecutive epochs where $\kappa$ is the patience hyperparameter, or if we hit a maximum number of epochs. 
Hyperparameter values are listed in Table~\ref{table:hyperparameters_nlp}.

\begin{table*}[!t]
    \centering
    \caption{Hyperparameter values for experiments on AG News. Unless specified, the same hyperparameter value is used for both contrastive learning and the downstream classification task.}
    \label{table:hyperparameters_nlp}
    \begin{tabular}{c|c}
         \hline
         \textbf{Hyperparameters} & \textbf{Values} \\
         \hline 
         Max epoch & 100\\
         Learning rate for contrastive learning & 0.01 for {\bow}, 0.001 otherwise\\
        Learning rate for downstream linear classification & 0.01\\
         Patience & 10 \\
         Batch size & 128\\
         Representation dimension & 768\\
         Gradient clipping norm & 2.5\\
         \hline
    \end{tabular}
\end{table*}

\ifthenelse{\boolean{arxiv}}{
\begin{figure}[ht!]
    \centering
        \includegraphics[width=.65\linewidth]{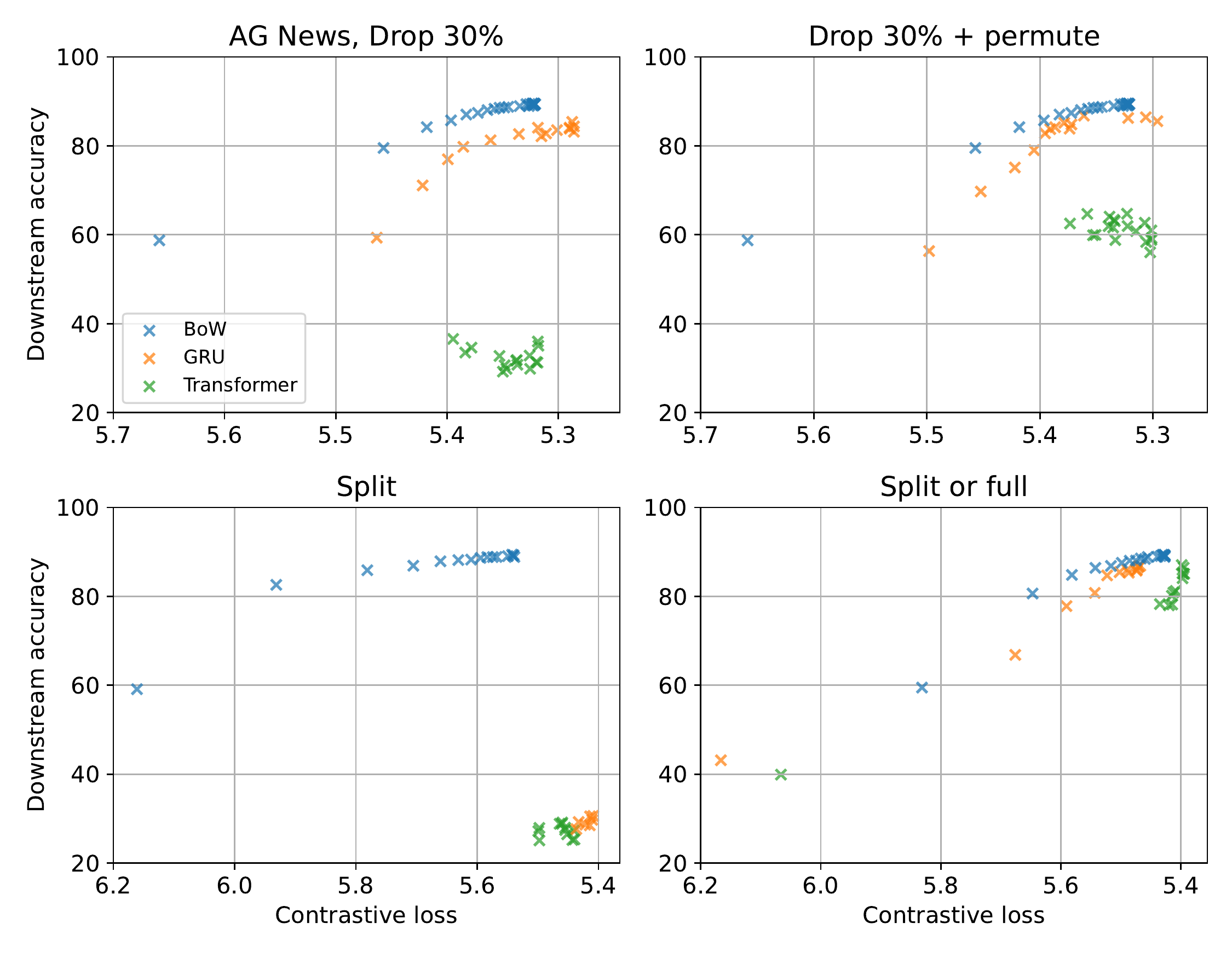}
        \vspace{-0.1in}
    \caption{Contrastive loss $\rightarrow$ accuracy transfer plots for AG News with bag-of-words ({\bow}), {\gru} and {\transformers} architectures with representation dimensionality $\dd=128$. These plots use the average representation of augmentations $\aug{f}$ for downstream evaluation rather than the representation $f$ directly. Augmentations in each case are as follows: {\bf TL:} Drop random 30\% of tokens. {\bf TR:} Drop random 30\% of tokens and randomly permute the rest. {\bf BL:} Either the first half or second half of the input. {\bf BR:} Either the first half, second half or the full input. The plots here are almost identical to the plots from \Cref{fig:agnews_scatter_plot}, suggesting that the distribution shift from augmentations to unaugmented inputs from contrastive learning to downstream evaluation does not play a big role.
    }
    \label{fig:agnews_scatter_robust_plot}
\end{figure}
}{
}

\paragraph{Model Details.} We evaluate three models on the AG News task. All models encode a given text to a $\dd$-dimensional representation.
The first model is a bag of word ($\bow$) that trains a word embedding matrix and simply returns the average word embedding of tokens in the text.
The second model is Gated Recurrent Unit ($\gru$), which is a recurrent neural network~\cite{chung2014empirical}.
The {\gru} is uni-directional, uses a 300 dimensional input word embedding, dropout of 0.3, hidden dimension of 768, has 4 layers, and linearly maps the hidden state representation of the final token from the layer to $\dd$-dimensions.
The final model is a $\transformers$~\citep{vaswani2017attention}, which is the base model for many state-of-the-art neural networks in NLP.
The {\transformers} is uni-directional, hidden dimension of 128, has 4 layers and 4 attention heads, and linearly maps the hidden state representation of the final token from the layer to $\dd$-dimensions.

\subsubsection{Robust evaluation.}
\label{sec:robust_eval}
Standard practice is to train a representation $f$ on augmentations $x$, and use the same function to compute representations for unaugmented inputs $\bar{x}$.
This is the strategy we employ for the plots in \Cref{fig:agnews_scatter_plot}.
However, as discussed in \Cref{sec:preliminaries}, this causes an obvious distribution shift, since the representations have been trained to output something meaningful for unaugmented inputs.
This could be a potential reason for the brittle transfer performance of {\gru} and {\transformers}.
However we verify that this distribution shift is not the reason, by instead evaluating downstream performance using the augmentation-averaged representation $\aug{f}$, as defined in \Cref{eqn:aug_mean_rep}.
These robust evaluation transfer plots are presented in \Cref{fig:agnews_scatter_robust_plot}, which look almost identical to those in \Cref{fig:agnews_scatter_plot}.

\subsubsection{Visualizing 2-dimensional representations.}
\label{sec:visualize_reps}

We train contrastive learning models with output dimensionality $\dd=2$ and visualize the contrastive loss $\rightarrow$ accuracy in \Cref{fig:agnews_scatter_plot_d2}.
Firstly we note that the trends are not exactly the same as in \Cref{fig:agnews_scatter_plot} that plot the same for $\dd=128$.
Most interestingly, for the split augmentation, {\gru} does not perform well on downstream accuracy for $\dd=128$, but it does almost as well as {\bow} at $\dd=2$.
This kind of non-monotonic behavior w.r.t. representation dimensionality $\dd$ is also unexplained by existing theory.

Next we visualize the learned representations for augmentations from different classes (normalized to unit norm) in  \Cref{fig:agnews_visualize_reps_drop}, for the drop augmentation.
We sample 100 inputs per class and 4 augmentations per input, and encode them with the trained {\bow}, {gru} and {\transformers} models.
For clear visualization, we plot the 4 augmentations per image with the same color, {\bf with each of them plotted at different radii} (1.0, 1.133, 1.267, 1.4).
We observe that the {\bow} representations look roughly linearly separable since different classes tend to roughly occupy different quadrants of the circle, corroborating its good downstream performance from \Cref{fig:agnews_scatter_plot_d2}.
It does so by roughly bringing augmentations of the same input (points with the same color) closer to each, although the representations not perfectly augmentation invariant.
The {\gru} representations in every class, on the other hand, are spread out and brings augmentations very closer to each other than {\bow} representations, reminiscent of the uniformity and alignment properties from \citet{wang2020understanding}.
However these representations are not linearly separable.
The {\transformers} representations are intriguing since they are not uniformly spread out, but almost perfectly augmentation invariant.
Furthermore the representation distributions for different classes are identical to each other, justifying its bad downstream performance from \Cref{fig:agnews_scatter_plot_d2}.

\ifthenelse{\boolean{arxiv}}{
\begin{figure}[t!]
    \centering
        \includegraphics[width=.65\linewidth]{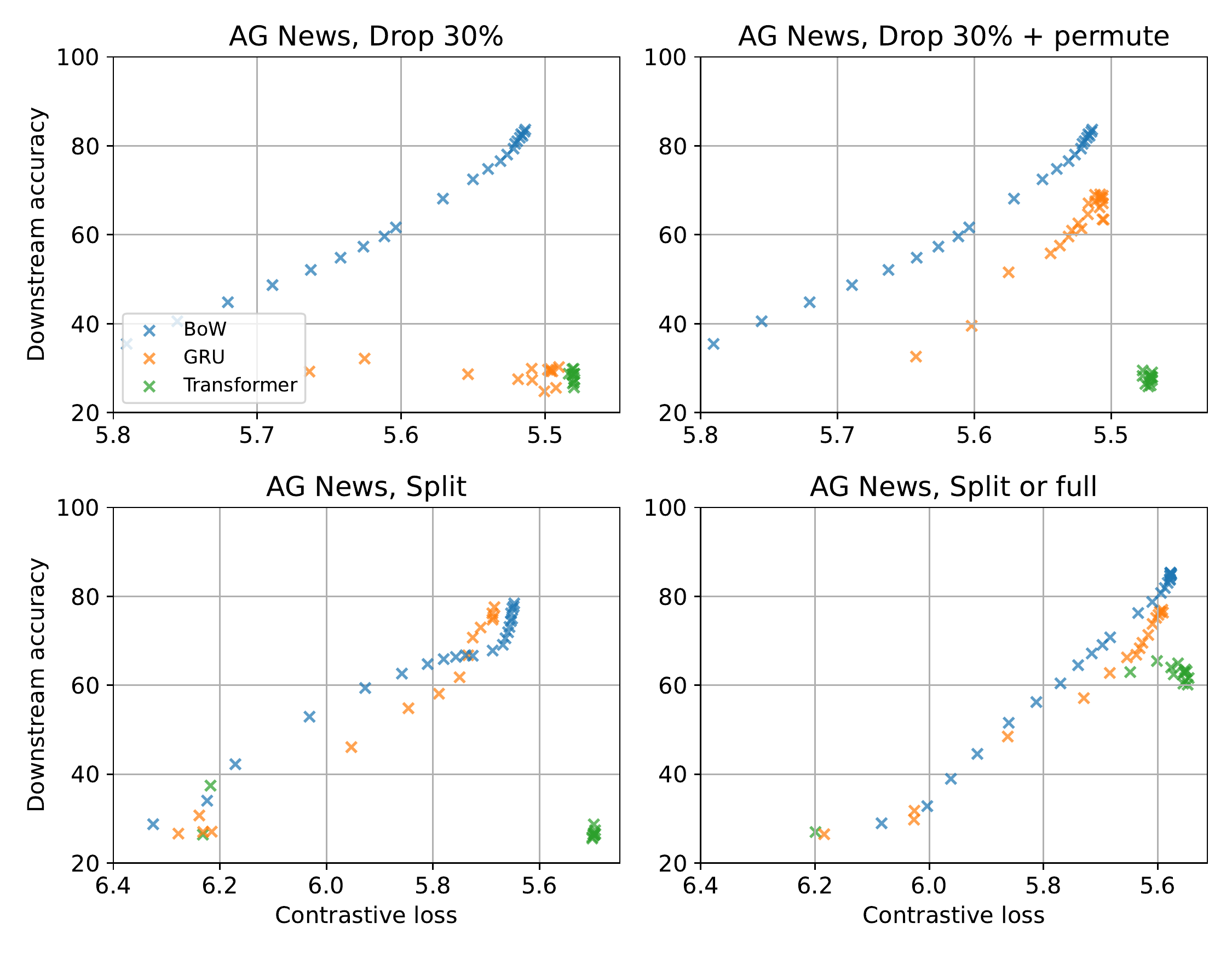}
        \vspace{-0.1in}
    \caption{Contrastive loss $\rightarrow$ accuracy transfer plots for AG News with bag-of-words ({\bow}), {\gru} and {\transformers} architectures with representation dimensionality $\dd=2$. Augmentations in each case are as follows: {\bf TL:} Drop random 30\% of tokens. {\bf TR:} Drop random 30\% of tokens and randomly permute the rest. {\bf BL:} Either the first half or second half of the input. {\bf BR:} Either the first half, second half or the full input. In all cases {\bow} representation does quite well downstream ($\sim 80$\%), but either {\transformers} or both {\gru} and {\transformers} demonstrate brittleness of transfer for different augmentations.
    }
    \label{fig:agnews_scatter_plot_d2}
\end{figure}
}{
}

\ifthenelse{\boolean{arxiv}}{
\begin{figure}[h!]
    \centering
        \includegraphics[width=.75\linewidth]{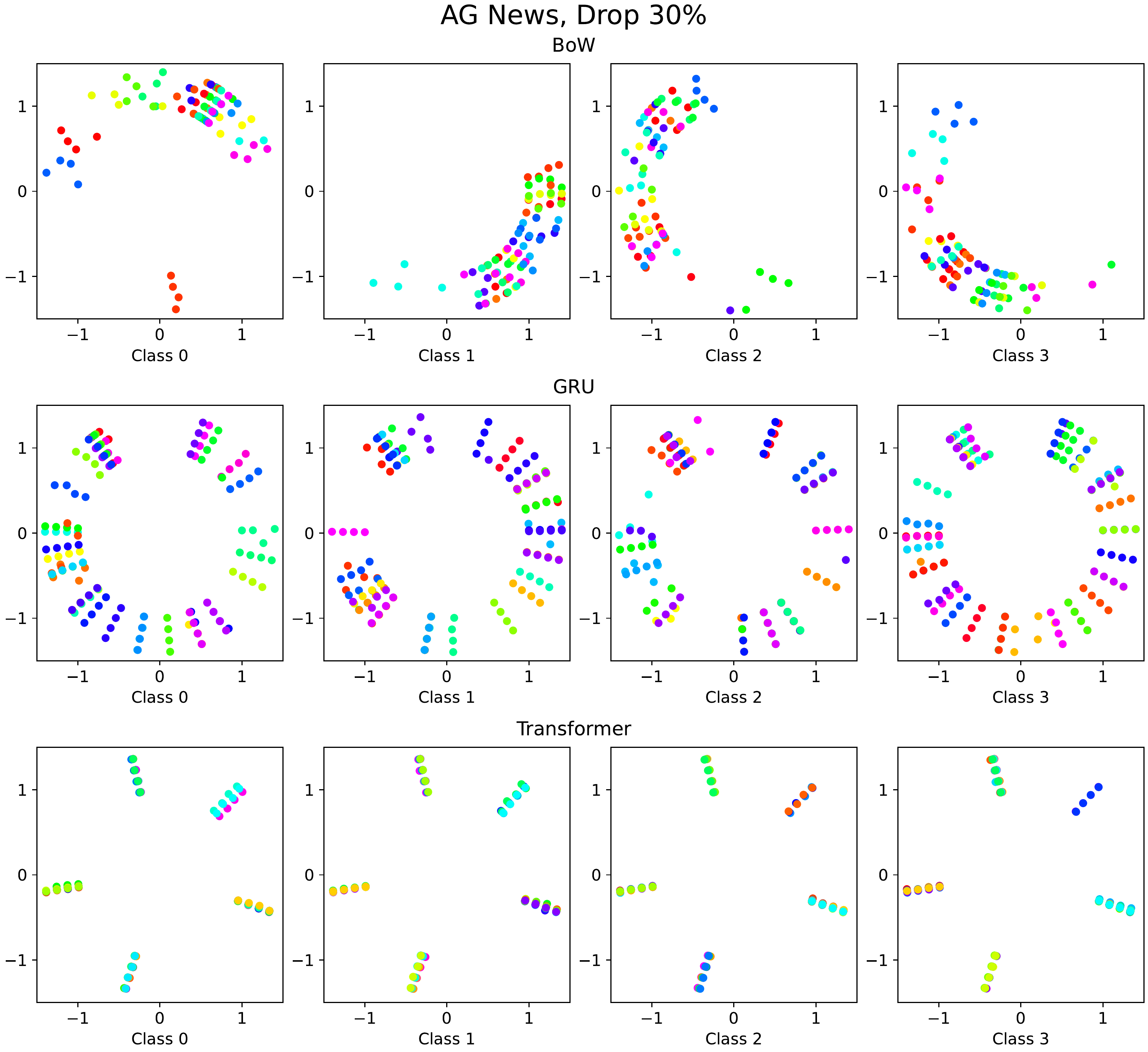}
        \vspace{-0.1in}
    \caption{
    We plot representations of augmentations from different classes, for {\bow}, {\gru} and {\transformers} respectively. The 30\% drop augmentation is used for these plots. While all representations are supposed to be normalized to unit norm, for clear visualization, we plot the 4 augmentations per image with the same color, {\bf with each of them plotted at different radii} (1.0, 1.133, 1.267, 1.4). We observe that {\gru} and {\transformers} are quite augmentation invariant, but are not linearly separable. See \Cref{sec:visualize_reps} for more discussion about this.
    }
    \label{fig:agnews_visualize_reps_drop}
\end{figure}
}{
}

\ifthenelse{\boolean{arxiv}}{
\begin{figure}[h!]
    \centering
        \includegraphics[width=.75\linewidth]{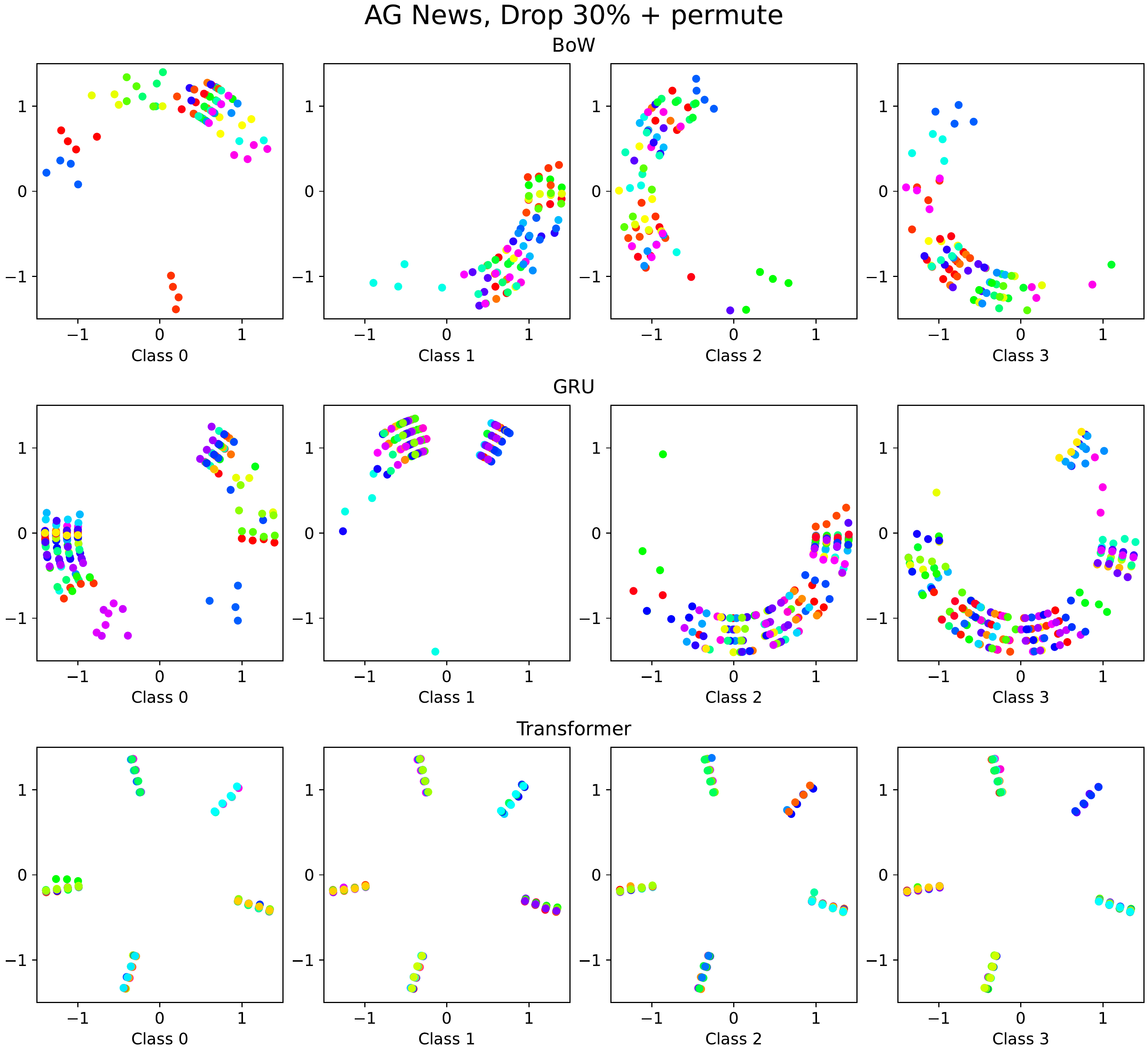}
        \vspace{-0.1in}
    \caption{
    We plot representations of augmentations from different classes, for {\bow}, {\gru} and {\transformers} respectively. The 30\% drop + permute augmentation is used for these plots. While all representations are supposed to be normalized to unit norm, for clear visualization, we plot the 4 augmentations per image with the same color, {\bf with each of them plotted at different radii} (1.0, 1.133, 1.267, 1.4). We observe that {\bow} representations are roughly linearly classifiable, {\gru} representations are somewhat classifiable while {\transformers} are quite augmentation invariant, but not linearly separable.
    }
    \label{fig:agnews_visualize_reps_drop_permute}
\end{figure}
}{
}

\end{document}